\newcounter{mnote}
  \let\oldmarginpar\marginpar
 \renewcommand\marginpar[1]{\-\oldmarginpar[\raggedleft\footnotesize #1]%
    {\raggedright\footnotesize #1}}
\newtheorem{theorem}{Theorem}
\newtheorem{lemma}{Lemma}
\newtheorem{proposition}{Proposition}
\newtheorem{definition}{Definition}
\newtheorem{assumption}{Assumption}
\newtheorem{remark}{Remark}
\newenvironment{proof}{\begin{trivlist}\item[]{\emph{Proof.}}}
               {\hfill$\Box$\end{trivlist}}
\DeclareMathOperator*{\argmin}{arg\,min}
\date{\today}
\title{Transformers for Learning on Noisy and Task-Level Manifolds: Approximation and Generalization Insights} 
\author{
Zhaiming Shen\thanks{\{zshen49, ahavrilla3, wliao60\}@gatech.edu.
School of Mathematics,
Georgia Institute of Technology, Atlanta, GA 30332}
\and
Alex Havrilla\footnotemark[1] 
\and 
Rongjie Lai\thanks{lairj@purdue.edu. Department of Mathematics, Purdue University, West Lafayette, IN 47907} 
\and
Alexander Cloninger\thanks{acloninger@ucsd.edu. Department of Mathematics and Halicio{\u g}lu Data Science Institute, University of California, San Diego, La Jolla, CA 92093}
\vspace{1mm}
\and
Wenjing Liao\footnotemark[1]
}
\begin{document}

\maketitle

\begin{abstract}
    Transformers serve as the foundational architecture for large language  and video generation models, such as GPT, BERT, SORA and their successors. Empirical studies have demonstrated that real-world data and learning tasks exhibit low-dimensional structures, along with some noise or measurement error. The performance of transformers tends to depend on the intrinsic dimension of the data/tasks, though theoretical understandings remain largely unexplored for transformers.   This work establishes a theoretical foundation by analyzing the performance of transformers for  regression tasks involving noisy input data near a manifold. Specifically, the input data are in a tubular neighborhood of a manifold, while the ground truth function depends on the projection of the noisy data onto this manifold, referred to as the task-level manifold. We prove approximation and generalization errors which crucially depend on the intrinsic dimension of the task-level manifold. Our results demonstrate that transformers can leverage  low-complexity structures in  learning task even when the input data are perturbed by high-dimensional noise. Our novel proof technique constructs representations of basic arithmetic operations by transformers, which may hold independent interest. 
\end{abstract}

\section{Introduction}

Transformer architecture, introduced in \citet{Vaswani17}, has reshaped the landscape of machine learning, enabling unprecedented advancements in natural language processing (NLP), computer vision, and beyond.
In transformers,  traditional recurrent and convolutional architectures are replaced by an attention mechanism. Transformers have achieved remarkable success in large language models (LLMs) and video generation, such as GPT \citep{achiam2023gpt}, BERT \citep{devlin2018bert}, SORA \citep{videoworldsimulators2024} and their successors.

Despite the success of transformers, their approximation and generalization capabilities remain less explored compared to other network architectures, such as feedforward and convolutional neural networks. 
Some theoretical investigations of transformers can be found in \citet{jelassi2022vision,Yun19,edelman2022inductive,StatisticallyMeaningfulApproximation,Takakura23,TransformerClassifier,bai2023transformers,tai2025mathematical}. 
Specifically, \citet{Yun19} proved that transformer models can universally approximate continuous sequence-to-sequence functions on a compact support, while
while the network size grows exponentially with respect to the sequence dimension.   \citet{edelman2022inductive} evaluated the capacity of Transformer networks and derived the sample complexity to learn sparse Boolean functions. 
\citet{Takakura23} studied the approximation and estimation ability of Transformers as sequence-to-sequence functions with anisotropic smoothness on infinite dimensional input. 
\citet{TransformerClassifier} studied binary classification  with transformers when the posterior probability function exhibits  a hierarchical composition model with H\"older smoothness. \citet{jelassi2022vision} analyzed a simplified version of vision transformers and showed that they can learn the spatial structure and generalize.  \citet{lai2024attention}
established a connection between transformers and smooth cubic splines. 
From a computational perspective, the transformer network architecture was connected to integral differential equations in \citet{tai2025mathematical}.
The in-context learning capability of transformers was investigated in \citet{bai2023transformers,zhang2024trained,garg2022can,SHLL25} and many others.

Compared to transformers, feedforward and convolutional neural networks are significantly better understood in terms of approximation \citep{Cybenko89,Hornik89,Leshno93,Mhaskar93,B17,M99,Pinkus99,P98,Y17,lu2021deep,oono2019approximation,LaiShen21,LaiShen24,zhou2020universality} and generalization \citep{kohler2011analysis,schmidt2020nonparametric,oono2019approximation} theories. Theoretical results in \citet{Y17,lu2021deep,oono2019approximation,schmidt2020nonparametric} addressed function approximation and estimation in a Euclidean space. \citet{schonsheck2019chart,schonsheck2022semi,liu2024deep} studied the approximation and generalization errors of deep ReLU networks for representing low-dimensional manifolds. For functions supported on a low-dimensional manifold, approximation and generalization theories were established for feedforward neural networks in \citet{Chui18,shaham2018provable,chen2019efficient,SH19,nakada2020adaptive,Chen22}  and for convolutional residual neural networks in \citet{liu2021besov}.
To relax the exact manifold assumption and allow for noise on input data, \citet{Cloninger21} studied approximation properties of feedforward neural networks under inexact manifold assumption, i.e., data are in a tubular neighborhood of a manifold and the groundtruth function depends on the projection of the noisy data onto the manifold. 
This relaxation accommodates input data with noise and accounts for the low complexity of the learning task beyond the low intrinsic dimension of the input data, making the theory applicable to a wider range of practical scenarios for feedforward neural networks.

In the application of transformers,
empirical studies have demonstrated that image, video, text data and learning tasks tend to exhibit low-dimensional structures \citep{Pope2021TheID,KaplanIDScalingLaws,Havrilla24}, along with some noise or measurement error in real-world data sets. The performance of transformers tends to depend on the intrinsic dimension of the data/tasks \citep{KaplanIDScalingLaws,Havrilla24,Razzhigaev2023TheSO,min2023an,Aghajanyan2020IntrinsicDE}.
Specifically, \citet{Aghajanyan2020IntrinsicDE} empirically showed that common pre-trained models in NLP have a very low intrinsic dimension. 
\citet{Razzhigaev2023TheSO,Havrilla24} investigated the intrinsic dimension of token embeddings in transformer architectures, and  obtained a significantly lower intrinsic dimension than the token dimension. 

Despite of the empirical findings connecting to performance of transformers with the low intrinsic dimension of data/tasks,  theoretical understandings about how transformers adapt to low-dimensional data/task structures and build robust predictions against noise are largely open. \citet{Havrilla24} analyzed the approximation and generalization capability of transformers for regression tasks when the input data exactly lie on a low-dimensional manifold. However, the setup in \citet{Havrilla24} does not account for noisy data concentrated near a low-dimensional manifold and low-complexity in the regression function.

In this paper, we bridge this theoretical gap by analyzing the approximation and generalization error of transformers for regression of functions on a tubular neighborhood of a manifold. To leverage the low-dimensional structures in the learning task, the function depends on the projection of the input onto the manifold.
Specifically, let $\mathcal{M} \subseteq [0,1]^D$ be a compact, connected $d$-dimensional Riemannian  manifold isometrically embedded in $\mathbb{R}^D$ with a positive reach $\tau_{\mathcal{M}}$, and $\mathcal{M}(q)$ be a tubular region around the manifold $\mathcal{M}$ with local tube radius given by $q \in [0,1)$ times the local reach (see Definitions \ref{defmanifold} and \ref{deftubularm}).
We consider function $f:\mathcal{M}(q) \to\mathbb{R}$ in the form:  \begin{equation}f(x)=g(\pi_{\mathcal{M}}(x)), \ \ \forall x \in  \mathcal{M}(q)
\label{eq:f}
\end{equation}
where 
\begin{equation} 
\pi_{\mathcal{M}}(x)=\argmin_{z\in \mathcal{M}}\|x-z\|_2, 
\label{proj}
\end{equation}
is the orthogonal projection onto the manifold $\mathcal{M}$, and $g:\mathcal{M}\to\mathbb{R}$ is an unknown $\alpha$-H{\"o}lder function  on the  manifold $\mathcal{M}$. An illustration of the tubular region and the orthogonal projection onto the manifold is shown in Figure~\ref{Mqfig}. 

\begin{figure}[t]
    \centering
\includegraphics[width=0.5\linewidth]{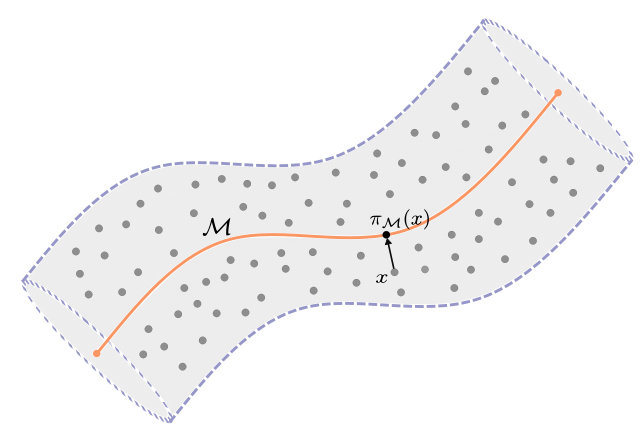}
    \vspace{-1mm}
    \caption{\small The tubular region around manifold $\mathcal{M}$ and the orthogonal projection $\pi_{\mathcal{M}}$.}
    \label{Mqfig}
\end{figure}

The regression model in \eqref{eq:f} covers a variety of interesting scenarios: 1) {\bf Noisy Input Data:} The input $x$ is a perturbation of its clean counterpart $\pi_{\mathcal{M}}(x)$ on the manifold $ \mathcal{M}$. One can access the input and output pairs, i.e. $(x,f(x))$ but the clean counterpart $\pi_{\mathcal{M}}(x)$ is not available in this learning task. 2) {\bf Low Intrinsic Dimension in the Machine Learning Task:} The input data live in a high-dimensional space $\mathbb{R}^D$, but the regression or inference task has a low complexity. In other words, the output $f(x)$ locally depends on $d$ tangential directions on the task manifold $\mathcal{M}$, and the function is locally invariant along   the $D-d$ normal directions on the manifold.
The model in \eqref{eq:f} is also general enough to include many interesting special cases. For example, when $\mathcal{M}$ is a linear subspace, the model in \eqref{eq:f} becomes the well-known multi-index model \citep{cook2002dimension}.  When $q=0$, one recovers the exact manifold regression model where functions are supported exactly on a low-dimensional manifold.   

The regression model in \eqref{eq:f} has been considered in several existing works. 
In \cite{Cloninger21}, the regression function in \eqref{eq:f} is approximated and estimated by a feedforward neural network. 
Despite of deep learning approaches, the regression model in \eqref{eq:f} is studied in \cite{wu2024conditional} using a nonparametric approach with performance guarantees  when $\mathcal{M}$ is a one-dimensional curve. When the task manifold $\mathcal{M}$ is reduced to a linear subspace, the single-index or multi-index models have been extensively studied in literature \citep{li1991sliced,hristache2001structure,li2005contour,constantine2014active,lanteri2022conditional}.

In this paper, we establish   mathematical approximation and statistical estimation (or generalization) theories  for functions in \eqref{eq:f} via transformer neural networks.
\vspace{3mm}
\\
{\bf Approximation Theory:} Under proper assumptions of $\mathcal{M}$, for any $\epsilon >0$, there exists a transformer neural network to universally approximate function $f$ in \eqref{eq:f} up to $\epsilon$ accuracy (Theorem \ref{thmApprox}). The width of this transformer network is in the order of $D\epsilon^{-\frac{d}{\alpha}}$ and the depth is in the order of $d+\ln(\ln(\epsilon^{-1}))$. Note that $d$ is the intrinsic dimension of the manifold $\mathcal{M}$ and $\alpha$ represents the H{\"o}lder smoothness of $g$. In this result, the network complexity crucially depends on the intrinsic dimension.
\vspace{3mm}
\\
\noindent
{\bf Generalization Theory:} When $n$ i.i.d. training samples $\{(x_i,f(x_i))\}_{i=1}^n$ are given, we consider the empirical risk minimizer $\hat{\rm T}$ to be defined in \eqref{hatT}. Theorem \ref{thmGeneralization} shows that the squared generalization error of $\hat{\rm T}$ is upper bounded in the order of $n^{-\frac{2\alpha}{2\alpha+d}}$. In the exact manifold case when $q=0$, Theorem \ref{thmGeneralization} gives rise to the min-max regression error \citep{gyorfi2006distribution}. In the noisy case when $q\in (0,1)$, Theorem \ref{thmGeneralization} demonstrates a denoising phenomenon given by transformers such that when the sample size $n$ increases, the generalization error converges to $0$ at a fast rate depending on the intrinsic dimension $d$.
\vspace{3mm}
\\
{\bf Basic Arithmetic Operations Implemented by Transformers:} In addition, our proof explicitly constructs transformers to implement basic arithmetic operations, such as addition, constant multiplication, product, division, etc. Such implementation can be done efficiently (e.g., in parallel) on different tokens.
 These results can be applied individually as building blocks for approximation studies using Transformers.

\vspace{3mm}
\noindent
This paper is organized as follows. In Section~\ref{secPrelim}, we introduce some preliminary definitions. In Section~\ref{secAGTheory}, we present our main results, including the approximation and generalization error bound achieved by transformer networks.  In Section~\ref{secPfSketch}, we provide a proof sketch of our main results. {In Section \ref{sec:Exp}, we conduct two experiments: One experiment demonstrates the advantage of transformers over feed-forward neural networks and to support our main results, and the other experiment explores the denoising effect of a pre-trained vision transformer.} Finally, Section \ref{sec:conclusion} concludes the paper with discussions.

\section{Preliminaries} \label{secPrelim}

\subsection{Manifold}
\begin{definition}[Manifold]
\label{defmanifold}
An $d$-dimensional \emph{manifold} $\mathcal{M}$ is a topological space where each point has a neighborhood that is homeomorphic to an open subset of $\mathbb{R}^d$. Further, distinct points in $\mathcal{M}$ can be separated by disjoint neighborhoods, and $\mathcal{M}$ has a countable basis for its topology.
\end{definition}

\begin{definition} [Medial Axis]
    Let $\mathcal{M}\subseteq\mathbb{R}^D$ be a connected and compact $d$-dimensional submanifold. Its \emph{medial axis} is defined as 
\begin{align*}
\begin{split}
    \text{Med}
    (\mathcal{M}):=\{x\in\mathbb{R}^D \text{ }| &\text{ }\exists p\neq q\in\mathcal{M},
    \|p-x\|_2 
    =\|q-x\|_2=\inf_{z\in\mathcal{M}}\|z-x\|_2\},
\end{split}
\end{align*}
which contains all points $x\in\mathbb{R}^D$ with set-valued orthogonal projection $\pi_{\mathcal{M}}(x)=\argmin_{z\in \mathcal{M}}\|x-z\|_2$.
\end{definition}

\begin{definition} [Local Reach and Reach of a Manifold]
    The \emph{local reach} for $v\in\mathcal{M}$ is defined as
$  \tau_{\mathcal{M}}(v):=\inf_{z\in \text{Med}(\mathcal{M})}\|v-z\|_2,
$
which describes the minimum distance needed to travel from $v$ to the closure of medial axis. The smallest local reach $\tau_{\mathcal{M}}:=\inf_{v\in\mathcal{M}}\tau_{\mathcal{M}}(v)$ is called \emph{reach} of $\mathcal{M}$.
\end{definition}

\begin{definition} [Tubular Region around a Manifold]
\label{deftubularm}
    Let $q\in [0,1)$. The \emph{tubular region} around the manifold $\mathcal{M}$ with local tube radius $q\tau_{\mathcal{M}}(v)$ is defined as
    \begin{align}
\mathcal{M}(q) := \{x\in\mathbb{R}^D \text{ } | \text{ } x=v+u, v\in\mathcal{M}, u\in ker(P(v)^{\top}),\|u\|_2 < q\tau_{\mathcal{M}}(v) \}, \label{tube}
    \end{align}
where the columns of $P(v)\in\mathbb{R}^{D\times d}$ represent an orthonormal basis of the tangent space of $\mathcal{M}$ at $v$. 
\end{definition}

\begin{definition}[Geodesic Distance]
    The \emph{geodesic distance} between $v,v'\in\mathcal{M}$ is defined as 
    \begin{align*}
        d_{\mathcal{M}}(v,v'):=\inf\{|\gamma|:  \gamma\in C^1([t,t']), \gamma:[t,t']\to\mathcal{M}, \gamma(t)=v, \gamma(t')=v'\},
    \end{align*}
where the length is defined by $|\gamma|:=\int_t^{t'}\|\gamma'(s)\|_2 ds$.
The existence of a length-minimizing geodesic $\gamma:[t,t']\to\mathcal{M}$ between any two points $v=\gamma(t),v'=\gamma(t')$ is guaranteed by Hopf–Rinow theorem \citep{hopf1931}.
\end{definition}

\begin{definition} [$\delta$-Separated and Maximal Separated Set]
    Let $S$ be a set associated with a metric $d$, we say $Z\subseteq S$ is \emph{$\delta$-separated} if for any $z,z'\in Z$, we have $d(z,z')>\delta$. We say $Z\subseteq S$ is \emph{maximal separated $\delta$-net} if adding another point in $Z$ destroys the $\delta$-separated property.
\end{definition}

\begin{definition}[Covering Number] \label{coveringDef}
    Let \( (\mathcal{H}, \rho) \) be a metric space, where \( \mathcal{H} \) is the set of objects and \( \rho \) is a metric. For a given \( \epsilon > 0 \), the \emph{covering number} \( \mathcal{N}(\epsilon, \mathcal{H}, \rho) \) is the smallest number of balls of radius \( \epsilon \) (with respect to \( \rho \)) needed to cover \( \mathcal{H} \). More precisely,
    \begin{align*}
        &\mathcal{N}(\epsilon, \mathcal{H}, \rho) := \min \{ N \in \mathbb{N}\text{ }|\text{ } \exists \{h_1, h_2, \dots, h_N\} \subseteq \mathcal{H}, \\
        &\quad\quad\quad\quad\quad\quad\quad\quad \ \forall h \in \mathcal{H} ,\ \exists h_i \text{ such that } \rho(h, h_i) \leq \epsilon \}.
    \end{align*}

\end{definition}

Let $d_{\mathcal{M}}$ be a geodesic metric defined on $\mathcal{M}$, we can extend $d_{\mathcal{M}}$ to the tubular region $\mathcal{M}(q)$ such that
\[d_{\mathcal{M}(q)}(u,v):=d_{\mathcal{M}}(\pi_{\mathcal{M}}(u),\pi_\mathcal{M}(v)),\] 
provided that $u,v\in\mathcal{M}(q)$ has the unique orthogonal projection onto $\mathcal{M}$. According to  \citet[Lemma 2.1]{Cloninger21}, for any $x\in\mathcal{M}(q)$ with $q\in [0,1)$, $x$ has a unique projection onto $\mathcal{M}$ such that $\pi_{\mathcal{M}}(x)=v$.


\subsection{Transformer Network Class}


\begin{definition} [Feed-forward Network Class]
The feed-forward neural network (FFN) class with weights $\theta$ is
\begin{align*}
    \mathcal{FFN}( L_{\rm FFN},& w_{\rm FFN}) =\{{\rm FFN}(\theta;\cdot)\text{ }|\text{ }{\rm FFN}(\theta;\cdot) \text{ }\text{is a FNN} \text{ with at most }L_{\rm FFN} \text{ }\text{layers and width} \text{ } w_{\rm FFN}
    \}.
\end{align*}
\end{definition}
We use ReLU function $\sigma(x)=\max(x,0)$ as the activation function in the feed-forward network. Note that each feed-forward layer is applied tokenwise to an embedding matrix $H$. 

\begin{definition}[Attention and Multi-head Attention]
The attention with the query, key, value matrices $Q,K,V\in\mathbb{R}^{d_{embed}\times d_{embed}}$ is 
\begin{equation}
   \textstyle A_{K,Q,V}(H)=VH\sigma((KH)^\top QH).
\end{equation}
It is worthwhile to note that the following formulation (when the activation function $sigma$ can be applied pointwise) is convenient when analyzing the interaction between a pair of tokens, which is more relevant to us. 
\begin{equation}
\textstyle    A(h_i)=\sum_{i=1}^{\ell}\sigma(\langle Qh_i, Kh_j\rangle)Vh_j
\end{equation}
The multi-head attention (MHA) with $m$ heads is 
\begin{equation}
  \textstyle  {\rm MHA}(H)=\sum_{j=1}^m V_jH\sigma((K_jH)^\top Q_jH).
\end{equation}
\end{definition}

In this paper, we consider ReLU as the activation function rather than Softmax in the attention.

\begin{definition} [Transformer Block]
The transformer block is a residual composition of the
form
\begin{equation}
       {\rm B}(H) = {\rm FFN}({\rm MHA}(H) + H) + {\rm MHA}(H) + H.
\end{equation}

\end{definition}

\begin{definition}[Transformer Block Class]
The transformer block class with weights $\theta$ is 
\begin{align*}
    \mathcal{B}(m, L_{\rm FFN}, w_{\rm FFN}) = \{{\rm B}(\theta;\cdot)\text{ }|&\text{ } {\rm B}(\theta;\cdot) \text{ }\text{a {\rm MHA} with }  m  \text{ attention heads, and a {\rm FNN} layer } 
    \\
    & \text{with depth } L_{\rm FFN} \text{ }\text{and width } w_{\rm FFN}\}.
\end{align*}
    
\end{definition}

\begin{definition} [Transformer Network]
    A transformer network $T(\theta;\cdot)$ with weights $\theta$ is a composition of an embedding layer, a positional encoding matrix, a sequence transformer blocks, and a decoding layer, i.e.,
    \begin{equation} \label{Trep}
        {\rm T}(\theta;x):={\rm DE}\circ {\rm B}_{L_T}\circ\cdots\circ {\rm B}_1\circ ({\rm PE}+{\rm E}(x)),
    \end{equation}
    where  $x\in\mathbb{R}^D$ is the input, ${\rm E}:\mathbb{R}^D\to\mathbb{R}^{d_{embed}\times\ell}$ is the linear embedding, ${\rm PE}\in\mathbb{R}^{d_{embed}\times\ell}$ is the positional encoding. ${\rm B}_1,\cdots,{\rm B}_{L_T}:\mathbb{R}^{d_{embed}\times\ell}\to\mathbb{R}^{d_{embed}\times\ell}$ are the transformer blocks where each block consists of the residual composition of multi-head attention layers  and feed-forward layers. ${\rm DE}:\mathbb{R}^{d_{embed}\times\ell}\to\mathbb{R}$ is the decoding layer which outputs the first element in the last column.
\end{definition}

In our analysis, we utilize the well-known sinusoidal positional encoding $\mathcal{I}_j\in\mathbb{R}^2$, which can be interpreted as rotations of a unit vector $e_1$ within the first quadrant of the unit circle. More precisely, for an embedding matrix $H = {\rm PE}+{\rm E}(x)$ given in (\ref{Hmatrix}), the first two rows are the data terms, which are used to approximate target function. The third and fourth rows are interaction terms with $\mathcal{I}_j=(\cos(\frac{j\pi}{2\ell}),\sin(\frac{j\pi}{2\ell}))^\top$, determining when each token embedding will interact with another in the attention mechanism, where $\ell$ is the number of hidden tokens. The last (fifth) row are constant terms. 
\begin{definition} 
[Transformer Network Class] \label{TNC}

The transformer network class with weights $\theta$ is
    \begin{align*}
        &\mathcal{T}(L_T, m_T,d_{embed},\ell,L_{\text{FFN}},w_{\text{FFN}},R,\kappa) \\
        &=\Big\{{\rm T}(\theta;\cdot) \text{ }| \text{ } {\rm T}(\theta;\cdot)  \text{ has the form }(\ref{Trep}) \text{ with } L_T \text{ transformer blocks, } 
        \text{at most } 
        m_T\text{ attention heads in} \\ &\quad\quad\quad\quad\quad\quad \text{each block, embedded dimension } d_{embed}, \text{number of hidden tokens } \ell, \text{and } L_{\text{FFN}} 
        \text{ layers} \\
        &\quad\quad\quad\quad\quad\quad \text{of feed-forward networks with}
        \text{ hidden width } w_{\text{FFN}},
         \text{ with output }    \|{\rm T}(\theta;\cdot)\|_{L^{\infty}(\mathbb{R}^D)}\leq R
         \\
    &\quad\quad\quad\quad\quad\quad \text{and weight magnitude } \|\theta\|_{\infty}\leq \kappa\Big\}.
    \end{align*}
\end{definition}


Here $\|\theta\|_{\infty}$ represent the maximum magnitude of the network parameters.
When there is no ambiguity in the context, we will shorten the notation $\mathcal{T}(L_T, m_T,d_{embed},\ell,L_{\rm FFN},w_{\rm FFN},R,\kappa)$ as $\mathcal{T}$. Throughout the paper, we use $x=(x^1,\cdots,x^D)$ as the input variable , with each $x^i$ being the $i$-th component of $x$. We summarize the notations in Table~\ref{tabnotation} in the Appendix~\ref{secAppendixNotation}.

\section{Transformer Approximation and Generalization Theory} \label{secAGTheory}


We next present our main results about  approximation and generalization theories for estimating functions in \eqref{eq:f}.

\subsection{Assumptions}

\begin{assumption} [Manifold] \label{assumpm}
    Let $\mathcal{M}\subseteq [0,1]^D$ be a non-empty, compact, connected $d$-dimensional Riemannian  manifold isometrically embedded in $\mathbb{R}^D$ with a positive reach $\tau_{\mathcal{M}} >0$.
    The tubular region $\mathcal{M}(q)$ defined in (\ref{tube}) satisfies $q\in [0,1)$ and $\mathcal{M}(q)\subseteq [0,1]^D$.
\end{assumption}


\begin{assumption}[Target function]
\label{assumpf}
 The target function $f:\mathcal{M}(q)\to\mathbb{R}$ can be written in \eqref{eq:f} such that  
$f:=g\circ\pi_{\mathcal{M}}$ and $g:\mathcal{M}\to\mathbb{R}$ is  $\alpha$-H\"{o}lder continuous with H\"{o}lder exponent $\alpha\in (0,1]$  and H\"{o}lder constant $L>0$:
\[|g(z)-g(z')|\leq Ld^{\alpha}_{\mathcal{M}}(z,z') \text{ }\text{ }\text{for all} \text{ }\text{ } z,z'\in\mathcal{M}.\]
In addition, we assume $\|f\|_{L^{\infty}(\mathcal{M}(q))}\le R$ for some $R>0$.
\end{assumption}


\subsection{Transformer Approximation Theory}


Our first contribution is a universal approximation theory for functions satisfying Assumption \ref{assumpf} by a transformer network.

\begin{theorem} \label{thmApprox}
Suppose  Assumption~\ref{assumpm} holds. For any $\epsilon\in (0,\min\{1,(\tau_{\mathcal{M}}/2)^{\alpha}\})$, there exists a transformer network ${\rm T}(\theta;\cdot)\in\mathcal{T}(L_{\rm T}, m_{\rm T},d_{embed},\ell,L_{\rm FFN},w_{\rm FFN},R,\kappa)$ with parameters 
\begin{align*}
&L_{\rm T}=O\left(d+\ln\left(\ln(\epsilon^{-1})\right)\right), \text{ } m_{\rm T}=O\left(D\epsilon^{-\frac{d}{\alpha}}(1-q)^{-2d}\right), \text{ } d_{embed}=5, \\
&\ell=O\left(D\epsilon^{-\frac{d}{\alpha}}(1-q)^{-2d}\right), \text{ } L_{\rm FFN}=6, \text{ } w_{\rm FFN}=5, \text{ } \kappa=O\left(D^2\epsilon^{-\frac{2d+8}{\alpha}}(1-q)^{-2d-8}\right) 
\end{align*}
such that, for any $f$ satisfying Assumption~\ref{assumpf}, if the network parameters $\theta$ are properly chosen, the network yields a function  ${\rm T}(\theta;\cdot)$ with 
\begin{equation}
    \|{\rm T}(\theta;\cdot)-f\|_{\mathcal{L}^{\infty}(\mathcal{M}(q))}\leq\epsilon.
\end{equation}
The notation $O(\cdot)$ hides the dependency on $d,q,\tau_{\mathcal{M}},L,R,{\rm Vol}(\mathcal{M})$. Importantly, the $O(\cdot)$ dependency for $L_{\rm T}$ is only on some absolute constants.
\end{theorem}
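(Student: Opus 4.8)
The plan is to reduce the problem to three sub-tasks and chain them together through transformer blocks, following the standard "local-linear interpolation on a manifold" scheme adapted to the noisy tubular setting. First I would build a \emph{partition-of-unity discretization} of the manifold: pick a maximal $\delta$-separated net $\{v_i\}_{i=1}^N$ on $\mathcal{M}$ with $\delta\asymp\epsilon^{1/\alpha}$, so that $N=O(\epsilon^{-d/\alpha}(1-q)^{-d})$ by a standard covering-number bound for a $d$-manifold with bounded reach (the $(1-q)^{-d}$ factor accounts for the inflation of the net when we also need to cover $\mathcal{M}(q)$ via $\pi_{\mathcal M}$). Using Lemma~2.1 of \citet{Cloninger21}, every $x\in\mathcal{M}(q)$ has a unique projection $\pi_{\mathcal M}(x)$, and $d_{\mathcal M(q)}(x,v_i)=d_{\mathcal M}(\pi_{\mathcal M}(x),v_i)$; one then defines unnormalized bump functions $\tilde\eta_i(x)$ supported near $v_i$ (e.g. a ReLU-expressible trapezoid in the Euclidean distance $\|x-v_i\|_2$, which is comparable to the geodesic distance at small scales), and the approximant $\hat f(x)=\sum_i \eta_i(x)g(v_i)$ with $\eta_i=\tilde\eta_i/\|\tilde\eta\|_1$. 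H\"older continuity of $g$ plus the diameter bound on each patch gives $\|\hat f-f\|_{L^\infty(\mathcal M(q))}\le C L\delta^\alpha\le \epsilon$, which fixes the choice of $\delta$ and hence $N$, $\ell$, and $m_{\rm T}$.

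The second, and I expect the hardest, sub-task is to realize $\hat f$ by a transformer of the claimed size, which is where the paper's "basic arithmetic operations implemented by transformers" toolkit enters. The idea is to assign token $i$ (for $i=1,\dots,N$) the job of computing $\tilde\eta_i(x)$ and the product $\tilde\eta_i(x)g(v_i)$, done in parallel by a single multi-head attention layer with $N$ heads plus a shallow FFN; the sinusoidal interaction coordinates $\mathcal I_j=(\cos\tfrac{j\pi}{2\ell},\sin\tfrac{j\pi}{2\ell})^\top$ are used exactly as the paper indicates to route the needed token interactions (each head activates on the inner product $\langle Q h_i, K h_j\rangle$ only for the intended pair). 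Then one needs: (a) an approximation of the reciprocal $t\mapsto 1/t$ on a bounded-away-from-zero interval (the normalization $1/\|\tilde\eta\|_1$), which costs $O(\ln\ln\epsilon^{-1})$ blocks by Newton-type iteration or the standard $\log$-depth construction, and (b) products of two bounded reals, again $O(\ln\ln\epsilon^{-1})$ depth or constant depth with the squaring trick $ab=\tfrac14((a+b)^2-(a-b)^2)$ combined with a ReLU approximation of squaring. Summing the $N$ token contributions into the final token is a single attention pass. The depth bookkeeping then gives $L_{\rm T}=O(d+\ln\ln\epsilon^{-1})$: the additive $d$ comes from composing the local-coordinate / "which patch are we in" logic across the $d$ intrinsic directions (mirroring the feedforward manifold-approximation proofs of \citet{Chen22,Cloninger21}), while the $\ln\ln\epsilon^{-1}$ is solely from the reciprocal and product sub-networks.

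Third, I would track the parameter magnitude $\kappa$. The large exponent $\epsilon^{-(2d+8)/\alpha}$ arises because several stages multiply: the bump functions have Lipschitz constant $\asymp 1/\delta=\epsilon^{-1/\alpha}$, the reciprocal is evaluated on an interval whose lower endpoint $\|\tilde\eta\|_1\gtrsim \delta^{?}$ scales like a power of $\epsilon^{1/\alpha}$ so $1/\|\tilde\eta\|_1$ and the weights implementing its approximation blow up polynomially in $\epsilon^{-1/\alpha}$, and the value-routing weights carry another factor of $N$; multiplying these, together with the $D$ from the ambient embedding ${\rm E}:\mathbb R^D\to\mathbb R^{d_{embed}\times\ell}$ and the $(1-q)^{-1}$ factors from the tube geometry, yields $\kappa=O(D^2\epsilon^{-(2d+8)/\alpha}(1-q)^{-2d-8})$. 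The constants $d_{embed}=5$, $L_{\rm FFN}=6$, $w_{\rm FFN}=5$ are read off directly from the explicit arithmetic gadgets (two data rows, two interaction rows, one constant row, as described after Definition~\ref{TNC}).

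The main obstacle, as noted, is the second step: making the attention mechanism compute the right sparse set of pairwise interactions so that each $\tilde\eta_i$ is assembled locally and in parallel, while keeping $d_{embed}$ a fixed constant independent of $\epsilon$ — this forces all the arithmetic (distance computation, trapezoid bumps, products, reciprocal, final summation) to be funneled through a five-dimensional embedding, reused across sequential blocks. Verifying that the sinusoidal encoding indeed isolates the intended token pairs, and that the errors from the ReLU approximations of $x\mapsto x^2$ and $x\mapsto 1/x$ compose without amplification beyond the stated $\epsilon$, is the delicate core of the argument; everything else is the routine covering-number and H\"older-continuity estimate from the first step and the weight-size bookkeeping from the third.
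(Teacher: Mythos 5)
Your overall architecture matches the paper's proof (maximal $\delta$-net, normalized partition of unity, piecewise-constant oracle $\hat f=\sum_i\eta_i\,g(v_i)$ with $\delta\asymp\epsilon^{1/\alpha}$, parallel per-token computation of the bumps via the Interaction Lemma routing, a truncated-series/Newton reciprocal for $1/\|\tilde\eta\|_1$ costing $O(\ln\ln\epsilon^{-1})$ depth, and a final weighted sum). However, there is a genuine gap in your first step: you take $\tilde\eta_i$ to be an isotropic ReLU ``trapezoid'' in the Euclidean distance $\|x-v_i\|_2$ with support of radius $O(\delta)$. Such bumps cannot cover the tube $\mathcal{M}(q)$: the tube has width $q\tau_{\mathcal{M}}(v)$, a fixed scale independent of $\epsilon$, so any $x\in\mathcal{M}(q)$ whose normal offset exceeds $O(\delta)=O(\epsilon^{1/\alpha})$ satisfies $\tilde\eta_i(x)=0$ for every $i$, the normalization $\eta_i=\tilde\eta_i/\|\tilde\eta\|_1$ is undefined there, and the $L^\infty(\mathcal{M}(q))$ bound fails. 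The paper avoids this by using the anisotropic ellipsoidal bumps of \citet{Cloninger21}, $\tilde\eta_i(x)=\sigma\bigl(1-(\|x-z_i\|_2/(p\tau_{\mathcal{M}}(z_i)))^2-(\|P(z_i)^{\top}(x-z_i)\|_2/(h\delta))^2\bigr)$, which are wide (scale $\tau_{\mathcal{M}}$) so as to reach across the tube but narrow (scale $\delta$) in the tangential directions; this simultaneously gives coverage of $\mathcal{M}(q)$, the localization property $d_{\mathcal{M}(q)}(x,z_i)\le 72\delta/(1-q)^2$ on the support (whence the $(1-q)^{-2\alpha}$ constant in the H\"older error and the $(1-q)^{-2d}$ in $m_{\rm T},\ell$ after solving for $\delta$), and the uniform two-sided bound $1-q\lesssim\|\tilde\eta(x)\|_1\lesssim d^{d/2}(1-q)^{-2d}$ that your reciprocal step needs. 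Your own question mark on the lower endpoint of $\|\tilde\eta\|_1$ is exactly this missing ingredient: with isotropic bumps that lower bound is $0$, and with the ellipsoidal bumps it is a constant in $q$, not a power of $\delta$.

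Two related inconsistencies follow from the same gap. First, you attribute the additive $O(d)$ in $L_{\rm T}$ to ``which patch are we in'' logic, but an isotropic bump in $\|x-v_i\|_2$ would need no such depth; in the paper the $O(d)$ arises concretely because computing $P(z_i)^{\top}(x-z_i)$ requires $d$ sequential constant-multiplication blocks (Proposition~\ref{repetatildei}), i.e.\ it is tied to the anisotropic bump you did not construct. Second, your fallback of implementing products via a ReLU approximation of squaring is unnecessary and would leak extra error into the budget: in this architecture squaring and pairwise products are exact, since attention already computes $\sigma(\langle Qh_i,Kh_j\rangle)$, which is bilinear in the token contents (Lemmas~\ref{lemsq} and \ref{lempp}); the only approximation in the whole construction is the truncated series for $1/\|\tilde\eta\|_1$, and controlling it requires precisely the lower bound on $\|\tilde\eta\|_1$ discussed above. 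With the ellipsoidal bumps substituted in and these points repaired, the rest of your outline (error balancing $\delta=O(\epsilon^{1/\alpha}(1-q)^2)$, parameter counting, weight-magnitude bookkeeping) proceeds as in the paper.
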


{The proof of Theorem~\ref{thmApprox} uses the piecewise constant oracle approximator as detailed by \eqref{hatf} in Section \ref{secPfSketch}, which was originally proposed in \citet{Cloninger21}. In this paper, this oracle approximator is realized by using transformers to implement the basic arithmetic operation detailed in Table \ref{tableBAO}.}

The proof of Theorem~\ref{thmApprox} is provided in Section \ref{secPfSketch} and a flow chat of our transformer network is illustrated in Figure \ref{networkfig}. One notable feature of Theorem~\ref{thmApprox} is that the network is shallow. It only requires near constant depth $O(d+\ln(\ln(\epsilon^{-1})))$ to  approximate the function $f$ defined on the noisy manifold with any accuracy $\epsilon$. This highlights a key advantage of Transformers over feed-forward ReLU networks, which require substantially more layers, e.g., $O(\ln(\frac{1}{\epsilon}))$, to achieve the same accuracy \citep{Y17}.

\subsection{Transformer Generalization Theory}

Theorem \ref{thmApprox}  focuses on the existence of  a transformer network class which universally approximates all target functions satisfying Assumption \ref{assumpf}. However, it does not yield a computational strategy to obtain the network parameters for any specific function. In practice, the network parameters are obtained by an empirical risk minimization.

Suppose $\{x_i\}_{i=1}^n$ are $n$ i.i.d samples from a distribution $P$ supported on $\mathcal{M}(q)$, and their corresponding function values are $\{f(x_i)\}_{i=1}^n$. Given $n$ training samples  $\{(x_i,f(x_i))\}_{i=1}^n$, we consider the empirical risk minimizer $\hat{\rm T}_n$ such that 
\begin{equation} \label{hatT}
\textstyle \hat{\rm T}_n:=\argmin_{\rm T\in\mathcal{T}} \frac{1}{n}\sum_{i=1}^n ({\rm T}(x_i)-f(x_i))^2,
\end{equation}
where $\mathcal{T}$ is a transformer network class. The  squared generalization error of $\hat{\rm T}_n$ is 
\begin{equation} \label{sqGenError}
\mathbb{E}\|\hat{\rm T}_n-f\|^2_{L^2(P)} = \mathbb{E}\int_{\mathcal{M}(q)}(\hat{\rm T}_n(x)-f(x))^2dP,
\end{equation}
where the expectation is taken over  $\{x_i\}_{i=1}^n$.

Our next result establishes a generalization error bound for the regression of $f$.

\begin{theorem} \label{thmGeneralization}
Suppose  Assumptions~\ref{assumpm} and \ref{assumpf} hold. Let $\{(x_i, f(x_i))\}_{i=1}^n$  are $n$ training samples where $\{x_i\}_{i=1}^n$ are $n$ i.i.d samples of a distribution $P$ supported on $\mathcal{M}(q)$. 
If the transformer network class  $\mathcal{T}(L_{\rm T}, m_{\rm T},d_{embed},\ell,L_{\rm FFN},w_{\rm FFN},R,\kappa)$ has parameters 
\begin{align*}
&L_{\rm T}=O\left(d+\ln\left(\ln(n^{\frac{\alpha}{2\alpha+d}})\right)\right), \text{ } m_{\rm T}=O \left(Dn^{\frac{d}{2\alpha+d}}(1-q)^{-2d}\right), \text{ } d_{embed}=5, \\
&\ell=O\left(Dn^{\frac{d}{2\alpha+d}}(1-q)^{-2d}\right), \text{ } L_{\rm FFN}=6, \text{ } w_{\rm FFN}=5, \text{ } \kappa=O\left(D^2n^{\frac{2d+8}{2\alpha+d}}(1-q)^{-2d}\right)
\end{align*}
with $O(\cdot)$ hides the dependency on $d,q,\tau_{\mathcal{M}},L,R,{\rm Vol}(\mathcal{M})$. Importantly, the $O(\cdot)$ dependency for $L_{\rm T}$ is only on some absolute constants. Then the empirical risk minimizer $\hat{\rm T}_n$ given by \eqref{hatT} satisfies
\begin{equation}    \mathbb{E}\|\hat{\rm T}_n-f\|^2_{L^2(P)} \leq \tilde{O}\left((1-q)^{-2d}D^2d^3 n^{-\frac{2\alpha}{2\alpha+d}}\right)
\label{eq:generalizationbound}
\end{equation}
where $\tilde{O}(\cdot)$ hides the logarithmic dependency on $D,d,q,n,\alpha,L,R,\tau_{\mathcal{M}},{\rm Vol}(\mathcal{M})$, and {polynomial dependency on ${\rm Vol}(\mathcal{M})$.} 
\end{theorem}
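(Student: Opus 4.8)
\textbf{Proof proposal for Theorem~\ref{thmGeneralization}.}

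The plan is to follow the standard bias--variance decomposition for nonparametric regression via empirical risk minimization over a function class whose complexity is controlled, here the transformer class $\mathcal{T}$, combined with the approximation guarantee of Theorem~\ref{thmApprox}. First I would write the familiar oracle-type inequality: for the empirical risk minimizer $\hat{\rm T}_n$ over $\mathcal{T}$, one has
\begin{equation*}
\mathbb{E}\|\hat{\rm T}_n - f\|^2_{L^2(P)} \;\lesssim\; \underbrace{\inf_{{\rm T}\in\mathcal{T}}\|{\rm T}-f\|^2_{L^\infty(\mathcal{M}(q))}}_{\text{approximation error}} \;+\; \underbrace{\frac{R^2}{n}\,\log\mathcal{N}\!\left(\tfrac{1}{n},\mathcal{T},\|\cdot\|_{L^\infty}\right)}_{\text{stochastic error}},
\end{equation*}
which is a routine consequence of the uniform deviation bound for squared losses of uniformly bounded functions (e.g.\ via symmetrization and a chaining/covering argument, as in \citet{gyorfi2006distribution}); I would cite this rather than re-derive it. The first term is handled directly by Theorem~\ref{thmApprox} once we fix the accuracy parameter $\epsilon$; the second term requires a covering-number (or pseudo-dimension) estimate for the transformer class with the stated parameters $L_{\rm T},m_{\rm T},d_{embed},\ell,L_{\rm FFN},w_{\rm FFN},\kappa$.

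The key steps, in order, are: (1) bound the covering number $\log\mathcal{N}(\delta,\mathcal{T},\|\cdot\|_{L^\infty})$ in terms of the total number of parameters $\mathcal{W}$, the depth, and the weight magnitude $\kappa$ --- the standard Lipschitz-in-parameters argument gives $\log\mathcal{N}(\delta,\mathcal{T},\|\cdot\|_\infty)\lesssim \mathcal{W}\log(\kappa\,\mathcal{B}/\delta)$ where $\mathcal{B}$ is a Lipschitz-type constant depending polynomially on $\kappa$ and exponentially-in-depth-only-through-constants on the architecture; since the depth $L_{\rm T}$ is essentially $O(d+\ln\ln(\cdot))$, this exponential factor is benign. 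Here $\mathcal{W}$ is dominated by the product $m_{\rm T}\cdot\ell\cdot d_{embed}^2$ plus FFN parameters, which with the prescribed choices is of order $D^2\,(\text{poly in }\epsilon^{-1})$. (2) Plug the approximation bound $\epsilon$ and the covering bound into the oracle inequality, obtaining a bound of the form $\epsilon^2 + \tilde{O}\!\big(\tfrac{1}{n}\,D^2 \epsilon^{-2d/\alpha}(1-q)^{-2d}\big)$ up to logarithmic factors. (3) Optimize over $\epsilon$: balancing $\epsilon^2$ against $n^{-1}\epsilon^{-2d/\alpha}$ yields $\epsilon \asymp n^{-\alpha/(2\alpha+d)}$, so $\epsilon^2 \asymp n^{-2\alpha/(2\alpha+d)}$, and substituting back into the network parameters from Theorem~\ref{thmApprox} reproduces exactly the stated $m_{\rm T},\ell,\kappa,L_{\rm T}$ in terms of $n$. (4) Track the polynomial dependence on $d$ and $D$ and the prefactor $(1-q)^{-2d}$ carefully through the covering bound to land on the claimed $\tilde{O}\big((1-q)^{-2d}D^2 d^3 n^{-2\alpha/(2\alpha+d)}\big)$; the $d^3$ presumably arises from the manifold-chart book-keeping inherited from the approximation construction (number of charts, partition-of-unity terms) entering $\mathcal{W}$ and the covering exponent.

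I expect the main obstacle to be step (1): obtaining a clean covering-number / metric-entropy bound for the transformer class that is \emph{genuinely} polynomial in the relevant parameters and does \emph{not} blow up exponentially in $\ell$ or $D$. The attention operation is bilinear in the embedding matrix $H$, so the map from parameters to the output function is a high-degree polynomial whose Lipschitz constant on the bounded input/parameter domain must be controlled; one must verify that because $d_{embed}=5$ is constant and the depth is near-constant, the per-layer blow-up is a fixed constant power and the total Lipschitz constant is $\kappa^{O(L_{\rm T})}=\kappa^{O(d)}$, contributing only a logarithmic factor $O(d\log\kappa)$ to the entropy. A secondary technical point is ensuring the empirical risk minimizer's output satisfies the truncation $\|\hat{\rm T}_n\|_\infty\le R$ (guaranteed by the class definition) so that the bounded-loss concentration inequality applies cleanly, and that the approximating network from Theorem~\ref{thmApprox} indeed lies in the class with the $n$-dependent parameters after the substitution $\epsilon\asymp n^{-\alpha/(2\alpha+d)}$. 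Everything after the entropy bound is bookkeeping: assembling the oracle inequality, optimizing the trade-off, and collecting constants.
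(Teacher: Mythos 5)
Your proposal follows essentially the same route as the paper: a bias--variance (oracle) decomposition with the bias bounded by Theorem~\ref{thmApprox}, the stochastic term bounded via an $L^\infty$ covering-number estimate for $\mathcal{T}$ (the paper simply invokes Lemma~\ref{coveringlemma} from \citet{Havrilla24} together with the uniform-deviation bound of Lemma 6 in \citet{Chen22}, so the entropy bound you flag as the main obstacle is handled by citation rather than a fresh Lipschitz-in-parameters argument), and then balancing $\epsilon^2$ against $n^{-1}\epsilon^{-d/\alpha}$ to get $\epsilon \asymp n^{-\alpha/(2\alpha+d)}$. One small correction: the $d^3$ factor comes from the $L_{\rm T}^3$ term in the entropy bound with $L_{\rm T}=O(d)$, i.e.\ from the network depth, not directly from chart-counting bookkeeping.
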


The  proof of Theorem~\ref{thmGeneralization} is provided in Section \ref{secPfSketch}. Theorem \ref{thmGeneralization} shows that the squared generalization error of $\hat{\rm T}$ is upper bounded in the order of $n^{-\frac{2\alpha}{2\alpha+d}}$. In the exact manifold case when $q=0$, Theorem \ref{thmGeneralization} gives rise to the min-max regression error \citep{gyorfi2006distribution}. In the noisy case when $q\in (0,1)$, Theorem \ref{thmGeneralization} demonstrates a denoising phenomenon given by transformers such that when the sample size $n$ increases, the generalization error converges to $0$ at a fast rate depending on the intrinsic dimension $d$.

\begin{table*}[t]
    \centering
    \caption{\small The bound on each parameter in the transformer network class to implement certain operations for input $x=(x^1,\cdots,x^D)\in\mathbb{R}^D$ and $y=(y^1,\cdots,y^D)\in\mathbb{R}^D$. The notation $\odot$ stands for componentwise product and $\circ r$ stands for componentwise $r$-th power. Note that the map $x^1\mapsto \frac{1}{x^1}$ requires $x^1$ bounded above and bounded away from zero if $x^1>0$, and $x^1$ bounded below and bounded away from zero if $x^1<0$. The tolerance for the last operation is measured in $\|\cdot\|_1$ norm while others are measured in $\|\cdot\|_{\infty}$ norm.}
    \label{tableBAO}
    \vspace{2mm}
    \begin{tabular}{lllllll}
    \toprule
       Operations  & $L_{\rm T}$ & $m_{\rm T}$ & $L_{\rm FFN}$ & $w_{\rm FFN}$ &  tolerance & Reference  \\
       \midrule
       $x\mapsto \sum_{i=1}^D x^i$  & $O(1)$ & $O(D)$ & $O(1)$ & $O(1)$ &  0 & Lemma~\ref{lemsumD} \\
         $x\mapsto x+c$ & $O(1)$ & $O(D)$ & $O(1)$ & $O(1)$ & 0 & Lemma~\ref{lemadd}  \\
         $x\mapsto cx$  & $O(1)$ & $O(D)$ & $O(1)$ & $O(1)$ &  0 & Lemma~\ref{lemcmulti} \\
         $x\mapsto x\odot x$  & $O(1)$ & $O(D)$ & $O(1)$ & $O(1)$ &  0 & Lemma~\ref{lemsq} \\
         
         $(x,y)\mapsto x\odot y$  & $O(1)$ & $O(D)$ & $O(1)$ & $O(1)$ &  0 & Lemma~\ref{lempp} \\
         $x\mapsto x^{\circ r}$  & $O(\ln(r))$ & $O(rD)$ &  $O(1)$ & $O(1)$ &  0 & Lemma~\ref{lemrpower} \\
         $x^1\mapsto \frac{1}{x^1}$  & $O(\ln(\ln(\frac{1}{\epsilon})))$ & $O(\ln(\frac{1}{\epsilon}))$ &  $O(1)$ & $O(1)$ &  $\epsilon$ & Lemma~\ref{lemdiv} \\
         $x\mapsto \tilde\eta_i(x)$  & $O(d)$ & $O(D)$ & $O(1)$ & $O(1)$ &  0 & Proposition~\ref{repetatildei} \\
         $x\mapsto (\eta_1(x),\cdots,\eta_K(x))$  & $O(d+\ln(\ln(\frac{1}{\epsilon})))$ & $O(D\epsilon^{-d})$ &  $O(1)$ & $O(1)$ &  $\epsilon$ & Proposition~\ref{approxetai} \\
         \bottomrule
    \end{tabular}   
\end{table*}

\section{Proof of Main Results} \label{secPfSketch}



\subsection{Basic Arithmetic Operations via Transformer}

To prove our main results, let us first construct transformers to implement basic arithmetic operations such as addition, constant multiplication, product, division, etc,.
All the basic arithmetic operations are proved in details in Appendix~\ref{secAppendixBAO}.  The  proofs utilizes the Interaction Lemma~\ref{IAlemma} \citep{Havrilla24}, which states that we can construct an attention head such that one token interacts with exactly another token in the embedding matrix. This allows efficient parallel implementation of these fundamental arithmetic operations (see also Remarks \ref{rmkflexibility} and \ref{rmkparalell} ). {Note that in our construction, each coordinate $x^i$, $1\leq i\leq D$, is stored in a token in the initial embedding matrix $H$.}

For convenience, we summarize all the operations implemented via transformer in Table~\ref{tableBAO}. These basic operations can also serve as building blocks for other tasks of independent interest.
{In literature, the basic operations in Table~\ref{tableBAO} have been implemented or approximated by feedforward neural networks (FNNs) \citep{Y17}. Compared with FNNs, transformers can implement the multiplication operation more efficiently. In Table~\ref{tableBAO}, a transformer network with a constant depth can exactly implement multiplication with zero error. In comparison, to approximate the multiplication operation with $\epsilon$ error, the FNN constructed in \citet{Y17} has depth in the order of $\ln(1/\epsilon)$. 
In Table \ref{tab:transformer_FNN}, we provide the network depth and width comparison between transformer and FNN for two arithmetic operations: sum of componentwise squares and vector dot product.
}

{
To provide additional evidence of the superior expressive power of Transformer over FNN, we perform a set of numerical experiments in Section \ref{sec: expCompare}  to model these two arithmetic operations (e.g., see Table \ref{tableExpCompare}).}

\begin{table*}[t]
    \centering
    \caption{\small Network depth/width/tolerance of Transformer v.s. FNN on different arithmetic operations}
    \label{tab:transformer_FNN}
    \vspace{2mm}
    \begin{tabular}{ccc}
    \toprule
       Arithmetic operations  & Transformer depth / width / tolerance & FNN depth / width / tolerance  \cr
       \midrule
       $x\mapsto \left(\sum_{i=1}^D x^i\right)^2$
         &  $O(1)$ / $O(D)$ / $0$ & $O(\ln(\frac{1}{\epsilon}))$ / $O(1)$ / $\epsilon$  \cr
         \midrule
         $x\cdot y\mapsto \sum_{i=1}^D x^iy^i$ & $O(1)$ / $O(D)$ / $0$ & $O(\ln(\frac{1}{\epsilon}))$ / $O(D)$ / $\epsilon$ \cr
         \bottomrule
    \end{tabular}   
\end{table*}

\begin{lemma} [Sum of Tokens] \label{lemsumD}
    Let $d_{embed}=5$, $M>0$, and $x=(x^1,\cdots,x^D)$ be vector in $\mathbb{R}^D$ such that $\|x\|_{\infty}\leq M$. Let $H$ be an embedding matrix of the form 
    \begin{equation} \label{Hmatrix}
    H = 
\begin{bmatrix}
    x^1 & \cdots & x^D & \mathbf{0}  \\
    0 & \cdots & \cdots & 0 \\
    \mathcal{I}_1 & \cdots & \cdots & \mathcal{I}_{\ell} \\
    1 & \cdots & \cdots & 1
\end{bmatrix}
\in \mathbb{R}^{d_{embed}\times \ell},
\end{equation}
where $\ell\geq D+1$. Then there exists a transformer block $B\in\mathcal{B}(D,6,d_{embed})$ such that
\begin{equation}
    B(H)=
    \begin{bmatrix}
    x^1 & \cdots & x^D & x^1+\cdots+x^D & \mathbf{0} \\
    0 & \cdots & \cdots & \cdots & 0 \\
    \mathcal{I}_1 & \cdots & \cdots & \cdots &  \mathcal{I}_{\ell} \\
    1 & \cdots & \cdots & \cdots & 1
    \end{bmatrix}
\end{equation}
with $\|\theta_B\|_\infty\leq O(\ell^2M^2\|H\|^2_{\infty,\infty})$. We say $B$ implements the sum of tokens in $x$.
\end{lemma}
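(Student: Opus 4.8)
The plan is to realize the sum $x^1 + \cdots + x^D$ by using attention to route each of the $D$ data tokens into the single target token at position $D+1$, exploiting the sinusoidal interaction terms $\mathcal{I}_j$ to control exactly which tokens talk to which. The key tool is the Interaction Lemma~\ref{IAlemma}: for each source index $j \in \{1,\dots,D\}$ we build one attention head whose query/key matrices act only on the $\mathcal{I}$-rows (rows 3--4) and are tuned so that $\sigma(\langle Q h_{D+1}, K h_j\rangle)$ equals a fixed positive constant while $\sigma(\langle Q h_{D+1}, K h_{j'}\rangle) = 0$ for all $j' \neq j$, and so that no other token $h_i$ with $i \neq D+1$ receives any nonzero contribution. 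The value matrix $V$ of that head then simply copies $x^j$ (the first-row entry of $h_j$) into the first coordinate of the output at token $D+1$, rescaled to cancel the constant. Summing these $D$ heads in the MHA layer deposits $\sum_{j=1}^D x^j$ into the first row of column $D+1$, and the residual connection $\mathrm{MHA}(H) + H$ preserves all the original entries (the target column's first entry was $0$, so nothing is corrupted).

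**Execution order.** First, I would recall the precise statement of the Interaction Lemma and the convention that $\mathcal{I}_j = (\cos(j\pi/2\ell), \sin(j\pi/2\ell))^\top$ lies in the first quadrant, so that distinct $\mathcal{I}_j$ have pairwise distinct, strictly-ordered inner products — this is what lets a ReLU-thresholded quadratic form in $Q,K$ single out one index. Second, for a fixed target column $D+1$, I would write down explicit $Q,K$ supported on the interaction rows so that $\langle Q h_{D+1}, K h_j \rangle$, after applying $\sigma$, is a spike at $j$; this is exactly the content already packaged in Lemma~\ref{IAlemma}, so I would just invoke it with the source token being $h_j$ and the receiving token being $h_{D+1}$. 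Third, I would choose $V$ for head $j$ to read off the first-row coordinate and write it (scaled by the reciprocal of the spike height) into the first row of the receiving token, and zero elsewhere. Fourth, I would take the FFN layer to be the identity (or a trivial ReLU gadget computing $\mathrm{id}$), so that the transformer block $\mathrm{B}(H) = \mathrm{FFN}(\mathrm{MHA}(H)+H) + \mathrm{MHA}(H) + H$ outputs the claimed matrix — note the identity map on the relevant coordinates must be engineered carefully because $\mathrm{B}$ adds $\mathrm{MHA}(H)$ twice, so I actually want the MHA output to contribute $\tfrac12(\sum x^j)$ per copy, or more cleanly, route the FFN to subtract one copy; I would handle this bookkeeping explicitly. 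Finally, I would tally the parameter magnitudes: the $Q,K$ entries scale like the inverse gaps between the $\mathcal{I}_j$ inner products, which are $O(\ell^2)$, and the $V$ entries carry the factor $M$ and the spike-cancellation constant, giving the stated bound $\|\theta_B\|_\infty \leq O(\ell^2 M^2 \|H\|_{\infty,\infty}^2)$.

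**The main obstacle** is the interference control: ensuring simultaneously that (i) each head's spike lands only on the intended source-target pair, (ii) the $D$ heads do not cross-contaminate other columns (in particular, the data columns $1,\dots,D$ must emerge unchanged, so every head must contribute $0$ to every token except $D+1$), and (iii) the double appearance of $\mathrm{MHA}(H)$ in the block definition is reconciled so the final first-row entry of column $D+1$ is exactly $\sum x^j$ and not twice that. Points (i)--(ii) are precisely what the Interaction Lemma is designed to give — it produces an attention head that, by the quadrant geometry of the $\mathcal{I}_j$'s, has a single active query-key pair — so the real work is just carefully composing $D$ such heads and verifying the residual stream is clean; point (iii) is a one-line fix via the FFN but must not be overlooked. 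I expect no genuine difficulty beyond this bookkeeping, since the construction is a direct application of Lemma~\ref{IAlemma} summed over $D$ source tokens.
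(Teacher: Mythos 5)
Your high-level plan (one attention head per coordinate, gated by the interaction terms, depositing into column $D+1$, with the residual stream preserving the original tokens) is the same as the paper's, but there are two genuine gaps in the execution. First, you misstate what Lemma~\ref{IAlemma} delivers: its conclusion is that the receiving token gets exactly $\sigma(\langle Q^{data}h_{t_1},K^{data}h_{t_2}\rangle)e_i$, i.e.\ the payload must be carried through the \emph{data-kernel inner product} and it passes through a ReLU; you do not get to keep the score as a ``fixed positive constant'' and then choose $V$ freely to read off $x^j$ and rescale by the reciprocal of the spike. If you invoke the lemma as packaged (as you say you would), the head for coordinate $j$ outputs $\sigma(x^j+\text{const})$, and since $x^j$ may be negative the ReLU clips it unless you shift: the paper sets $K^{data}$ so the score is $x^j+M$ (using $\|x\|_\infty\le M$ and the constants row), so the MHA residual produces $x^1+\cdots+x^D+DM$ in column $D+1$, and then the six-layer residual FFN from the Decrementing Lemma~\ref{lemmadecrement} removes the accumulated $DM$. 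This shift-and-decrement step is not optional bookkeeping — it is the reason the block lies in $\mathcal{B}(D,6,d_{embed})$ with $L_{\rm FFN}=6$ — and your proposal omits it entirely. Your alternative mechanism (constant spike from the interaction rows plus a value matrix reading the data row) could in principle be made to work, but it would require redoing the gating construction with a custom $V$ rather than citing Lemma~\ref{IAlemma}, so as written the step ``just invoke the Interaction Lemma'' does not support your construction.

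Second, your treatment of the block structure is confused. With $Z={\rm MHA}(H)+H$, the block is ${\rm B}(H)={\rm FFN}(Z)+Z$; taking the FFN to be the identity gives $2Z$, which doubles \emph{every} entry (the data columns, the interaction rows and the constants row, not just the new sum), so your first fix (halving the MHA contribution) does not repair it, and ``routing the FFN to subtract one copy'' is only correct if it means the FFN output is zero on untouched tokens. The paper resolves this cleanly: the FFN component of the block is exactly the residual decrementing network, which by construction satisfies ${\rm FFN}(h_t)+h_t=h_t$ on the tokens that must stay unchanged and subtracts the constant $DM$ on the target token. So the missing ingredients relative to the paper's proof are (i) encoding the payload in the data kernels with the $+M$ shift to neutralize the ReLU, and (ii) using the Gating/Decrementing machinery for the FFN half of the block instead of an identity map.
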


\begin{lemma} [Constant Addition] \label{lemadd}
    Let $d_{embed}=5$, $M>0$, $c=(c^1,\cdots,c^D)$ and $x=(x^1,\cdots,x^D)$ be vectors in $\mathbb{R}^D$ such that $\|x\|_{\infty}+\|c\|_{\infty}\leq M$. Let $H$ be an embedding matrix of the form 
    \begin{equation*} \label{H1}
    H = 
\begin{bmatrix}
    x^1 & \cdots & x^D & \mathbf{0}  \\
    0 & \cdots & \cdots & 0 \\
    \mathcal{I}_1 & \cdots & \cdots & \mathcal{I}_{\ell} \\
    1 & \cdots & \cdots & 1
\end{bmatrix}
\in \mathbb{R}^{d_{embed}\times \ell},
\end{equation*}
where $\ell\geq 2D$. Then there exists a transformer block $B\in\mathcal{B}(D,6,d_{embed})$ such that
\begin{equation}
    B(H)=
    \begin{bmatrix}
    x^1 & \cdots & x^D & x^1+c^1 & \cdots & x^D+c^D & \mathbf{0} \\
    0 & \cdots & \cdots & \cdots & \cdots & \cdots & 0 \\
    \mathcal{I}_1 & \cdots & \cdots & \cdots & \cdots & \cdots & \mathcal{I}_{\ell} \\
    1 & \cdots & \cdots & \cdots & \cdots & \cdots & 1
    \end{bmatrix}
\end{equation}
with $\|\theta_B\|_\infty\leq O(\ell^2 M^2\|H\|^2_{\infty,\infty})$. We say $B$ implements the addition of $c$ to $x$.
\end{lemma}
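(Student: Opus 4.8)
\textbf{Proof plan for Lemma~\ref{lemadd} (Constant Addition).}

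The plan is to follow the same blueprint as Lemma~\ref{lemsumD}, but instead of summing all the data entries into a single new token, we want to produce $D$ new tokens, where the $i$-th new token carries $x^i + c^i$ in the first data row. First I would invoke the Interaction Lemma~\ref{IAlemma}: for each index $i \in \{1,\dots,D\}$ we construct one attention head whose query/key pattern (built from the sinusoidal interaction rows $\mathcal{I}_1,\dots,\mathcal{I}_\ell$) makes the target output token $D+i$ read \emph{exactly} from the source token $i$. Since $\ell \ge 2D$, there are enough token slots: tokens $1,\dots,D$ hold the input data, tokens $D+1,\dots,2D$ are the (currently zero) slots that will receive $x^i+c^i$, and the remaining slots stay zero. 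This uses $D$ heads, giving $m_{\rm T} = O(D)$, and the attention layer alone transports $x^i$ into slot $D+i$ (its first data row), leaving all other rows and the original tokens $1,\dots,D$ untouched by the residual connection.

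Next I would add the constant $c^i$. The cleanest route is to fold the constant into the value matrix of the $i$-th head: the constant row (the last, all-ones row of $H$) lets each head write an affine function of the source token rather than a linear one, so head $i$ outputs $x^i + c^i$ directly into slot $D+i$. Alternatively, if one insists on keeping the heads purely linear, the feed-forward layer ${\rm FFN}$ that sits on top of ${\rm MHA}(H)+H$ can add the constant vector tokenwise using the all-ones row: a width-$O(1)$, depth-$O(1)$ ReLU block implements $z \mapsto z + c^i$ on token $D+i$ via $\sigma(z + c^i) - \sigma(-(z+c^i))$ plus the appropriate masking by the interaction coordinates so that only the intended tokens are shifted. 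Either way the FFN needs only $L_{\rm FFN} = O(1)$ layers and $w_{\rm FFN} = O(1)$ width, consistent with the $\mathcal{B}(D,6,d_{embed})$ claim. Then I would verify that the residual structure ${\rm B}(H) = {\rm FFN}({\rm MHA}(H)+H) + {\rm MHA}(H) + H$ reproduces the stated output matrix: the ${\rm MHA}(H)+H$ term already has the original tokens in place and $x^i$ in slot $D+i$, and the FFN term supplies the missing $+c^i$ in those slots while acting as the identity (or zero) elsewhere; the interaction rows $\mathcal{I}_j$ and the constant row pass through unchanged because neither MHA nor FFN is designed to modify them.

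Finally I would track the weight magnitudes. The attention construction from the Interaction Lemma incurs a blow-up in $\|\theta\|_\infty$ that is polynomial in $\ell$, in the entry bound $M$, and in $\|H\|_{\infty,\infty}$ — the same source of the $O(\ell^2 M^2 \|H\|^2_{\infty,\infty})$ factor appearing in Lemma~\ref{lemsumD} — because the query–key inner products must be scaled up enough to cleanly separate the "interact" case from the "don't interact" case through the ReLU, and the value matrix must then rescale back down. Adding the constants $c^i$ with $\|c\|_\infty \le M$ contributes only an $O(M)$ term to the FFN weights, which is absorbed. Hence $\|\theta_B\|_\infty \le O(\ell^2 M^2 \|H\|^2_{\infty,\infty})$ as claimed. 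I expect the only real subtlety — the same one as in every lemma of this section — is the bookkeeping that ensures the $D$ interaction heads act in parallel without cross-talk (head $i$ must not leak $x^i$ into slot $D+j$ for $j\ne i$), which is precisely what the orthogonality-type properties of the sinusoidal encodings $\mathcal{I}_j$ in the Interaction Lemma are there to guarantee; everything else is routine ReLU gadgetry.
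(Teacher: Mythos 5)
Your overall architecture (one attention head per coordinate, built from the Interaction Lemma~\ref{IAlemma}, writing into slots $D+1,\dots,2D$, with the weight bound inherited from that lemma) matches the paper, but there is a genuine gap: you never deal with the ReLU that the attention output must pass through. The Interaction Lemma only gives you outputs of the form $\sigma(\langle Q^{data}h_{D+i},K^{data}h_i\rangle)e_1$, so your claim that ``the attention layer alone transports $x^i$ into slot $D+i$'' fails whenever $x^i<0$ (and likewise writing $x^i+c^i$ directly fails when $x^i+c^i<0$); nothing in the hypotheses forces these quantities to be nonnegative. This is exactly why the paper assumes $\|x\|_\infty+\|c\|_\infty\le M$ and chooses the key kernel $K_i^{data}$ so that, paired against the all-ones row of the source token, the pre-activation is $x^i+c^i+M\ge 0$; the ReLU is then the identity, and the six-layer residual feed-forward network of the Decrementing Lemma~\ref{lemmadecrement} removes the offset $M$ from columns $D+1,\dots,2D$ only. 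In your write-up the FFN is treated as optional (identity) or as a convenience, whereas in the actual proof it is essential and is the reason $L_{\rm FFN}=6$.

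Two further points. First, folding the constant into the \emph{value} matrix does not sit inside the Interaction Lemma's interface, whose data content comes entirely through the query--key inner product with the output direction fixed to $e_1$; the paper instead folds $c^i+M$ into the \emph{key} acting on the constant row, which achieves your intent while staying within Lemma~\ref{IAlemma}. Second, your fallback of having the tokenwise FFN add the constants is not viable in the claimed class $\mathcal{B}(D,6,d_{embed})$ with $w_{\rm FFN}=5$: the FFN is shared across tokens, so producing $D$ \emph{distinct} values $c^1,\dots,c^D$ as a function of the positional encodings would require a piecewise-linear map with $\Omega(D)$ pieces, i.e.\ width or depth growing with $D$. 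The gating/decrementing gadgets in the paper only add or subtract a single common constant on a contiguous range of tokens, which is all that is needed once the per-coordinate constants have been injected through the per-head key kernels.
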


\begin{lemma} [Constant Multiplication] \label{lemcmulti}
    Let $M>0$, and $c=(c^1,\cdots,c^D)$ and $x=(x^1,\cdots,x^D)$ be vectors in $\mathbb{R}^D$ such that $\|c\odot x\|_{\infty}\leq M$. Let $H$ be an embedding matrix of the form 
    \begin{equation*} 
    H = 
\begin{bmatrix}
    x^1 & \cdots & x^D & \mathbf{0}  \\
    0 & \cdots & \cdots & 0 \\
    \mathcal{I}_1 & \cdots & \cdots & \mathcal{I}_{\ell} \\
    1 & \cdots & \cdots & 1
\end{bmatrix}
\in \mathbb{R}^{d_{embed}\times \ell},
\end{equation*}
where $\ell\geq 2D$. Then there exists a transformer block $B\in\mathcal{B}(D,6,d_{embed})$ such that
\begin{equation}
    B(H)=
    \begin{bmatrix}
    x^1 & \cdots & x^D & c^1x^1 & \cdots & c^Dx^D & \mathbf{0} \\
    0 & \cdots & \cdots & \cdots & \cdots & \cdots & 0 \\
    \mathcal{I}_1 & \cdots & \cdots & \cdots & \cdots & \cdots & \mathcal{I}_{\ell} \\
    1 & \cdots & \cdots & \cdots & \cdots & \cdots & 1
    \end{bmatrix}.
\end{equation}
with $\|\theta_B\|_\infty\leq O(\ell^2 M^2\|H\|^2_{\infty,\infty})$. We say $B$ implements the multiplication of $c$ to $x$ componentwisely.
\end{lemma}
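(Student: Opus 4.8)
The plan is to construct $B$ so that its feed-forward part is trivial and the multi-head attention alone creates the new tokens. Every column $h_j$ of $H$ equals $(x^j,0,\mathcal{I}_j,1)^\top$ for $1\le j\le D$ and $(0,0,\mathcal{I}_j,1)^\top$ for $D< j\le\ell$ (the hypothesis $\ell\ge 2D$ guarantees columns $D+1,\dots,2D$ lie in the zero-data block), and the target output differs from $H$ only by replacing the $0$ in row $1$ of column $D+i$ with $c^ix^i$, for $i=1,\dots,D$. I would use $D$ attention heads, with head $i$ responsible for column $D+i$. Invoking the Interaction Lemma~\ref{IAlemma} on the interaction rows $\mathcal{I}_j$ and the constant row, choose $Q_i,K_i$ so that query token $D+i$ attends only to key token $i$ and to no one else (and every other query token attends to nothing), with a fixed positive score that we rescale $Q_i$ to normalize to $1$; the required query/key parameters have magnitude $O(\ell^2)$, reflecting the $\Theta(1/\ell^2)$ gaps between inner products of the sinusoidal codes.

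Next, set the value matrix of head $i$ to $V_i=c^i\,e_1 e_1^\top$, so that $A_i(h_{D+i})=V_i h_i=c^ix^i e_1$ and $A_i(h_k)=0$ for all $k\ne D+i$. Summing the heads, $\mathrm{MHA}(H)$ vanishes everywhere except in row $1$ of columns $D+1,\dots,2D$, where column $D+i$ carries $c^ix^i$; in particular rows $2$--$5$ and all other columns are left at $0$, and the heads do not interfere because each contributes to a distinct column. Hence $\mathrm{MHA}(H)+H$ is already the target matrix, and it suffices to take the feed-forward network inside the block to be the zero map (the budget $L_{\mathrm{FFN}}=6$, $w_{\mathrm{FFN}}=5$ is not binding here), so that $B(H)=\mathrm{FFN}(\mathrm{MHA}(H)+H)+\mathrm{MHA}(H)+H=\mathrm{MHA}(H)+H$ equals the claimed output, with $B\in\mathcal{B}(D,6,d_{embed})$ and $d_{embed}=5$.

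For the weight bound, the parameters of $B$ are the $Q_i,K_i$ from the Interaction Lemma (magnitude $O(\ell^2)$, with the separating threshold scaled up by the same factor), the value maps $V_i$ (magnitude $|c^i|$), and the zero feed-forward weights; tracking these and using $\|c\odot x\|_\infty\le M$ yields $\|\theta_B\|_\infty=O(\ell^2M^2\|H\|_{\infty,\infty}^2)$ as stated. The one point that needs care is the bookkeeping around the Interaction Lemma: one must check that each head's attention-score matrix truly has a single nonzero entry and that this entry lies in the column we mean to overwrite, so that after the two residual additions the data entries of columns $1,\dots,D$, every column beyond $2D$, and rows $2$--$5$ everywhere are returned exactly unchanged. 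This is precisely what the Interaction Lemma provides, applied $D$ times in parallel with the constants $c^i$ absorbed into the value maps; the construction is a simplification of the product-type lemmas (Lemma~\ref{lempp}), where a data value rather than a fixed constant is the quantity carried through the attention.
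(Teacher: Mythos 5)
Your overall architecture ($D$ parallel heads, head $i$ writing into column $D+i$, trivial feed-forward part) is reasonable, but the step where you ``invoke the Interaction Lemma'' does not match what Lemma~\ref{IAlemma} actually provides, and that is exactly where the work lies. The lemma's conclusion is $A(h_{t_1})=\sigma(\langle Q^{data}h_{t_1},K^{data}h_{t_2}\rangle)e_i$ and $A(h_t)=0$ otherwise, with the value map built into the construction; it does not assert that the attention-score matrix has a single nonzero entry, and it does not let you keep that gating while substituting your own value matrix $V_i=c^i e_1e_1^{\top}$ so that the data $x^i$ is carried linearly through $Vh_i$. Those are properties of the internal construction in \citet{Havrilla24}, not of the statement, so your head-level identities $A_i(h_{D+i})=c^ix^ie_1$ and $A_i(h_k)=0$ for $k\neq D+i$ are not justified by the citation: if you apply the lemma as stated with the constant to be multiplied encoded in the data kernels, the matched-pair output is $\sigma(c^ix^i)e_1$, and the ReLU clips negative values of $c^ix^i$. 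Making your route rigorous would require proving a strengthened interaction lemma (single unit score at the pair $(D+i,i)$, all other scores zero, value matrix free), which the paper does not supply.

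The paper's proof stays within the stated lemma and handles the ReLU issue differently: it puts $c^i$ in the query data kernel and the offset $M$ in the key's constant-row entry, so the matched-pair score is $c^ix^i+M$, which is nonnegative precisely because $\|c\odot x\|_{\infty}\leq M$; hence $\sigma(c^ix^i+M)=c^ix^i+M$ exactly. The residual attention writes $c^ix^i+M$ into columns $D+1,\dots,2D$, and the six-layer feed-forward network, via the Decrementing Lemma~\ref{lemmadecrement}, subtracts the constant $M$ from exactly those columns. This is why the block has $L_{\rm FFN}=6$ rather than a zero feed-forward map, and why the hypothesis $\|c\odot x\|_{\infty}\leq M$ is needed at all; in your version that hypothesis plays no role in correctness, which signals that you are relying on a mechanism the paper's toolkit does not directly license. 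Either prove the variant interaction lemma you implicitly use, or switch to the kernel-plus-offset construction followed by the decrementing FFN, and the argument goes through.
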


\begin{lemma} [Squaring] \label{lemsq}
    Let $M>0$, and $x=(x^1,\cdots,x^D)$ be vector in $\mathbb{R}^D$ such that $\|x\|_{\infty}\leq M$. Let $H$ be an embedding matrix of the form 
    \begin{equation*} 
    H = 
\begin{bmatrix}
    x^1 & \cdots & x^D & \mathbf{0}  \\
    0 & \cdots & \cdots & 0 \\
    \mathcal{I}_1 & \cdots & \cdots & \mathcal{I}_{\ell} \\
    1 & \cdots & \cdots & 1
\end{bmatrix}
\in \mathbb{R}^{d_{embed}\times \ell},
\end{equation*}
where $\ell\geq 2D$. Then there exist three transformer blocks $B_1,B_2,B_3\in\mathcal{B}(D,6,d_{embed})$ such that
\begin{equation}
    B_3\circ B_2\circ B_1(H)=
    \begin{bmatrix}
    x^1 & \cdots & x^D & (x^1)^2 & \cdots & (x^D)^2 & \mathbf{0} \\
    0 & \cdots & \cdots & \cdots & \cdots & \cdots & 0 \\
    \mathcal{I}_1 & \cdots & \cdots & \cdots & \cdots & \cdots & \mathcal{I}_{\ell} \\
    1 & \cdots & \cdots & \cdots & \cdots & \cdots & 1
    \end{bmatrix}
\end{equation}
with $\|\theta_B\|_\infty\leq O(\ell^2 M^2\|H\|^2_{\infty,\infty})$. We say $B_1,B_2,B_3$ implements the square of $x$.
\end{lemma}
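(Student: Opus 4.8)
The plan is to obtain the three blocks directly from the bilinear structure of attention rather than from any ReLU approximation of $t\mapsto t^2$; this is what makes the tolerance exactly $0$. The point is that the pre-activation score $\langle Q h_i, K h_j\rangle$ of an attention head is an exact bilinear form in the entries of $h_i$ and $h_j$, so feeding the coordinate $x^k$ in from both the query side and the key side produces the monomial $(x^k)^2$ with no error, and since $(x^k)^2\ge 0$ the ReLU inside the attention acts as the identity on it. All $D$ coordinates are handled in parallel by using one attention head per coordinate, as permitted by the Interaction Lemma~\ref{IAlemma}, so the head count per block is $O(D)$.

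For $B_1$ (a \emph{copy} block) I would, for each $k$, install an attention head whose query and key are built from the sinusoidal rows $\mathcal{I}_j$ and the constant row so that token $D+k$ attends to token $k$ alone --- its score is a fixed positive constant for the pair $(D+k,k)$ and is driven strictly negative (hence killed by $\sigma$) for every other key --- and whose value matrix reads $x^k$ off the first data row of token $k$. The residual connection then deposits $x^k$ in the second data row of token $D+k$, whose first data row is still $0$; the FFN of $B_1$ is the identity. For $B_2$ (the \emph{squaring} block) I would, for each $k$, install a head in which token $D+k$ attends to itself (again selected by the Interaction Lemma, now with zero relative rotation) with $Q,K$ chosen so that the score equals the square of the second data-row entry of that token, namely $(x^k)^2\ge 0$, on which $\sigma$ is the identity; its value matrix outputs a fixed unit vector on the first data row, so the residual writes $(x^k)^2$ there, leaving token $D+k$ with data rows $\bigl((x^k)^2,\,x^k\bigr)$. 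Finally $B_3$ (a \emph{clean-up} block) has identity attention and a tokenwise FFN that implements $h^{(2)}\mapsto h^{(2)}-\bigl(\sigma(h^{(2)})-\sigma(-h^{(2)})\bigr)=0$ on the second data row and the identity elsewhere; because tokens $1,\dots,D$ carry $0$ in their second data row this does not disturb them, and it zeroes the scratch entries of tokens $D+1,\dots,2D$ while preserving $(x^k)^2$ in their first row. The resulting embedding matrix is exactly the one in the statement.

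For the parameter count, each block uses $m=D$ heads, an $O(1)$-layer width-$5$ feed-forward net, and $d_{embed}=5$, so the only nontrivial quantity is $\|\theta_B\|_\infty$. The entries of $Q,K,V$ are products of the $O(\ell)$-scale rotation and threshold coefficients supplied by the Interaction Lemma with the scale-$M$ data readout, and the same bookkeeping that yields $O(\ell^2M^2\|H\|_{\infty,\infty}^2)$ for Lemmas~\ref{lemsumD}--\ref{lemcmulti} applies here.

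I expect the main obstacle to be the locality and exactness of $B_2$: one must choose $Q,K$ so that \emph{simultaneously} the selected self-pair contributes exactly $(x^k)^2$ and every other (token, key) pair has a nonpositive pre-activation, so that no spurious bilinear monomial $x^k x^j$ with $j\ne k$ survives the ReLU and gets added by the residual. This forces the interaction-row contribution to dominate the $O(M^2)$-size data contribution on every off-pair, which is exactly why the weight magnitude must scale like $\ell^2 M^2\|H\|_{\infty,\infty}^2$; verifying this, together with checking that the sinusoidal and constant rows and the entries $x^1,\dots,x^D$ pass through all three blocks unchanged, is the bulk of the work.
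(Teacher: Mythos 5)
Your construction is correct, and it rests on the same core mechanism as the paper's proof: use the Interaction Lemma~\ref{IAlemma} to make token $D+k$ attend to a single designated token, exploit the fact that the pre-activation score is an exact bilinear form so that $(x^k)^2$ appears with zero error and passes through the ReLU because it is nonnegative, and run $D$ heads in parallel per block. The difference is purely in the bookkeeping of the scratch copy and the residual: the paper copies $x^k$ into the \emph{first} data row of column $D+k$ (via Lemma~\ref{lemcmulti}, i.e.\ score $x^k+M$ plus the Decrementing Lemma~\ref{lemmadecrement} to handle negative $x^k$), accepts the residual pollution $(x^k)^2+x^k$ after the squaring block, and spends the third block on another attention pass (score $-x^k+M$) plus a decrement to strip off the stray $x^k$; you instead park the copy in the \emph{second} data row so the first row of token $D+k$ stays zero until the square is written, and your third block is attention-free FFN cleanup that negates row two. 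Both give three blocks with $D$ heads and the same weight scaling. One caveat to patch: your $B_1$ as described (constant score with a value matrix that carries $x^k$) is not literally an instance of Lemma~\ref{IAlemma}, whose conclusion fixes the head output to $\sigma(\langle Q^{data}h_{t_1},K^{data}h_{t_2}\rangle)e_i$; you should either remark that its proof goes through verbatim with a general value matrix (the selectivity argument lives entirely in $Q,K$), or stay inside the stated toolkit by depositing $\sigma(x^k+M)=x^k+M$ into $e_2$ and then subtracting $M$ from row two of columns $D+1,\dots,2D$ with Lemma~\ref{lemmadecrement} --- which also disposes of the sign issue when $x^k<0$. Similarly, if you prefer not to rely on the self-pair $t_1=t_2$ in $B_2$, you can have token $D+k$ attend to token $k$ with $Q^{data}$ reading row two of the query and $K^{data}$ reading row one of the key, which yields the same exact score $(x^k)^2$.
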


\begin{lemma} [Componentwise Product] \label{lempp}
     Let $M>0$, $x=(x^1,\cdots,x^D)$ and $y=(y^1,\cdots,y^D)$ be vectors in $\mathbb{R}^D$ be such that $\|x\odot y\|_{\infty}+\|x\|_{\infty}+\|y\|_{\infty}\leq M$. Let $H$ be an embedding matrix of the form 
    \begin{equation} \label{H2}
    H = 
\begin{bmatrix} 
    x^1 & \cdots & x^D & y^1 & \cdots & y^D & \mathbf{0}  \\
    0 & \cdots & \cdots & \cdots  & \cdots &  \cdots  & 0 \\
    \mathcal{I}_1 & \cdots & \cdots & \cdots  &  \cdots & \cdots & \mathcal{I}_{\ell} \\
    1 &  \cdots & \cdots & \cdots & \cdots & \cdots & 1
\end{bmatrix}
\in \mathbb{R}^{d_{embed}\times \ell},
\end{equation}
where $\ell\geq 3D$. Then there exist three transformer blocks $B_1,B_2,B_3\in\mathcal{B}(D,6,d_{embed})$ such that
\begin{equation}
    B_3\circ B_2\circ B_1(H)=
    \begin{bmatrix}
    x^1 & \cdots & x^D & y^1 & \cdots & y^D & x^1y^1 & \cdots & x^Dy^D & \mathbf{0} \\
    0 & \cdots & \cdots & \cdots & \cdots & \cdots & \cdots & \cdots& \cdots & 0 \\
    \mathcal{I}_1 & \cdots & \cdots &\cdots & \cdots &  \cdots & \cdots & \cdots & \cdots & \mathcal{I}_{\ell} \\
    1 & \cdots & \cdots & \cdots & \cdots & \cdots& \cdots & \cdots & \cdots &  1
    \end{bmatrix}
\end{equation}
with $\|\theta_B\|_\infty\leq O(\ell^2 M^2\|H\|^2_{\infty,\infty})$. We say $B_1,B_2,B_3$ implements the componentwise product between $x$ and $y$.
\end{lemma}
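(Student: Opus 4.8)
The statement to prove is Lemma~\ref{lempp}, the componentwise product of two vectors $x,y \in \mathbb{R}^D$ stored in adjacent token blocks of an embedding matrix.

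\medskip

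\textbf{Proof plan.} The plan is to reduce the componentwise product $x^i y^i$ to the squaring operation already established in Lemma~\ref{lemsq}, using the polarization-type identity $x^i y^i = \tfrac12\bigl((x^i+y^i)^2 - (x^i)^2 - (y^i)^2\bigr)$, while implementing the three ``squarings'' in parallel across tokens so that only a constant number of transformer blocks is needed. Concretely, I would proceed in the following steps. First, apply one transformer block (Lemma~\ref{lemsumD}/Lemma~\ref{lemadd} machinery, i.e., the Interaction Lemma~\ref{IAlemma}) to write the $D$ sums $x^i + y^i$ into a fresh block of $D$ empty tokens; the embedding now holds $x^1,\dots,x^D, y^1,\dots,y^D, (x^1+y^1),\dots,(x^D+y^D)$ and the padding zeros, with $\ell \ge 3D$ leaving room for all of this plus the output block. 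Second, apply the three blocks $B_1,B_2,B_3$ of Lemma~\ref{lemsq} \emph{simultaneously} to the $3D$ data tokens now present, producing $(x^i)^2$, $(y^i)^2$, and $(x^i+y^i)^2$ in three further blocks of tokens — here the key point is that the squaring construction of Lemma~\ref{lemsq} acts token-wise via pairwise interactions, so running it on $3D$ tokens at once costs the same depth ($3$ blocks) as running it on $D$ tokens. Third, apply one more block implementing the linear combination: using constant multiplication (Lemma~\ref{lemcmulti}) and constant addition / token sums (Lemmas~\ref{lemadd}, \ref{lemsumD}) to form $\tfrac12\bigl((x^i+y^i)^2 - (x^i)^2 - (y^i)^2\bigr) = x^i y^i$ in the designated output block, and finally a clean-up block (if necessary) to zero out the scratch tokens $(x^i+y^i)$, $(x^i)^2$, $(y^i)^2$ so that only $x,y,x\odot y$ and padding remain. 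Throughout, the positional encoding rows $\mathcal{I}_j$ and the constant row of $1$'s are carried along unchanged by the residual connections, as in the earlier lemmas.

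\medskip

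The parameter bookkeeping then gives $L_{\rm T} = O(1)$ (a constant number of blocks: one for the sums, three for the parallel squarings, one or two for the linear combination and cleanup), $m_{\rm T} = O(D)$ attention heads per block (each of the $O(D)$ pairwise token interactions is realized by one head via Lemma~\ref{IAlemma}), and $L_{\rm FFN}, w_{\rm FFN} = O(1)$ inherited directly from the FFN sizes in Lemmas~\ref{lemsq}, \ref{lemcmulti}, \ref{lemadd}. The weight-magnitude bound $\|\theta_B\|_\infty \le O(\ell^2 M^2 \|H\|_{\infty,\infty}^2)$ follows from the corresponding bounds in the component lemmas, noting that the intermediate quantities $(x^i+y^i)$, $(x^i)^2$, etc., are all bounded by $O(M)$ and $O(M^2)$ under the hypothesis $\|x\odot y\|_\infty + \|x\|_\infty + \|y\|_\infty \le M$, so the effective $\|H\|_{\infty,\infty}$ stays within a constant factor throughout.

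\medskip

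\textbf{Main obstacle.} The routine-but-delicate part is the token-index bookkeeping: one must verify that the ``empty'' token slots required by each invoked lemma (Lemma~\ref{lemsumD} needs $\ell \ge D+1$, Lemma~\ref{lemadd}/\ref{lemcmulti} need $\ell \ge 2D$, Lemma~\ref{lemsq} needs $\ell \ge 2D$) are actually available at each stage, and that the positional-encoding interaction scheme lets the $i$-th input token talk to exactly the right partner token (e.g., $x^i$ to $y^i$, and later $(x^i+y^i)$ to itself) without spurious cross-talk — this is exactly what Interaction Lemma~\ref{IAlemma} is designed to guarantee, so the argument is a careful application rather than a new idea. The only genuinely nontrivial check is confirming that the three squaring subroutines can be overlaid in parallel on disjoint token ranges within a single triple of blocks $B_1, B_2, B_3$; this works because the attention heads for disjoint interaction pairs can be taken in direct sum, multiplying the head count by a constant ($3\times$) while keeping the depth fixed, and because the residual structure keeps untouched tokens intact.
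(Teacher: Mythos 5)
Your polarization idea ($x^iy^i = \tfrac12((x^i+y^i)^2-(x^i)^2-(y^i)^2)$ reduced to Lemma~\ref{lemsq}) is workable in principle, but it does not prove the lemma as stated, and the gap is in the resource accounting. The statement guarantees only $\ell\geq 3D$: beyond the tokens holding $x$ and $y$ there are as few as $D$ free columns, yet your construction must simultaneously store $x$, $y$, $x+y$, and then $(x)^{\circ 2}$, $(y)^{\circ 2}$, $(x+y)^{\circ 2}$ (recall Lemma~\ref{lemsq} itself writes squares into a \emph{fresh} block of $D$ columns after a copy step), which needs on the order of $6D$--$7D$ columns. It also needs five to six blocks rather than the stated three, and running the squaring subroutine on $3D$ tokens in parallel puts the blocks in $\mathcal{B}(3D,6,d_{embed})$, not $\mathcal{B}(D,6,d_{embed})$. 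Finally, with $\ell=3D$ the scratch values $(x^i+y^i)$, $(x^i)^2$, $(y^i)^2$ have nowhere to persist and cannot be removed by the decrementing FFN of Lemma~\ref{lemmadecrement}, since they are data-dependent rather than known constants; clearing them would require additional gating blocks, further inflating the depth. So the "only nontrivial check" you flag (overlaying three squarings in parallel) is not the real obstruction --- the token and block budget is.

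The paper's proof avoids polarization entirely by using the fact that an attention head is already bilinear in the two tokens it couples: with $Q^{data}$ and $K^{data}$ each selecting the data row, $\sigma(\langle Q^{data}h_{t_1},K^{data}h_{t_2}\rangle)$ evaluates directly to $x^iy^i$ (shifted by $M$ via the constant row to keep the ReLU argument nonnegative). Concretely, $B_1$ copies $x$ into columns $2D{+}1,\dots,3D$ (Lemma~\ref{lemcmulti} with $c=(1,\dots,1)$), $B_2$ has the copied $x^i$ attend to $y^i$ to produce $x^iy^i+x^i+M$ and subtracts $M$ with the decrementing FFN, and $B_3$ removes the residual $x^i$ by an interaction with column $i$ using $c=(-1,\dots,-1)$. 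This fits exactly in $\ell\geq 3D$, three blocks, and $D$ heads per block, matching the statement; you should adopt this direct bilinear route (it is the same mechanism as $B_2$ in Lemma~\ref{lemsq}, just with the partner token taken to be $y^i$ instead of the copy of $x^i$).
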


\begin{lemma} [Componentwise $r$-th Power] \label{lemrpower}
    Let $M>0$, and $r$ be some integer such that $2^{s-1}<r\leq 2^s$ for some integer $s\geq 1$. Let $x=(x^1,\cdots,x^D)\in\mathbb{R}^D$ such that $ \max_{i,j=1,\cdots,r}\{\|x\|^i_{\infty}+\|x\|^j_{\infty}\}<M$, and $H$ be an embedding matrix of the form 
    \begin{equation*} 
    H = 
\begin{bmatrix}
    x^1 & \cdots & x^D & \mathbf{0}  \\
    0 & \cdots & \cdots & 0 \\
    \mathcal{I}_1 & \cdots & \cdots & \mathcal{I}_{\ell} \\
    1 & \cdots & \cdots & 1
\end{bmatrix}
\in \mathbb{R}^{d_{embed}\times \ell},
\end{equation*}
where $\ell\geq 2^sD$. Then there exists a sequence of transformer blocks $B_i\in\mathcal{B}(2^{\lfloor (i-1)/3 \rfloor}D,6,d_{embed})$, $i=1,\cdots,3s$, such that
\begin{equation}
    B_{3s}\circ B_{3s-1}\circ\cdots\circ B_1(H)=
    \begin{bmatrix}
    x^1 & \cdots & x^D & \cdots & (x^1)^r & \cdots & (x^D)^r & \mathbf{0} \\
    0 & \cdots & \cdots & \cdots & \cdots & \cdots & \cdots & 0 \\
    \mathcal{I}_1 & \cdots & \cdots & \cdots & \cdots & \cdots & \cdots & \mathcal{I}_{\ell} \\
    1 & \cdots & \cdots & \cdots & \cdots & \cdots & \cdots & 1
    \end{bmatrix}\in\mathbb{R}^{d_{embed}\times\ell}
\end{equation}
with $\|\theta_B\|_\infty\leq O(\ell^2 M^2\|H\|^2_{\infty,\infty})$. We say $B_1,\cdots,B_{3s}$ implements the componentwise $r$-th power of $x$.
\end{lemma}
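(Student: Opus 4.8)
The plan is to realize $x\mapsto x^{\circ r}$ by binary exponentiation: at each ``stage'' I double the list of componentwise powers of $x$ stored in the embedding matrix, using three transformer blocks per stage, so that after $s=\lceil\log_2 r\rceil$ stages the data rows hold all of $x^{\circ 1},x^{\circ 2},\dots,x^{\circ r}$ (in particular $x^{\circ r}$). Throughout, the embedding matrix keeps the layout of \eqref{Hmatrix}: a first-row data block, the interaction rows $\mathcal I_j$, the constant row, and trailing zeros.

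I would argue by induction on the stage $k=1,\dots,s$, maintaining the invariant that after blocks $B_1,\dots,B_{3k}$ the data block equals $\bigl(x^{\circ 1},x^{\circ 2},\dots,x^{\circ m_k}\bigr)$ with $m_k=\min(2^k,r)$. Stage $1$ is exactly Lemma~\ref{lemsq}, which produces $(x,x^{\circ 2})$ from $x$ with three blocks in $\mathcal B(D,6,d_{embed})$. For the inductive step, suppose after stage $k-1$ the data block is $x^{\circ 1},\dots,x^{\circ m}$ with $m=2^{k-1}<r$, and set $m'=\min(m,\,r-m)$; note $m'=2^{k-1}$ for $k<s$ and $m'=r-2^{s-1}\le 2^{s-1}$ at the last stage. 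Using the identity $x^{\circ(m+i)}=x^{\circ m}\odot x^{\circ i}$ for $i=1,\dots,m'$, stage $k$ only needs to form the $m'D$ scalar products of the top block $x^{\circ m}$ against $x^{\circ 1},\dots,x^{\circ m'}$ and write them into $m'$ fresh $D$-column blocks. I would do this by mirroring the proof of Lemma~\ref{lempp}: the first of the three blocks copies the relevant blocks into the fresh columns via the Interaction Lemma~\ref{IAlemma}, and the remaining two blocks realize the componentwise products in place. Since one attention head produces one scalar product (as in Lemma~\ref{lempp}, where $D$ heads give $D$ products), stage $k$ uses $m'D\le 2^{k-1}D$ heads, i.e.\ $B_i\in\mathcal B\bigl(2^{\lfloor(i-1)/3\rfloor}D,6,d_{embed}\bigr)$ for $i\in\{3k-2,3k-1,3k\}$. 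After stage $k$ the data block is $\bigl(x^{\circ 1},\dots,x^{\circ\min(2^k,r)}\bigr)$, so the invariant persists; after stage $s$ it is $\bigl(x^{\circ 1},\dots,x^{\circ r}\bigr)$, the claimed output.

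For the bookkeeping I would observe: there are $s$ stages of three blocks, hence $3s=O(\ln r)$ blocks; the maximal head count is $2^{s-1}D=O(rD)$; each block has $L_{\rm FFN}=6$, $w_{\rm FFN}=5$ as in Lemmas~\ref{lemsq}--\ref{lempp}. Since $\ell\ge 2^sD$, the accumulated data columns (at most $2^sD$) together with the interaction and constant rows fit within the $\ell$ available columns, possibly after reusing scratch columns. Every power that appears has degree at most $r$, and every product formed has degree at most $r$ as well (for $k<s$ the largest degree formed is $2^k\le 2^{s-1}<r$, and at stage $s$ I truncate at degree $r$); hence under the hypothesis $\max_{i,j=1,\dots,r}\{\|x\|_\infty^i+\|x\|_\infty^j\}<M$ every intermediate entry stays bounded in magnitude by $M$, and each block inherits the weight estimate $\|\theta_B\|_\infty\le O(\ell^2M^2\|H\|_{\infty,\infty}^2)$ directly from Lemmas~\ref{lemsq} and~\ref{lempp}.

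The step I expect to be the main obstacle is the column-and-attention routing inside each doubling stage: the interaction-term entries must be chosen so that the attention heads of $B_{3k-2},B_{3k-1},B_{3k}$ target precisely the pairs $\bigl(x^{\circ m},x^{\circ i}\bigr)$, deposit the products into the reserved fresh columns, and leave all previously computed powers untouched, all without exceeding $2^{k-1}D$ heads at stage $k$ and while arranging the copies needed to feed the product blocks. This is a larger-scale version of the routing already handled in Lemmas~\ref{lemsq} and~\ref{lempp}; once it is set up, per-stage correctness and the weight bounds follow as in those lemmas, and the induction closes.
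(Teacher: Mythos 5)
Your proposal is correct and follows essentially the same route as the paper's proof: a doubling (binary-exponentiation) induction with three blocks per stage --- copy via constant multiplication, multiply against the current top power as in Lemma~\ref{lempp}, then subtract the copies --- starting from the squaring construction of Lemma~\ref{lemsq}, with head counts $2^{\lfloor(i-1)/3\rfloor}D$ and the weight bound inherited from the Interaction Lemma. The only (harmless) difference is that you truncate at degree $r$ in the final stage, whereas the paper simply reduces to the case $r=2^{s}$ and generates all powers up to $2^{s}$.
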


\begin{lemma} [Power Series and Division] \label{lemdiv}
    Let $M>0$, and $r$ be some integer such that $2^{s-1}<r\leq 2^s$ for some integer $s\geq 1$. Let $x=(x^1)\in\mathbb{R}$ such that $\max_{i,j=1,\cdots,r}\{|x|^i+|x|^j\}<M$, and $H$ be an embedding matrix of the form 
    \begin{equation*} 
    H = 
\begin{bmatrix}
    x^1 & 0 & \cdots & 0  \\
    0 & \cdots & \cdots & 0 \\
    \mathcal{I}_1 & \cdots & \cdots & \mathcal{I}_{\ell} \\
    1 & \cdots & \cdots & 1
\end{bmatrix}
\in \mathbb{R}^{d_{embed}\times \ell},
\end{equation*}
where $\ell\geq 2^s$. Then there exists a sequence of transformer blocks $B_i\in\mathcal{B}(2^{\lfloor (i-1)/3 \rfloor},6,d_{embed})$, $i=1,\cdots,3s$, $B_{3s+1}\in\mathcal{B}(r,6,d_{embed})$ such that
\begin{equation*}
B_{3s+1}\circ\cdots\circ B_{1}(H)=
    \begin{bmatrix}
    (x^1)^1 & \cdots &(x^1)^r & \sum_{i=1}^r (x^1)^i & \mathbf{0} \\
    0 & \cdots & \cdots & \cdots & 0\\
    \mathcal{I}_1 & \cdots & \cdots & \cdots  & \mathcal{I}_{\ell} \\
    1 & \cdots &\cdots & \cdots & 1 \\ \end{bmatrix}
\end{equation*}
with $\|\theta_B\|_\infty\leq O(\ell^2 M^2\|H\|^2_{\infty,\infty})$. We say $B_1,\cdots,B_{3s+1}$ implements power series of scalar $x$ up to $r$ term.

Moreover, if $x^1\in [c_1,c_2]$ with $0<c_1<c_2$. Let $c$ be a constant such that $1-cx^1\in(-1,1)$. Then there exists a sequence of transformer blocks
$B_1,B_2,B_{3s+4},B_{3s+5}\in\mathcal{B}(1,6,d_{embed})$, $B_{3s+3}\in\mathcal{B}(r,6,d_{embed})$, and  $B_i\in\mathcal{B}(2^{\lfloor (i-3)/3 \rfloor},6,d_{embed})$, for $i=3,\cdots,3s+2$, such that
\begin{equation*}
B_{3s+5}\circ\cdots\circ B_{1}(H)=
    \begin{bmatrix}
    x^1 & -cx^1 & 1-cx^1 &  \cdots & \sum_{i=0}^r(1-cx^1)^i & c\sum_{i=0}^r(1-cx^1)^i & \mathbf{0}  \\
    0 & \cdots & \cdots & \cdots & \cdots & \cdots & 0\\
    \mathcal{I}_1 & \cdots & \cdots  & \cdots & \cdots &  \cdots & \mathcal{I}_{\ell} \\
    1 & \cdots & \cdots & \cdots & \cdots & \cdots & 1 \\ 
    \end{bmatrix}
\end{equation*}
with $\|\theta_B\|_\infty\leq O(\ell^2 M^2\|H\|^2_{\infty,\infty})$. We say $B_1,\cdots,B_{3s+5}$ approximate the division over $x$ with tolerance $(1-cx^1)^{r+1}/x^1$, i.e.,
\[\left|\frac{1}{x}-c\sum_{i=0}^r(1-cx^1)^i\right|\leq \frac{(1-cx^1)^{r+1}}{x^1}.\]
\end{lemma}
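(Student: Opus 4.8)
The plan is to prove the final statement of Lemma~\ref{lemdiv} in two stages, building on the power-series machinery established in the first part of the lemma. The first stage constructs a sequence of transformer blocks that produces, on dedicated tokens, the values $x^1$, $-cx^1$, $1-cx^1$, the powers $(1-cx^1)^i$ for $i=1,\dots,r$, their partial sum $\sum_{i=0}^r (1-cx^1)^i$, and finally $c\sum_{i=0}^r(1-cx^1)^i$. The second stage is purely analytic: bounding the truncation error of the geometric series.

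First I would handle the block construction. Starting from the embedding matrix $H$ carrying $x^1$, I apply Lemma~\ref{lemcmulti} (constant multiplication, one block $B_1$) to write $-cx^1$ on a new token, and then Lemma~\ref{lemadd} (constant addition, one block $B_2$) to form $1-cx^1 = 1 + (-cx^1)$; since $1-cx^1 \in (-1,1)$, all further powers stay bounded by $1$, so the magnitude hypotheses of the subsequent lemmas are satisfied with the same $M$. I then invoke exactly the construction from the first half of Lemma~\ref{lemdiv}, applied to the scalar $1-cx^1$ in place of $x^1$: a block sequence $B_3,\dots,B_{3s+2}$ implementing the componentwise $r$-th power machinery (Lemma~\ref{lemrpower}, giving the $3s$ blocks with the stated head counts $\mathcal{B}(2^{\lfloor(i-3)/3\rfloor},6,d_{embed})$) followed by one summation block $B_{3s+3}\in\mathcal{B}(r,6,d_{embed})$ that forms $\sum_{i=0}^r (1-cx^1)^i$ via Lemma~\ref{lemsumD} — here I must be slightly careful to include the $i=0$ term, which is just the constant token equal to $1$ already present in the fifth row, or alternatively adjust the sum block to add $r+1$ tokens. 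Finally blocks $B_{3s+4}$ (copy/rename, or a trivial application of Lemma~\ref{lemcmulti} with constant $1$) and $B_{3s+5}$ (Lemma~\ref{lemcmulti} with constant $c$) yield the last token $c\sum_{i=0}^r (1-cx^1)^i$. Counting: $B_1,B_2,B_{3s+4},B_{3s+5}\in\mathcal{B}(1,6,d_{embed})$, $B_{3s+3}\in\mathcal{B}(r,6,d_{embed})$, and $B_i\in\mathcal{B}(2^{\lfloor(i-3)/3\rfloor},6,d_{embed})$ for $i=3,\dots,3s+2$, which matches the claim. The weight bound $\|\theta_B\|_\infty \le O(\ell^2 M^2\|H\|^2_{\infty,\infty})$ is inherited from each constituent lemma, since every block in the chain already satisfies this bound and we take the maximum over a bounded-length chain.

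For the error estimate I would simply use the finite geometric series identity: with $y := 1-cx^1 \in (-1,1)$ we have $c\sum_{i=0}^r y^i = c\cdot\frac{1-y^{r+1}}{1-y} = c\cdot\frac{1-y^{r+1}}{cx^1} = \frac{1-y^{r+1}}{x^1} = \frac{1}{x^1} - \frac{y^{r+1}}{x^1}$, so that
\begin{equation*}
\left|\frac{1}{x^1} - c\sum_{i=0}^r (1-cx^1)^i\right| = \frac{|1-cx^1|^{r+1}}{|x^1|} = \frac{(1-cx^1)^{r+1}}{x^1},
\end{equation*}
where the last equality uses $x^1 \in [c_1,c_2]$ with $c_1>0$ and $(1-cx^1)^{r+1}\ge 0$ when $r$ is odd or $1-cx^1\ge 0$; in general one keeps the absolute value, matching the statement's intent. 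This gives exactly the advertised tolerance.

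The main obstacle is bookkeeping rather than mathematics: I must verify that at each stage the relevant token layout (data terms in the first two rows, interaction terms $\mathcal{I}_j$ in rows three–four, constant $1$ in row five) is preserved so that each successive lemma is applicable, and that the number of hidden tokens $\ell \ge 2^s$ suffices to host all intermediate powers and partial sums without collision. A secondary subtlety is ensuring the constant $c$ can be chosen with $1-cx^1\in(-1,1)$ uniformly over $x^1\in[c_1,c_2]$ — taking $c = 1/c_2$ (or any $c\in(0,2/c_2)$) works, and this choice also keeps $c$ bounded so the final constant-multiplication block respects the weight bound. I expect the write-up to be a careful but routine concatenation of the already-established building blocks in Table~\ref{tableBAO}.
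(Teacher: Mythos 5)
Your proposal is correct and follows essentially the same route as the paper: constant multiplication and addition to form $1-cx^1$, the power-series construction (repeated squaring plus a summation block) applied to $1-cx^1$, two final blocks to add the $i=0$ term and multiply by $c$, and the geometric-series truncation bound $\left|\frac{1}{x^1}-c\sum_{i=0}^r(1-cx^1)^i\right|=\left|\frac{(1-cx^1)^{r+1}}{x^1}\right|$. The only cosmetic difference is where the $+1$ is inserted (the paper uses $B_{3s+4}$ via the constant-addition lemma rather than folding it into the summation block), which does not change the argument.
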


With these basic arithmetic operations, we can prove our main results.

\subsection{Proof of Theorem~\ref{thmApprox}}

We prove Theorem~\ref{thmApprox} in two steps. The first step is to approximate $f$ by a piecewise-constant oracle approximator denoted by $\hat{f}$. The second step is to implement the oracle approximator  $\hat{f}$ by a transformer neural network.

\begin{proof} [Proof of Theorem~\ref{thmApprox}]
    
\vspace{2mm}
\noindent
$\bullet$ {\bf Oracle Approximator}\
\\
\noindent
In this proof, we consider the piecewise constant oracle approximator constructed by \citet{Cloninger21}.
Let $Z=\{z_1,\cdots,z_K\}$ be a maximal separated $\delta$ net of $\mathcal{M}$ with respect to $d_{\mathcal{\mathcal{M}}}$. According to \citet[Lemma 6.1]{Cloninger21}, $K \le 3^d {\rm Vol}(\mathcal{M})d^{\frac{d}{2}}\delta^{-d}$. We define the geodesic ball as $U_i:=\{z\in\mathcal{M}: d_{\mathcal{M}}(z,z_i)\leq\delta\}$.  Then the collection $\{U_i\}_{i=1}^K$ covers $\mathcal{M}$ and the preimages $\{\pi^{-1}_{\mathcal{M}}(U_i)\}_{i=1}^K$ covers the approximation domain $\mathcal{M}(q)$.  

For any partition of unity $\{\eta_i(x)\}_{i=1}^K$ subordinate to  the cover $\{\pi^{-1}_{\mathcal{M}}(U_i)\}_{i=1}^K$, we can decompose $f$ as 
$\textstyle f(x)=\sum_{i=1}^K f(x)\eta_i(x).$
Following the idea in \citet{Cloninger21}, we approximate $f$ by the piecewise-constant function
\begin{equation} \label{hatf}
\textstyle
\hat{f}(x)=\sum_{i=1}^K g(z_i)\eta_i(x).   
\end{equation}  
where each $\eta_i$ is constructed as follows. Let $P(v)\in\mathbb{R}^{D\times d}$ be the matrix containing columnwise orthonormal basis for the tangent space $\mathcal{M}$ at $v$. Let $p:=\frac{1}{2}(1+q)$ and $h:=\frac{6}{1-qp^{-1}}$. Define
\begin{align}
\tilde\eta_i(x) &: 
    =\sigma\left(1-\left(\frac{\|x-z_i\|_2}{p\tau_{\mathcal{M}}(z_i)}\right)^2-\left(\frac{\|P(z_i)^{\top}(x-z_i)\|_2}{h\delta}\right)^2\right)
    \nonumber
    \\
    \eta_i(x) &:={\tilde\eta_i(x)}/{\|\tilde\eta(x)\|_1}
\label{eta}
\end{align}
for $i=1,\ldots,K$, and define the vectors:
\begin{align}
\tilde\eta(x) &=(\tilde\eta_1(x),\cdots,\tilde\eta_K(x)), \nonumber
\\
\eta(x) &=(\eta_1(x),\cdots,\eta_K(x)). \label{eq:eta}
\end{align}
The ellipsoidal regions for $\tilde\eta_i> 0$ are illustrated in Figure~\ref{coveringfig}.
In this construction, $\{\eta_i\}_{i=1}^K$ forms a partition of unity subordinate to  the cover $\{\pi^{-1}_{\mathcal{M}}(U_i)\}_{i=1}^K$ of $\mathcal{M}$.
It is proved in  \citet[Proposition 6.3]{Cloninger21} that  $\{\eta_i\}_{i=1}^K$ satisfies the localization property
\begin{equation} \label{loc}
    \textstyle \sup_{x\in\mathcal{M}(q), \eta_i(x)\neq 0} d_{\mathcal{M}(q)}(x,z_i)\leq O(\delta),
\end{equation}
where $O(\cdot)$ hides the constant term in $q$.
Furthermore, $\|\tilde\eta(x)\|_1$ is uniformly bounded above and bounded away from zero. This property is useful when estimating the depth of transformer network (see  Remark~\ref{rmkdivision}). We then have 
\begin{align*}
    |f(x)-\hat{f}(x)|&=\left|\sum_{i=1}^K g(\pi_{\mathcal{M}}(x))\eta_i(x)-\sum_{i=1}^Kg(z_i)\eta_i(x)\right| \\ &\leq\sum_{i=1}^K\left| g(\pi_{\mathcal{M}}(x))-g(z_i)\right|\eta_i(x) \\
    &\leq L\sum_{i=1}^K d^{\alpha}_{\mathcal{M}}(\pi_{\mathcal{M}}(x),z_i)\eta_i(x) \leq O(\delta^\alpha) 
\end{align*}
where $O(\cdot)$ hides the constant terms in $q$ and $L$.

\begin{figure}[t]
    \centering   \includegraphics[width=0.45\linewidth]{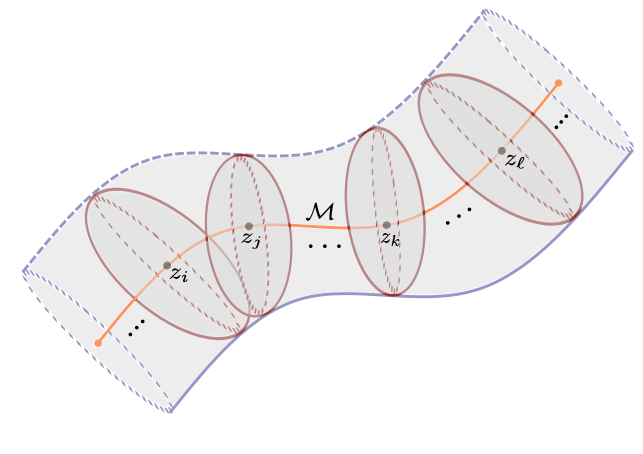}
    \vspace{-8mm}
    \caption{\small The covering of tubular region $\mathcal{M}(q)$, where each ellipsoid represents the region $\{x:\tilde\eta_i(x)> 0\}$.}
    \label{coveringfig}
\end{figure}
\vspace{3mm}
\noindent
$\bullet$ {\bf Implementing the Oracle Approximator by Transformers}\
\\
\noindent
 Since each $\tilde\eta_i(x)$ in (\ref{eta}) is composition of basic arithmetic operations, we can represent it without error by using transformer network. The first result in this subsection establishes the result for representing each $\tilde\eta_i(x)$.

\begin{proposition} \label{repetatildei}
Suppose the Assumption~\ref{assumpm} holds. Let $\{\tilde\eta_i(x)\}_{i=1}^K$ be defined as \eqref{eta}. Then for each fixed $i$, there exists a transformer network ${\rm T}(\theta;\cdot)\in\mathcal{T}(L_T, m_T,d_{embed},\ell,L_{\rm FFN},w_{\rm FFN},R,\kappa)$ with parameters 
\begin{align*}
    &\text{ } L_T=O(d),\text{ } m_T=O(D), \text{ } d_{embed}=5, \text{ } \ell\geq O(D), \\
    &\text{ } L_{\rm FFN}=6,\text{ } w_{\rm FFN}=5, \text{ } \kappa=O(D^2\delta^{-8}) 
\end{align*}
such that 
\begin{equation}
    {\rm T}(\theta;x)=\tilde\eta_i(x)
\end{equation}
for any $x\in [0,1]^D$. The notation $O(\cdot)$ hides the dependency on $d,q,\tau_{\mathcal{M}}$. Importantly, the $O(\cdot)$ dependency for $L_{\rm T}$ is only on some absolute constants.
\end{proposition}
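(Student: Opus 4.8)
\textbf{Proof proposal for Proposition~\ref{repetatildei}.}

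The plan is to decompose $\tilde\eta_i(x)$ into a short sequence of elementary operations, each of which is already realized exactly by a constant-depth transformer block (or small stack of blocks) via the lemmas in Table~\ref{tableBAO}, and then chain these implementations together, tracking how the parameters $L_T, m_T, \ell, \kappa$ accumulate. Recall that
\[
\tilde\eta_i(x)=\sigma\!\left(1-\frac{\|x-z_i\|_2^2}{(p\tau_{\mathcal{M}}(z_i))^2}-\frac{\|P(z_i)^\top(x-z_i)\|_2^2}{(h\delta)^2}\right),
\]
so the target is a composition of: (i) an affine map $x\mapsto x-z_i$ (constant addition, Lemma~\ref{lemadd}); (ii) a linear map $x-z_i\mapsto P(z_i)^\top(x-z_i)$, which since $d_{embed}=5$ must be built tokenwise as a sum of constant multiplications and token sums (Lemmas~\ref{lemcmulti} and \ref{lemsumD}), producing the $d$ coordinates of the projected vector on separate tokens; (iii) componentwise squaring of the $D$ entries of $x-z_i$ and of the $d$ entries of $P(z_i)^\top(x-z_i)$ (Lemma~\ref{lemsq}); (iv) summing those squares into the two squared norms, rescaling by the fixed constants $(p\tau_{\mathcal{M}}(z_i))^{-2}$ and $(h\delta)^{-2}$ (Lemmas~\ref{lemsumD}, \ref{lemcmulti}) and adding the constant $1$ with the two negative signs folded in (Lemmas~\ref{lemadd}, \ref{lemcmulti}); and finally (v) applying one ReLU, which a single FFN layer inside the last block produces for free.

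The key steps, in order: First, I would fix $i$ and write down the exact embedding matrix $H$ of the form \eqref{Hmatrix} carrying $x=(x^1,\dots,x^D)$ in its first row, the positional encodings $\mathcal{I}_j$, and the constant row. Second, apply Lemma~\ref{lemadd} to append the tokens $x^1-z_i^1,\dots,x^D-z_i^D$; this costs $O(1)$ blocks, $O(D)$ heads, and bumps $\ell$ by $D$. Third, realize the linear map $P(z_i)^\top$: each of its $d$ output coordinates is $\sum_{k=1}^D P(z_i)_{k,j}(x^k-z_i^k)$, which by constant multiplication on each token followed by a token-sum (Lemma~\ref{lemsumD}) takes $O(1)$ depth and $O(D)$ heads per coordinate; doing all $d$ coordinates — in parallel across disjoint token blocks, using the Interaction Lemma to keep heads localized — keeps depth $O(1)$ while heads become $O(dD)$ and tokens grow by $O(dD)$. (If one prefers to charge $d\le D$ this is still $O(D)$ heads up to the hidden $d$-dependence in $O(\cdot)$, consistent with the claimed $m_T=O(D)$.) Fourth, square all relevant tokens via Lemma~\ref{lemsq} ($O(1)$ depth, $O(D)$ heads). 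Fifth, sum the squares (Lemma~\ref{lemsumD}), multiply by the fixed scalars and add $1$ with the appropriate signs (Lemmas~\ref{lemcmulti}, \ref{lemadd}); all $O(1)$ depth, $O(D)$ heads. Sixth, the outermost $\sigma$ is exactly one ReLU activation, absorbed into the FFN of the final transformer block, so no extra depth. Reading the parameter columns of Table~\ref{tableBAO}, the total is $L_T=O(d)$ — the $d$ coming only from the fact that in the cleanest accounting the $d$ projected-coordinate computations, while parallelizable in heads, may be laid out sequentially for simplicity giving an $O(d)$ depth, which is the source of the stated $L_T=O(d)$ and is why the remark emphasizes the depth constant is absolute — together with $m_T=O(D)$, $d_{embed}=5$, $\ell=O(D)$ (up to the hidden $d,q,\tau_{\mathcal{M}}$ dependence), $L_{\rm FFN}=6$, $w_{\rm FFN}=5$.

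Finally, I must track the weight magnitude $\kappa$. Each lemma reports $\|\theta_B\|_\infty\le O(\ell^2 M^2\|H\|_{\infty,\infty}^2)$, where $M$ is the relevant bound on the quantities being manipulated. Here the dominant blow-up comes from dividing by $(h\delta)^2$ with $h=\frac{6}{1-qp^{-1}}$: since $\delta$ will eventually be taken as a small power of $\epsilon$, the constants $(p\tau_{\mathcal{M}}(z_i))^{-2}$ and especially $(h\delta)^{-2}=\Theta(\delta^{-2})$ enter the weights, and because squaring and re-summing can compose these, the worst token values and hence $\|H\|_{\infty,\infty}$ are polynomial in $\delta^{-1}$; combined with the $\ell^2$ factor (and $\ell=O(D)$) this yields $\kappa=O(D^2\delta^{-8})$, where the exponent $8$ is the bookkeeping bound coming from the fourth-power-type compositions times the $\ell^2=O(D^2)$ factor. \emph{The main obstacle} is precisely this last point: carefully propagating the $M$ and $\|H\|_{\infty,\infty}$ bounds through the composition so that the exponent of $\delta^{-1}$ does not exceed $8$, and making sure the parallel layout of the $P(z_i)^\top$ computation (via the Interaction Lemma) does not secretly inflate either the head count beyond $O(D)$ or the embedding dimension beyond $5$. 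Everything else is a routine concatenation of the table entries with the error being exactly zero since no approximation — only exact arithmetic and one exact ReLU — is used. $\Box$
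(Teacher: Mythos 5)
Your proposal follows essentially the same route as the paper: decompose $\tilde\eta_i$ into constant addition, the $d$ tokenwise constant-multiplications and token-sums realizing $P(z_i)^\top(x-z_i)$, squaring, summing, rescaling, adding $1$, and a final ReLU, chaining the table lemmas and tracking parameters — including correctly identifying the sequential layout of the $d$ projected coordinates as the source of $L_T=O(d)$ with $m_T=O(D)$. The only cosmetic deviation is the last step: the paper realizes the outer $\sigma$ with one extra attention head (ReLU activation in attention, identity FFN) rather than "for free" inside the residual FFN, where the residual structure would actually force a token-selective correction of the form $\sigma(-a)$; this is a minor implementation detail, not a gap.
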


The proof  of Proposition~\ref{repetatildei} is deferred to Appendix~\ref{secAppendixprop12}. The main theme of the proof is that, from (\ref{eta}), it is easy to see that $\tilde\eta_i(x)$ is built from a sequence basic arithmetic operations such as constant addition, constant multiplication, squaring, etc,. Each of these operations is implemented in Table~\ref{tableBAO}. By chaining these operations sequentially, we get the corresponding ${\rm T}(\theta;\cdot)$ to represent $\tilde\eta_i(\cdot)$.

Once each $\tilde\eta_i$ is represented by ${\rm T}(\theta;\cdot)$, we can apply Lemma~\ref{lemdiv} to construct another transformer network which implements $\eta_i(x)=\tilde\eta_i(x)/\|\tilde\eta(x)\|_1$, $i=1,\cdots,K$, and $\eta(x)=(\eta_1(x),\cdots,\eta_K(x))$ within some tolerance. Then take the linear combination of those $\eta_i(x)$ to approximate $\hat{f}$. Note that we need to satisfies $\delta\in (0,\tau_{\mathcal{M}}/2)$ in order to have the cardinality $K=O(\delta^{-d})$ (see Lemma 6.1 in \citep{Cloninger21}), where $O(\cdot)$ hides dependency on $d$ and the volume of manifold ${\rm Vol}(\mathcal{M})$.

The approximation result for $\eta_i(x)$ is presented in Proposition~\ref{approxetai} and its proof is deferred to Appendix~\ref{secAppendixprop12}.

\begin{proposition}
\label{approxetai}
Suppose Assumption~\ref{assumpm} holds. Let $Z=\{z_1,\cdots,z_K\}$ be a maximal separated $\delta$-net of $\mathcal{M}$ with respect to $d_{\mathcal{\mathcal{M}}}$ such that $\delta\in (0,\tau_{\mathcal{M}}/2)$, and define $\eta$ according to \eqref{eq:eta}. 
Then for any $\epsilon\in (0,1)$, there exists ${\rm T}(\theta;\cdot)=({\rm T}^1(\theta;\cdot),\cdots,{\rm T}^K(\theta;\cdot))$ with each ${\rm T}^i(\theta;\cdot)\in\mathcal{T}(L_T, m_T,d_{embed},\ell,L_{\rm FFN},w_{\rm FFN},R,\kappa)$
such that for any $x\in\mathcal{M}(q)$,
\begin{equation}
    |{\rm T}^i(\theta;x) - \eta_i(x)|\leq \epsilon\eta_i(x).
\end{equation}
Consequently, $T(\theta;\cdot)$ satisfies 
\begin{equation}
 \textstyle   \sup_{x\in\mathcal{M}(q)}\|{\rm T}(\theta;x)-\eta(x)\|_1\leq\epsilon.
\end{equation}
The network ${\rm T}(\theta;\cdot)$ has parameters 
\begin{align*}
&L_T=O(d+\ln(\ln(\epsilon^{-1}))), \text{ } m_T=O(D\delta^{-d}), d_{embed}=5, \\
&\ell\geq O(D\delta^{-d}), \text{ } L_{\rm FFN}=6, w_{\rm FFN}=5, \text{ } \kappa=O(D^2\delta^{-2d-8}), 
\end{align*}
where $O(\cdot)$ hides the dependency on $d,q,\tau_{\mathcal{M}},{\rm Vol}(\mathcal{M})$. Importantly, the $O(\cdot)$ dependency for $L_{\rm T}$ is only on some absolute constants.
\end{proposition}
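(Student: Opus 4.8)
The plan is to build the network for $\eta(x)=(\eta_1(x),\dots,\eta_K(x))$ in three stages, stacking transformer blocks and invoking the arithmetic lemmas already established. First I would apply Proposition~\ref{repetatildei} to obtain, for each $i=1,\dots,K$, a transformer sub-network that computes $\tilde\eta_i(x)$ exactly; because the Interaction Lemma allows each token pair to interact independently, these $K$ computations can be laid out in parallel across disjoint blocks of tokens, so the depth stays $O(d)$ while the number of heads and hidden tokens scales like $O(D\delta^{-d})$ using the bound $K\le 3^d\,{\rm Vol}(\mathcal{M})d^{d/2}\delta^{-d}$ from \citet[Lemma 6.1]{Cloninger21}. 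Next I would use Lemma~\ref{lemsumD} to sum the $K$ tokens carrying $\tilde\eta_i(x)$ into a single token holding $\|\tilde\eta(x)\|_1$ (all $\tilde\eta_i\ge 0$, so the $\ell_1$ norm is just the sum).

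The second stage is the division: I would invoke the ``division'' part of Lemma~\ref{lemdiv} applied to the scalar $s:=\|\tilde\eta(x)\|_1$. Here I must check the hypotheses of that lemma — that $s\in[c_1,c_2]$ with $0<c_1<c_2$ — which is exactly the uniform bound ``$\|\tilde\eta(x)\|_1$ is bounded above and bounded away from zero'' asserted in the oracle-approximator construction (with $c_1,c_2$ depending on $q,\tau_{\mathcal{M}},d,{\rm Vol}(\mathcal{M})$). Choosing the truncation order $r=O(\ln(1/\epsilon))$ makes the geometric-series tail $(1-cs)^{r+1}/s$ smaller than $\epsilon/c_2$, which I will use to control the relative error below. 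This stage contributes the $\ln(\ln(\epsilon^{-1}))$ term to the depth (from the $O(\ln(r))$ powers needed to form the power series via repeated squaring) and the $O(\ln(1/\epsilon))$ factor in the head count, and it raises the weight magnitude to $O(\delta^{-8})$-type bounds coming from squaring operations on quantities of size $O(\delta^{-2})$.

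The third stage multiplies: for each $i$, apply Lemma~\ref{lempp} (componentwise product) to the token holding $\tilde\eta_i(x)$ and the token holding the approximation $\widehat{1/s}:=c\sum_{j=0}^r(1-cs)^j$, yielding ${\rm T}^i(\theta;x)=\tilde\eta_i(x)\cdot\widehat{1/s}$. Then the relative error bound is immediate: $|{\rm T}^i(\theta;x)-\eta_i(x)| = \tilde\eta_i(x)\,\bigl|\widehat{1/s}-1/s\bigr| \le \tilde\eta_i(x)\cdot (1-cs)^{r+1}/s \le \eta_i(x)\,(1-cs)^{r+1}$, and choosing $r$ so that $(1-cs)^{r+1}\le\epsilon$ on the whole of $\mathcal{M}(q)$ gives $|{\rm T}^i(\theta;x)-\eta_i(x)|\le\epsilon\,\eta_i(x)$. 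Summing over $i$ and using $\sum_i\eta_i(x)=1$ gives $\|{\rm T}(\theta;x)-\eta(x)\|_1\le\epsilon$. Finally I would tally the parameters block by block: depths add ($O(d)$ from stage one, $O(\ln\ln(1/\epsilon))$ from stage two, $O(1)$ from stage three), head counts and token counts are dominated by $K=O(\delta^{-d})$ copies times $O(D)$ plus the $O(\ln(1/\epsilon))$ division overhead which is absorbed, and $\kappa$ is the maximum over all blocks, giving $O(D^2\delta^{-2d-8})$.

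The main obstacle is bookkeeping rather than a single deep idea: one must carefully verify that the parallel layout of the $K$ sub-networks is consistent with the fixed embedding dimension $d_{embed}=5$ and the sinusoidal interaction encoding $\mathcal{I}_j$, so that the ``which token talks to which'' routing in the Interaction Lemma can simultaneously realize all $K$ independent $\tilde\eta_i$ computations, the subsequent global sum, and the per-$i$ final product, all without the required number of hidden tokens $\ell$ blowing up beyond $O(D\delta^{-d})$ or the weight magnitude beyond the claimed $\kappa$. A secondary subtlety is making the bound ``$\|\tilde\eta(x)\|_1$ bounded away from zero'' quantitative enough (i.e., extracting an explicit $c_1>0$ depending only on $q,\tau_{\mathcal{M}},d,{\rm Vol}(\mathcal{M})$) so that the choice of $c$ with $1-cs\in(-1,1)$ in Lemma~\ref{lemdiv} is valid uniformly over $x\in\mathcal{M}(q)$, and so that the constant $c$ itself does not contaminate the $\kappa$ bound.
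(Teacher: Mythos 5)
Your proposal is correct and follows essentially the same route as the paper: parallel exact implementation of the $\tilde\eta_i$ via Proposition~\ref{repetatildei} and a token sum for $\|\tilde\eta(x)\|_1$, truncated geometric-series division from Lemma~\ref{lemdiv} (justified by the uniform two-sided bound on $\|\tilde\eta(x)\|_1$ from \citet[Proposition~6.3]{Cloninger21}) with $r=O(\ln(1/\epsilon))$, and a final componentwise product giving the relative error $\epsilon\eta_i(x)$ and hence the $\ell_1$ bound, with the same parameter tallies.
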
 

With Proposition~\ref{approxetai}, we can approximates the $\hat{f}$ in \eqref{hatf} easily by scaling down the tolerance with the supremum norm of $g$.
Let ${\rm T}_1(\theta;\cdot):=({\rm T}_1^1(\theta;\cdot),\cdots,{\rm T}_1^K(\theta;\cdot))$ where each ${\rm T}_1^i$ approximates $\eta_i$ such that 
\begin{equation*}
 \textstyle   \sup_{x\in\mathcal{M}(q)}\|{\rm T}_1(\theta;x)-\eta(x)\|_1\leq {\epsilon}/{\|g\|_{L^\infty (\mathcal{M})}}.
\end{equation*}

Then by Lemma~\ref{lemcmulti} with constant $c=(g(z_1),\cdots g(z_K))$ and Lemma~\ref{lemsumD}, we can construct ${\rm B}_1,{\rm B}_2\in\mathcal{B}(K,6,d_{embed})$ such that ${\rm T}_2:={\rm B}_2\circ {\rm B}_1$ implements the approximation of $\sum_{i=1}^K g(z_i){\rm T}_1^i(\theta;x)$, where ${\rm T}_2$ has $L_{{\rm T}_2}=O(1)$ and $m_{{\rm T}_2}=K=O(\delta^{-d})$. 
Let ${\rm T}:={\rm T}_2\circ {\rm T}_1$, then for any $x\in\mathcal{M}(q)$, we have 
\begin{align*}
\textstyle
|{\rm T}(\theta;x)-\hat{f}(x)|&=
    |\sum_{i=1}^K g(z_i){\rm T}_1^i(\theta;x)-\sum_{i=1}^Kg(z_i)\eta_i(x)| \\
    &\leq \|g\|_{L^\infty(\mathcal{M})}\|{\rm T}_1(\theta;x)-\eta(x)\|_1 =\epsilon.
\end{align*}
An illustration of the constructed transformer network architecture for approximating $\hat{f}$ is provided in Figure~\ref{networkfig}.

\begin{figure*}[t]
    \centering
    \includegraphics[width=1\linewidth]{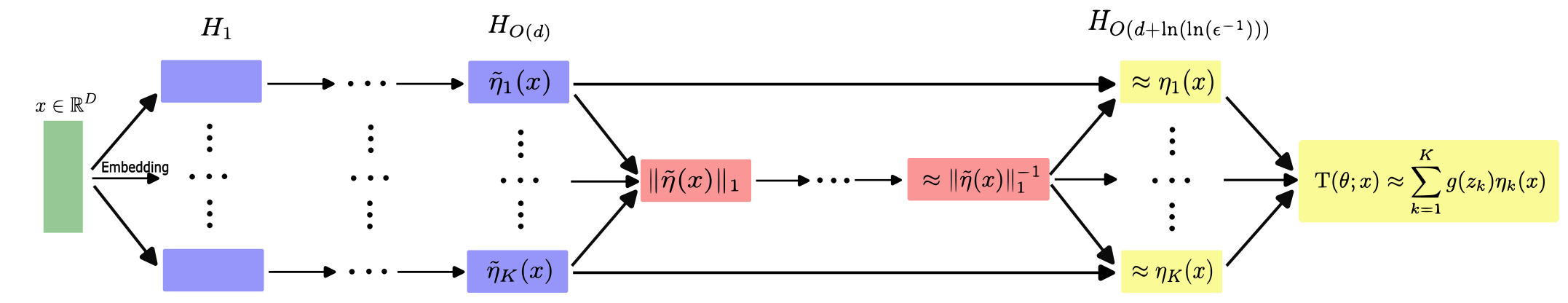}
    \vspace{-3mm}
    \caption{\small Transformer architecture constructed to approximate $\hat{f}$ (the purple component implements each of the $\tilde{\eta}_i$, the red component approximates $\frac{1}{\|\tilde{\eta}\|_1}$, the yellow component approximates each of the $\eta_i(x)$, and then approximates $\hat{f}$).}
    \label{networkfig}
\end{figure*}



\vspace{3mm}
\noindent
$\bullet$ {\bf Putting Error Bounds Together}
\\
\noindent
For any partition of unity $\{\eta_i(x)\}_{i=1}^K$ subordinate to the covering $\{\pi^{-1}_{\mathcal{M}}(U_i)\}_{i=1}^K$, we can write $f(x)=\sum_{i=1}^K f(x)\eta_i(x)$. We consider the following piecewise constant approximation of $f$:
\begin{equation}
    f(x)=\sum_{i=1}^K f(x)\eta_i(x)\approx \hat{f}(x):=\sum_{i=1}^K g(z_i)\eta_i(x),
\end{equation}
By triangle inequality, for any $x\in\mathcal{M}(q)$,
    \begin{align*}
        |f(x)-T(\theta;x)| \leq |f(x)-\hat{f}(x)|+|\hat{f}(x)-T(\theta;x)|.
    \end{align*}
For the first term, we have
\begin{align*}
    |f(x)-\hat{f}(x)|&=\left|\sum_{i=1}^K g(\pi_{\mathcal{M}}(x))\eta_i(x)-\sum_{i=1}^Kg(z_i)\eta_i(x)\right| \leq\sum_{i=1}^K\left| g(\pi_{\mathcal{M}}(x))-g(z_i)\right|\eta_i(x) \\
    &\leq L\sum_{i=1}^K d^{\alpha}_{\mathcal{M}}(\pi_{\mathcal{M}}(x),z_i)\eta_i(x) \leq L\sum_{i=1}^K\left(\frac{72\delta}{(1-q)^2}\right)^\alpha\eta_i(x) =L\left(\frac{72\delta}{(1-q)^2}\right)^\alpha.
\end{align*}
The last equality is due to partition of unity, and the inequality before the last equality is from Proposition 6.3 in \citep{Cloninger21}.

For the second term, by Proposition~\ref{approxetai} and its discussion, we set $\epsilon=\delta^{\alpha}$, and there exists a transformer network $T(\theta;\cdot)\in\mathcal{T}$ with parameters $L_T=O(d+\ln(\ln(\delta^{-\alpha})))$, $m_T=O(D\delta^{-d})$, $d_{embed}=5$, $\ell=O(D\delta^{-d})$, $L_{FFN}=6$, $w_{FFN}=5$, $\kappa=O(D^2\delta^{-2d-8})$, such that 
\begin{equation}
    \|T(\theta;\cdot)-\hat{f}\|_{\mathcal{L}^{\infty}(\mathcal{M}(q))}\leq\delta^\alpha.
\end{equation}

Thus
\begin{align*}
    |T(\theta;x)-f(x)|\leq L\left (\frac{72\delta}{(1-q)^{2}}\right)^\alpha+\delta^{\alpha}=\left(1+L\left (\frac{72}{(1-q)^{2}}\right)^\alpha\right)\delta^\alpha.
\end{align*}

By choosing $\delta$ such that $\left(1+L\left (\frac{72}{(1-q)^{2}}\right)^\alpha\right)\delta^\alpha=\epsilon$, we get $\delta=O(\epsilon^{1/\alpha}(1-q)^2)$ and 
\begin{align*}
    |T(\theta;x)-f(x)|\leq\epsilon.
\end{align*}
Such a transformer network $T(\theta;\cdot)\in\mathcal{T}$ has parameters $L_T=O(d+\ln(\ln(\epsilon^{-1})))$, $m_T=O(D\epsilon^{-\frac{d}{\alpha}}(1-q)^{-2d})$, $d_{embed}=5$, $\ell\geq O(D\epsilon^{-\frac{d}{\alpha}}(1-q)^{-2d})$, $L_{FFN}=6$, $w_{FFN}=5$, $\kappa=O(D^2\epsilon^{-\frac{2d+8}{\alpha}}(1-q)^{-2d-8})$.
\end{proof}

\subsection{Proof of Theorem \ref{thmGeneralization}}


Theorem \ref{thmGeneralization} is proved via a bias-variance decomposition. The bias reflects the approximation error of $f$ by a constructed transformer network, while the variance captures the stochastic error in estimating the parameters of the constructed transformer network. 
For the bias term, we can bound it by using the approximation error bound in Theorem~\ref{thmApprox}. The variance term can be bounded using the covering number of transformers (see Lemma~\ref{coveringlemma}).

\begin{proof} [Proof of Theorem~\ref{thmGeneralization}] By adding and subtracting the twice of the bias term, we can rewrite the squared generalization error as
    \begin{align*}
        \mathbb{E}\|\hat{T}_n-f\|^2_{L^2(P)} &= \mathbb{E}\int_{\mathcal{M}(q)}(\hat{T}_n(x)-f(x))^2dP \\
        & =\mathbb{E}\left[\frac{2}{n}\sum_{i=1}^n (\hat{T}_n(x_i)-f(x_i))^2\right] +\mathbb{E}\int_{\mathcal{M}(q)}(\hat{T}_n(x)-f(x))^2dP - \mathbb{E}\left[\frac{2}{n}\sum_{i=1}^n (\hat{T}_n(x_i)-f(x_i))^2\right].
    \end{align*}
By Jensen's inequality, the bias term satisfies
\begin{align*}
        \mathbb{E}\left[\frac{1}{n}\sum_{i=1}^n (\hat{T}_n(x_i)-f(x_i))^2\right] &= \mathbb{E}\inf_{T\in\mathcal{T}}\left[\frac{1}{n}\sum_{i=1}^n (T(x_i)-f(x_i))^2\right] \leq \inf_{T\in\mathcal{T}}\mathbb{E}\left[\frac{1}{n}\sum_{i=1}^n (T(x_i)-f(x_i))^2\right] \\
        & =\inf_{T\in\mathcal{T}}\int_{\mathcal{M}(q)}(T(x)-f(x))^2dP \leq \inf_{T\in\mathcal{T}}\int_{\mathcal{M}(q)}\|T-f\|^2_{L^\infty(\mathcal{M}(q))}dP \\
        &=\inf_{T\in\mathcal{T}}\|T-f\|^2_{L^\infty(\mathcal{M}(q))} \leq O(\epsilon^2).
    \end{align*}
By Lemma 6 in \citep{Chen22}, the variance term has the bound
\begin{align*}
\mathbb{E}\int_{\mathcal{M}(q)}(\hat{T}_n(x)-f(x))^2dP &- \mathbb{E}\left[\frac{2}{n}\sum_{i=1}^n (\hat{T}_n(x_i)-f(x_i))^2\right]  \\
&\leq\inf_{\delta>0}\left[\frac{104R^2}{3n}\ln\mathcal{N}\left(\frac{\delta}{4R},\mathcal{T},\|\cdot\|_{\infty}\right)+\left(4+\frac{1}{2R}\right)\delta\right] \\
&\leq\left[\frac{104R^2}{3n}\ln\mathcal{N}\left(\frac{1}{4nR},\mathcal{T},\|\cdot\|_{\infty}\right)+\left(4+\frac{1}{2R}\right)\frac{1}{n}\right]
\end{align*}
where $\mathcal{N}\left(\frac{\delta}{4R},\mathcal{T},\|\cdot\|_{\infty}\right)$ is the covering number (defined in Definition~\ref{coveringDef}) of transformer network class $\mathcal{T}$ with $L^{\infty}$ norm. 
By Lemma~\ref{coveringlemma}, we get 
\begin{align*}
&\ln\mathcal{N}\left(\frac{1}{4nR},\mathcal{T},\|\cdot\|_{\infty}\right) \\ 
&\quad\leq \ln\left(2^{L_T+3}nRL_{\text{FFN}}d_{embed}^{18L_T^2}w_{\text{FFN}}^{18L_T^2L_{\text{FFN}}}\kappa^{6L_T^2L_{\text{FFN}}}m_T^{L_T^2}\ell^{L_T^2}\right)^{4d^2_{embed}w^2_{\text{FFN}}D(m_T+L_{\text{FFN}})L_T} \\
&\quad\leq (4d^2_{embed}w^2_{\text{FFN}}D(m_T+L_{\text{FFN}})L_T)(18L^2_TL_{\text{FFN}}\ln(2nRL_{\text{FFN}}d_{embed}w_{\text{FFN}}\kappa m_T\ell)) \\
&\quad\leq 72\ln(2nRL_{\text{FFN}}d_{embed}w_{\text{FFN}}\kappa m_T\ell)d^2_{embed}w^2_{\text{FFN}}Dm_TL^3_TL^2_{\text{FFN}}.
\end{align*}
For target accuracy $\epsilon$, we know from Theorem~\ref{thmApprox} that $L_T=O(d+\ln(\ln(\epsilon^{-1})))$, $m_T=O(D\epsilon^{-\frac{d}{\alpha}}(1-q)^{-2d})$, $d_{embed}=5$, $\ell=O(D\epsilon^{-\frac{d}{\alpha}}(1-q)^{-2d})$, $L_{\rm FFN}=6$, $w_{\rm FFN}=5$, $\kappa=O(D^2\epsilon^{-\frac{2d+8}{\alpha}}(1-q)^{-2d-8})$. This simplifies the above to 
\begin{align*}
\ln\mathcal{N}\left(\frac{1}{4nR},\mathcal{T},\|\cdot\|_{\infty}\right)\leq\tilde{O}\left(D^2d^3\epsilon^{-\frac{d}{\alpha}}(1-q)^{-2d}\right)
\end{align*}
where $\tilde{O}(\cdot)$ hides the logarithmic dependency on $D,d,q,n,\epsilon,\alpha,L,R,\tau_{\mathcal{M}}$,${\rm Vol}(\mathcal{M})$, and polynomial dependency on $d$ and ${\rm Vol}(\mathcal{M})$. Thus, the variance term is bounded by

\begin{align*}
\mathbb{E}\int_{\mathcal{M}(q)}(\hat{T}_n(x)-f(x))^2dP - \mathbb{E}\left[\frac{2}{n}\sum_{i=1}^n (\hat{T}_n(x_i)-f(x_i))^2\right] \leq \tilde{O}\left(\frac{D^2 d^3\epsilon^{-\frac{d}{\alpha}}(1-q)^{-2d}}{n}\right).
\end{align*}
Putting the bias and variance together, we get 

\begin{equation*}
  \mathbb{E}\|\hat{T}_n-f\|^2_{L^2(P)} \leq \tilde{O}\left(\epsilon^2+\frac{D^2 d^3\epsilon^{-\frac{d}{\alpha}}}{n}\right).
\end{equation*}
By balancing the bias and variance, i.e., setting $\epsilon^2=\frac{\epsilon^{-\frac{d}{\alpha}}}{n}$, we get $\epsilon=n^{-\frac{\alpha}{2\alpha+d}}$. This yields
\begin{equation}
  \mathbb{E}\|\hat{T}_n-f\|^2_{L^2(P)} \leq \tilde{O}\left((1-q)^{-2d}D^2d^3 n^{-\frac{2\alpha}{2\alpha+d}}\right)
\end{equation}
as desired.

\end{proof}

\begin{remark}
    It is worth pointing out that the factor of two included in the proof is intended to enhance the rate of convergence of the statistical error.
\end{remark}

\section{Experiments} \label{sec:Exp}

We conduct two experiments: One experiment demonstrates the advantage of transformers over feed-forward neural networks and to support our main results, and the other experiment explores the denoising effect of a pre-trained vision transformer (ViT) \citep{dosovitskiy2021imageworth16x16words}.

\subsection{Transformer v.s. FNN on Learning Arithmetic Operations} \label{sec: expCompare}

{
In this part of the experiments, we consider $x=(x^1,\cdots, x^D)$ and $y=(y^1,\cdots,y^D)$, where the superscripts indicate the coordinate components. We use both transformers and feed-forward neural networks (FNNs) to supervisely learn the following two arithmetic operations: 
\begin{enumerate}
    \item Square of the sum of the components:
    $x\mapsto\left(\sum_{i=1}^D x^i\right)^2$,
    \item Dot product: $x\cdot y\mapsto\sum_{i=1}^D x^iy^i$.
\end{enumerate}
In each pair of experiments, we set the model parameters to be roughly the same and compare the testing errors achieved by both models. }

{For square of the sum of the components operation, we conduct experiments for the following two scenarios: (1) set $D=2$, and $10000$ training samples uniformly sampled from $[0,1]^2$, testing is on $50000$ samples uniformly sampled from $[0,1]^2$. In this case, we set the transformer model with depth $L_{\rm T} = 1$, embedding dimension $d_{embed}=5$, number of attention heads $m_{\rm T} = 5$, feed-forward component dimension $w_{\rm FNN}=5$; and set the FNN model with $2$ layers and each layer is of width $16$.  (2) set $D=4$, and $10000$ training samples uniformly sampled from $[0,1]^4$, testing is on $50000$ samples uniformly sampled from $[0,1]^4$. In this case, we set the transformer model with depth $L_{\rm T} = 1$, embedding dimension $d_{embed}=24$, number of attention heads $m_{\rm T} = 8$, feed-forward component dimension $w_{\rm FNN}=32$; and set the FNN model with $2$ layers and each layer is of width $64$.
}

{For dot product operation, we conduct experiments for the following two scenarios: (1) set $D=4$, and $10000$ training samples uniformly sampled from $[0,1]^4$, testing is on $50000$ samples uniformly sampled from $[0,1]^4$. In this case, we set the transformer model with depth $L_{\rm T} = 1$, embedding dimension $d_{embed}=5$, number of attention heads $m_{\rm T} = 5$, feed-forward component dimension $w_{\rm FNN}=5$; and set the FNN model with $2$ layers and each layer is of width $16$.  (2) set $D=8$, and $10000$ training samples uniformly sampled from $[0,1]^8$, testing is on $50000$ samples uniformly sampled from $[0,1]^8$. In this case, we set the transformer model with depth $L_{\rm T} = 1$, embedding dimension $d_{embed}=24$, number of attention heads $m_{\rm T} = 8$, feed-forward component dimension $w_{\rm FNN}=32$; and set the FNN model with $2$ layers and each layer is of width $64$.
}

{In all experiments, the models are trained on $10000$ i.i.d. samples and tested on $50000$ i.i.d samples with MSE loss and Adam optimizer for $500$ epochs using learning rate $0.001$. The average testing MSEs over 30 repetition are summarized in Table \ref{tableExpCompare}, which clearly demonstrate  the advantages of transformers over FNNs on these basic arithmetic operations.
}

\begin{table*}[t]
    \centering
    \caption{\small Testing MSEs of Transformer v.s. FNN on different arithmetic operations}
    \label{tableExpCompare}
    \vspace{2mm}
    \begin{tabular}{ccccc}
    \toprule
       Arithmetic operations  & $D$ & Transformer MSE & FNN MSE & \# Parameters  \cr
       \midrule
       \multirow{2}{*}{$x\mapsto \left(\sum_{i=1}^D x^i\right)^2$}
         & $2$ &  $9.40\times 10^{-4}$ & $3.12\times 10^{-3}$ &  $\approx 450$  \cr
         & $4$ & $1.04\times 10^{-2}$ & $3.07\times 10^{-2}$  & $\approx 4500$  \cr
         \midrule
         \multirow{2}{*}{$x\cdot y\mapsto \sum_{i=1}^D x^iy^i$} & $4$ & $6.14\times 10^{-3}$ & $1.92\times 10^{-2}$ & $\approx 450$ \cr
          & $8$ & $6.85\times 10^{-3}$ & $3.39\times 10^{-2}$ & $\approx 4500$ \cr
         \bottomrule
    \end{tabular}   
\end{table*}

\subsection{Denoising Effect of Transformers}
Our theoretical results  show that transformers can recover low-dimensional structures even when training data itself may not exactly lie  on a low-dimensional manifold. To validate this findings, we conduct a series of experiments measuring the intrinsic dimension of common computer vision datasets with various levels of isotropic Gaussian noise. We then embed noisy image data using a pre-trained vision transformer (ViT) \citep{dosovitskiy2021imageworth16x16words} and measure the intrinsic dimension of the resulting embeddings.
\vspace{3mm}
\\
\noindent
{\bf Setup.}\ We measure the validation split of Imagenet-1k \citep{imagenet}. We first pre-process images by rescaling to $D=224 \times 244$ dimensions and normalizing pixel values inside of the $[-1,1]^D$ cube. We use the pre-trained \texttt{google/vit-base-patch16-224} model to produce image embeddings of size $196\times768$. To measure intrinsic dimension we use the MLE estimator \citep{MLE} with $K = 30$ neighbors with batch size $4096$ averaged over 50,000 images. We flatten all images beforehand.

\begin{figure}
    \centering
    \begin{tabular}{cc}
        \includegraphics[width=0.45\linewidth]{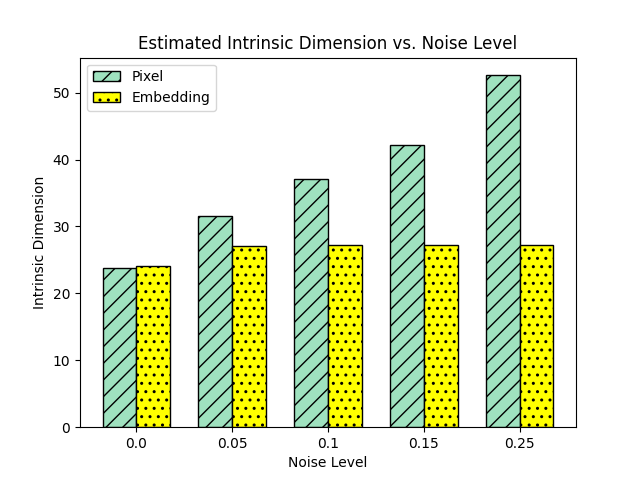} &
\includegraphics[width=0.45\linewidth]{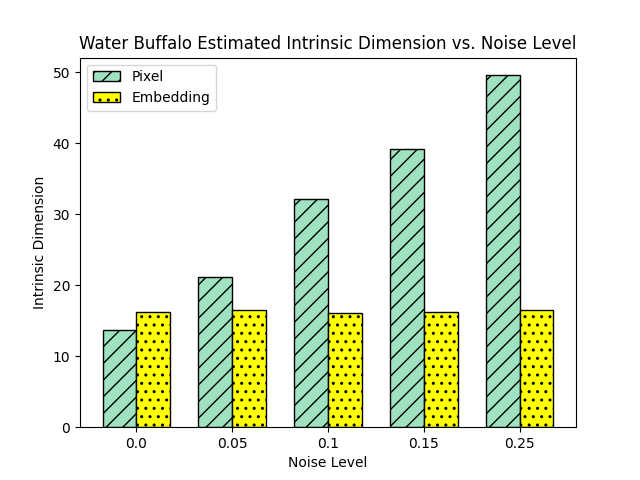}
    \end{tabular}

  \caption{\small Left subplot: Estimated intrinsic dimension (ID) of pixel and embedded image representations with various amounts of isotropic Gaussian noise. Noise added on pixels quickly distorts low-dimensional structures. Embedding with the pre-trained model demonstrates a denoising effect, recovering the original ID at all noise levels. Right subplot: Estimated intrinsic dimension of water buffalo images and embeddings across various noise levels.}
  \vspace{-0.3cm}
    \label{fig:id_noise}
\end{figure}

\vspace{3mm}
\noindent
{\bf Results.}\ Figure~\ref{fig:id_noise} shows that, with no noise, the intrinsic dimensions of this dataset in both pixel and embedding space are measured to be $25$. As isotropic Gaussian noise with increasing variance is added, the intrinsic dimension of pixel data quickly increases. However, the intrinsic dimension of the embedded noisy pixel data remains constant, demonstrating the strong denoising effect of the vision transformer. 
Figure \ref{fig:id_noise} also measures the intrinsic dimension of the water buffalo subset of Imagenet (class $346$) across various noise levels. The estimated image dimension is around $15$ while the estimated embedding dimension is around $18$. However, adding isotropic Gaussian noise quickly increases the intrinsic dimension of images while having a negligible effect on the intrinsic dimension of embeddings.

\vspace{3mm}
\noindent
{The goal of this part of experiments is to support our claim that transformers have a denoising effect when there are noise being added onto the input. It is worth pointing out that classical neural network models such as FNNs may also have this denoising effect \citep{ghorbani2020when}.  We are not claiming that the transformers' representations are robust to noise while FNNs' representations are not.
}

\section{Conclusion and Discussion}
\label{sec:conclusion}

This paper establishes approximation and generalization bounds of transformers for functions which depend on the projection of the input onto a low-dimensional manifold. This regression model is interesting in machine learning applications where the input data contain noise or the function has  low complexity depending on a low-dimensional task manifold. Our theory justifies the capability of transformers in handling noisy data and adapting to low-dimensional structures in the prediction tasks. 
\vspace{3mm}
\\
\noindent
This work considers H\"older functions with H\"older index $\alpha \in (0,1]$. How to estimate this H\"older index in practical applications is an interesting computational problem and {how to extend the theory to more regular functions with $\alpha>1$ is a theoretically interesting problem. Assumption \ref{assumpf} can be extended to higher-order smoothness, and the piecewise constant oracle approximator can be extended to piecewise polynomial approximator. 
However, with our current technique, the rate of convergence may not be improved when the function has higher-order smoothness due to an approximation error in $\pi_{\mathcal{M}}(x)$.  Even if the target function has higher H\"older  regularity index $\alpha>1$,  our current proof technique will give rise to the error as if $\alpha=1$. Another interesting extension is from the ReLU activation to the softmax or more general activations. Our current technique for proving the Interaction Lemma (Lemma \ref{IAlemma}) only works for thresholding type of activation such as ReLU, and does not work for index selection type of activation such as hardmax (argmax) or softmax. The extension of our results from ReLU to softmax is left for future work.}

\section*{Acknowledgments}

Rongjie Lai's research is supported in part by NSF DMS-2401297. Zhaiming Shen, Alex Havrilla and Wenjing Liao are  supported by NSF DMS-2145167 and DOE SC0024348.  Alex Cloninger is supported in part by NSF CISE-2403452.

\bibliographystyle{plainnat}
\bibliography{references}

\newpage
\appendix 
\section{Table of Notations} \label{secAppendixNotation}

Our notations are summarized in Table \ref{tabnotation}.

\begin{table*}[h]
    \centering
    \caption{\small Table of notations}
    \label{tabnotation}
    \vspace{2mm}
    \begin{tabular}{ll}
    \toprule
       Symbol  & Interpretation \\
       \midrule
       $x=(x^1,\cdots,x^D)$ & input variable in $\mathbb{R}^D$ \\
       $\mathcal{M}$ & a compact $d$-dimensional Riemannian manifold $\mathcal{M}$ isometrically embedded in $\mathbb{R}^D$  \\
       \text{Vol}$(\mathcal{M})$ & volume of the manifold $\mathcal{M}$ \\
       Med($\mathcal{M}$) & medial axis of a manifold $\mathcal{M}$ \\
       $\tau_{\mathcal{}}(v)$ & local reach at of $\mathcal{M}$ at $v$ \\
       $\tau_{\mathcal{M}}$ & local reach of $\mathcal{M}$ \\
        $\pi_{\mathcal{M}}(x)$ & projection of $x\in\mathcal{M}(q)$ onto $\mathcal{M}$\\
        
  $P(v)$ & $D\times d$ matrix consists of
 orthonormal basis of the tangent space of $\mathcal{M}$ at $v$. \\     
 $d_{\mathcal{M}}(x,x')$ & geodesic distance between $x$ and $x'$ \\
 $d_{\mathcal{M}(q)}(v,v')$ & tubular geodesic distance between $v$ and $v'$ \\
        
        $\{z_1,\cdots,z_K\}$ & a maximal separated $\delta$-net of $\mathcal{M}$ with respect to $d_{\mathcal{\mathcal{M}}}$ \\
 $H$ & embedding matrix \\
 
 $d_{embed}$ & embedding dimension \\
 ${\rm T}$ & a transformer network \\
 ${\rm B}$ & a transformer block \\
 $L_T$ & number of transformer blocks in ${\rm T}$ \\
 $m_T$ & maximum number of attention heads in each block of ${\rm T}$ \\
 $\ell$ & number of hidden tokens \\
 $\mathcal{I}_j$ & interaction term $(\cos(\frac{j\pi}{2\ell}), \sin(\frac{j\pi}{2\ell}))^{\top}$ \\
 $H_{i,j}$ & the $(i,j)$-th entry of $H$ \\
 $H_{J,:}$ & submatrix of $H$ with rows with row index in $J$ and all the columns \\
 $H_{:,J}$ & submatrix of $H$ with all the rows and columns with column index in $J$ \\
 $x\odot x$ & componentwise product, i.e., $x\odot x=((x^1)^2,\cdots,(x^D)^2)$ \\
 $x^{\circ r}$ & componentwise $r$-th power, i.e., $x^{\circ r}=((x^1)^r,\cdots,(x^D)^r)$ \\
 $\|x\|_1$ & $\ell^1$ norm of a vector $x$ \\
 $\|x\|_{\infty}$ & maximum norm of a vector $x$ \\
 $\|M\|_{\infty,\infty}$ & maximum norm of a matrix $M$ \\
          \bottomrule
    \end{tabular}   
\end{table*}


\section{Implementing Basic Arithmetic Operations by Transformers} \label{secAppendixLemmas}



\subsection{Interaction Lemma, Gating Lemma, and Decrementing Lemma} \label{secAppendixIAandGating}
We first present three lemmas which will be useful when building the arithmetic operations. The first lemma is called Interaction Lemma. 
\begin{lemma} [Interaction Lemma] \label{IAlemma}
   Let $H=[h_t]_{1\leq t\leq\ell}\in\mathbb{R}^{d_{embed}\times\ell}$ be an embedding matrix such that $h_t^{(d_{embed}-2):(d_{embed}-1)}=\mathcal{I}_t$ and $h^{d_{embed}}_t=1$. Fix $1\leq t_1,t_2\leq \ell, 1\leq i\leq d_{embed}$, and $\ell\in\mathbb{N}$. Suppose $d_{embed}\geq 5$ and $\|H\|_{\infty,\infty}<M$ for some $M>0$, and the data kernels $Q^{data}$ (first two rows in the query matrix $Q$) and $K^{data}$ (first two rows in the key matrix $K$) satisfy $\max\{\|Q^{data}\|_{\infty,\infty},\|K^{data}\|_{\infty,\infty}\}\leq\mu$. Then we can construct an attention head $A$ with $\|\theta_A\|_{\infty}=O(d_{embed}^4\mu^2\ell^2M^2)$ such that 
   \[ A(h_t) = 
   \begin{cases} 
    \sigma(\langle Q^{data}h_{t}, K^{data}h_{t_2}\rangle)e_i & \text{if}\text{  } t=t_1, \\
      0 & \text{otherwise}. \\
   \end{cases}
\]
\end{lemma}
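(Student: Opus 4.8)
The plan is to construct the attention head $A$ explicitly by designing the query matrix $Q$, the key matrix $K$, and the value matrix $V$ so that (i) the value part $Vh_j$ picks out the standard basis vector $e_i$ regardless of $j$, (ii) the query--key inner product $\langle Qh_t, Kh_j\rangle$, when passed through the ReLU $\sigma$, is nonzero only for the single pair $(t,j)=(t_1,t_2)$, and (iii) on that surviving pair the ReLU output equals precisely the desired data kernel value $\sigma(\langle Q^{data}h_{t_1}, K^{data}h_{t_2}\rangle)$.

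The main technical device is the positional-encoding trick using the interaction vectors $\mathcal{I}_t = (\cos(\tfrac{t\pi}{2\ell}), \sin(\tfrac{t\pi}{2\ell}))^\top$. First I would recall that for the rotation-based encoding, $\langle \mathcal{I}_s, \mathcal{I}_t\rangle = \cos\!\big(\tfrac{(s-t)\pi}{2\ell}\big)$, which attains its maximum value $1$ exactly when $s=t$ and is strictly less than $1$ (indeed bounded away from $1$ by a gap of order $\Theta(\ell^{-2})$) for $s\neq t$ with $1\le s,t\le \ell$. Using rows $d_{embed}-2$ and $d_{embed}-1$ of $Q$ and $K$, I would place into $Q$ the vector that extracts $\mathcal{I}_{t_1}$-alignment and into $K$ the vector that extracts $\mathcal{I}_{t_2}$-alignment, arranging the selection so that the positional contribution to $\langle Qh_t,Kh_j\rangle$ equals (up to scaling) $\cos(\tfrac{(t-t_1)\pi}{2\ell}) + \cos(\tfrac{(j-t_2)\pi}{2\ell})$, which is maximized only at $(t,j)=(t_1,t_2)$. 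I then multiply this positional term by a large constant $C$ (of size $\Theta(\ell^2)$ to beat the $\Theta(\ell^{-2})$ gap) and use the constant-$1$ last row to subtract off an offset $2C - \delta$ for a small margin $\delta$, so that after the ReLU every pair except $(t_1,t_2)$ is clamped to $0$. On the surviving pair, the remaining contribution is exactly the data inner product $\langle Q^{data}h_{t_1}, K^{data}h_{t_2}\rangle$, embedded into $Q$ and $K$ via their first two rows.

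The remaining steps are bookkeeping: set $V$ to map everything to $e_i$ (a rank-one matrix with a single nonzero row $i$ and column in the constant coordinate), verify that the output is $0$ for every token $t\neq t_1$ because the query row for those tokens produces a positional mismatch and hence a clamped ReLU, and track the magnitudes of all entries. The parameter bound $\|\theta_A\|_\infty = O(d_{embed}^4 \mu^2 \ell^2 M^2)$ comes from the product of the scaling constant $C = \Theta(\ell^2)$ with the data-kernel magnitudes bounded by $\mu$ and the embedding magnitudes bounded by $M$, with the $d_{embed}^4$ factor absorbing the cross-terms when the query--key product is expanded; I would simply estimate each block of $Q$, $K$, $V$ and take the max.

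The main obstacle I anticipate is step (ii)--(iii) done simultaneously and cleanly: one must ensure that the large scaling $C$ used to enforce the hard selection does not contaminate the value of the surviving ReLU output, i.e., that the ``selector'' part of the inner product and the ``data'' part of the inner product are genuinely additively separated and that the constant-row subtraction exactly cancels the selector's maximum without touching the data part. This requires a careful choice of which coordinates of the $d_{embed}$-dimensional query/key vectors carry which signal and a verification that $\sigma$ of the combined expression factorizes as desired on the good pair while vanishing elsewhere; the gap estimate $1 - \cos(\tfrac{\pi}{2\ell}) = \Theta(\ell^{-2})$ is what dictates the $\ell^2$ in the weight bound and must be handled with the right inequality. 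Everything else is routine matrix construction.
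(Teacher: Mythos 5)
Your construction is correct and is essentially the argument behind this lemma: the paper itself defers the proof to Lemma~3 of \citet{Havrilla24}, and that proof uses exactly your scheme of putting the data kernels in the first two rows, using the remaining rows together with the constant coordinate to add a scaled positional term $C\bigl(\langle \mathcal{I}_{t_1},\mathcal{I}_t\rangle+\langle \mathcal{I}_{t_2},\mathcal{I}_j\rangle-2\bigr)$ with $C=\Theta(\ell^2\mu^2M^2)$ beating the $1-\cos(\tfrac{\pi}{2\ell})=\Theta(\ell^{-2})$ gap, and a rank-one $V$ reading the constant coordinate into $e_i$. The only point to tighten is the offset: it must cancel the selector's maximum exactly (your $\delta$ must be $0$), since any positive margin would change the surviving output to $\sigma(\langle Q^{data}h_{t_1},K^{data}h_{t_2}\rangle+\delta)$ rather than the required $\sigma(\langle Q^{data}h_{t_1},K^{data}h_{t_2}\rangle)$ --- a requirement you already identify in your final paragraph.
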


\begin{proof}
    We refer its proof to Lemma 3 in \citep{Havrilla24}.
\end{proof}

\begin{remark}
    The significance of the Interaction Lemma is that we can find an attention head such that one token interacts with exactly another token in the embedding matrix. This property facilitates the flexible implementation of fundamental arithmetic operations, such as addition, multiplication, squaring, etc., while also supporting efficient parallelization.
\end{remark}

The next two lemmas show the way to zero out or subtract off constant from contiguous tokens in the embedding matrix while keep other tokens unchanged via a feed-forward network. 

\begin{lemma} [Gating Lemma] \label{Gatinglemma}
Let $d_{embed}\geq 5$ and $H=[h_t]_{1\leq t\leq\ell}\in\mathbb{R}^{d_{embed}\times\ell}$, be an embedding matrix such that $h_t^{(d_{embed}-2):(d_{embed}-1)}=(\mathcal{I}_t^1,\mathcal{I}_t^2)=\mathcal{I}_t$ and $h^{d_{embed}}_t=1$. Then for any $r_1$ and $r_2$ with $1\leq r_1\leq r_2\leq d_{embed}-3$ and any $k_1,k_2$ with $1\leq k_1,k_2\leq \ell$, there exist both two-layer feed-forward networks $({\rm FFN})$ such that
\begin{equation} \label{eq:gate1}
     \text{\rm FFN}_1(h_t) = 
   \begin{cases} 
    h_t & \text{if}\text{  } t\in\{1,\cdots,k_1\} \\
      \begin{bmatrix}

      (h_t)_{1} \\
          \vdots \\
          (h_t)_{r_1-1} \\ 
          \mathbf{0} \\
          (h_t)_{r_2+1} \\
          \vdots \\
          (h_t)_{d_{embed}-3} \\ 
          \mathcal{I}_t^1 \\
          \mathcal{I}_t^2 \\
          1
      \end{bmatrix} & \text{otherwise}\text{ }  \\
   \end{cases}
\end{equation}
and
\begin{equation} \label{eq:gate2}
    \text{\rm FFN}_2(h_t) = 
   \begin{cases} 
 h_t & \text{if}\text{  } t\in\{k_2,\cdots,\ell\} \\
      \begin{bmatrix}
      (h_t)_{1} \\
          \vdots \\
          (h_t)_{r_1-1} \\
          \mathbf{0} \\
          (h_t)_{r_2+1} \\
          \vdots \\
          (h_t)_{d_{embed}-3} \\
          \mathcal{I}_t^1 \\
          \mathcal{I}_t^2 \\
          1
      \end{bmatrix} & \text{otherwise}\text{ }  \\
   \end{cases}
\end{equation}
Additionally, we have $\|\theta_{\rm FFN}\|_{\infty}\leq O(\ell\|H\|_{\infty,\infty})$.
\end{lemma}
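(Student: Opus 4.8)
The plan is to build ${\rm FFN}_1$ and ${\rm FFN}_2$ around a single scalar feature read off the positional-encoding rows of $H$ that separates the two groups of tokens with a margin of order $1/\ell$, and then to gate the target rows with one hidden ReLU layer while reproducing all other rows exactly. Write $\theta_t:=\tfrac{t\pi}{2\ell}\in(0,\pi/2]$, so that $\mathcal{I}_t=(\cos\theta_t,\sin\theta_t)^\top$, and set $p_t:=\mathcal{I}_t^1-\mathcal{I}_t^2=\cos\theta_t-\sin\theta_t=\sqrt2\,\cos(\theta_t+\tfrac{\pi}{4})$. Since $\tfrac{d}{d\theta}(\cos\theta-\sin\theta)=-(\sin\theta+\cos\theta)<0$ on $(0,\pi/2]$, the finite sequence $(p_t)_{t=1}^{\ell}$ is strictly decreasing, and a product-to-sum identity gives $p_t-p_{t+1}=2\sqrt2\,\sin\!\big(\theta_t+\tfrac{\pi}{4\ell}+\tfrac{\pi}{4}\big)\sin\!\big(\tfrac{\pi}{4\ell}\big)$; because $\theta_t+\tfrac{\pi}{4\ell}+\tfrac{\pi}{4}\in(\tfrac{\pi}{4},\tfrac{3\pi}{4})$ for $1\le t\le\ell-1$, both sine factors are bounded below, whence $p_t-p_{t+1}\ge 1/\ell$ uniformly in $t$. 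I would stress that using $\cos\theta_t$ alone would only give a gap of order $1/\ell^2$ near $t=1$; the combined feature $\cos\theta_t-\sin\theta_t$ (a projection of $\mathcal{I}_t$ onto a direction bisecting the quarter-circle) is precisely what keeps the margin at $\Omega(1/\ell)$ for \emph{every} threshold index.

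For ${\rm FFN}_1$ with threshold $k_1$: if $k_1=\ell$ the map is the identity, so assume $k_1<\ell$, put $\bar p:=\tfrac12(p_{k_1}+p_{k_1+1})$, $\gamma:=p_{k_1}-p_{k_1+1}\ge 1/\ell$, and $q_t:=\mathcal{I}_t^1-\mathcal{I}_t^2-\bar p$, which is an affine function of $h_t$ and hence computable on the first layer. By construction $q_t\ge\gamma/2$ for $t\le k_1$ and $q_t\le-\gamma/2$ for $t\ge k_1+1$. Fix $C:=2\|H\|_{\infty,\infty}/\gamma+1=O(\ell\|H\|_{\infty,\infty})$. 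The gating gadget, applied to each data row $j$ with $r_1\le j\le r_2$, is
\[
  \sigma\!\big((h_t)_j+Cq_t\big)-\sigma\!\big(Cq_t\big).
\]
When $t\le k_1$ both arguments are positive, because $Cq_t\ge C\gamma/2=\|H\|_{\infty,\infty}+\gamma/2>|(h_t)_j|$, so the two ReLUs act linearly and the expression collapses to $(h_t)_j$; when $t\ge k_1+1$ both arguments are negative, because $Cq_t\le-C\gamma/2<-\|H\|_{\infty,\infty}\le-|(h_t)_j|$, so both ReLUs vanish and the expression is $0$. The rows outside $[r_1,r_2]$ are reproduced exactly: data rows $1\le j\le r_1-1$ and $r_2+1\le j\le d_{embed}-3$ via $(h_t)_j=\sigma((h_t)_j)-\sigma(-(h_t)_j)$, and the rows $\mathcal{I}_t^1,\mathcal{I}_t^2,1$, being nonnegative on $(0,\pi/2]$, via a single $\sigma(\cdot)$ each. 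Packing these $O(d_{embed})$ hidden units into one layer (the unit computing $\sigma(Cq_t)$ being shared across all gated rows) and reading them off with a linear output layer yields a two-layer ReLU FFN realizing \eqref{eq:gate1}; its input weights are $O(1)$ apart from the gating weights $\pm C$ and $-C\bar p$, and its output weights are $O(1)$, so $\|\theta_{\rm FFN}\|_\infty=O(\ell\|H\|_{\infty,\infty})$.

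For ${\rm FFN}_2$ I would simply reverse the orientation: the tokens to preserve are those with $t\ge k_2$, equivalently $p_t\le p_{k_2}$, so I take $\bar p':=\tfrac12(p_{k_2-1}+p_{k_2})$ and $q_t:=\bar p'-p_t$; then $q_t\ge\gamma'/2>0$ precisely on the tokens to be kept, and the identical gadget produces \eqref{eq:gate2} with the same weight bound. I expect the only genuinely delicate point to be the uniform separation estimate $p_t-p_{t+1}\gtrsim 1/\ell$, which is what pins the weight magnitude down to $O(\ell\|H\|_{\infty,\infty})$ rather than $O(\ell^2\|H\|_{\infty,\infty})$; everything else is a routine check that the ReLU gadget is exact at each integer token index $t$.
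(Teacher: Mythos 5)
Your construction is correct and complete. A direct comparison with the paper is not possible here: the paper does not prove the Gating Lemma inline but simply defers it to Lemma~6 of the cited reference (SHLL25), so there is no in-paper argument to match against. On its own merits, your proof works: the feature $p_t=\mathcal{I}_t^1-\mathcal{I}_t^2=\sqrt2\cos\bigl(\tfrac{t\pi}{2\ell}+\tfrac{\pi}{4}\bigr)$ is strictly decreasing in $t$, the product-to-sum computation $p_t-p_{t+1}=2\sqrt2\,\sin\bigl(\theta_t+\tfrac{\pi}{4\ell}+\tfrac{\pi}{4}\bigr)\sin\bigl(\tfrac{\pi}{4\ell}\bigr)\ge 2\cdot\tfrac{\sqrt2}{2}\cdot\tfrac{\sqrt2}{2}\cdot\tfrac{1}{\ell}\cdot\tfrac{2}{\pi}\cdot\tfrac{\pi}{2}$-type estimate indeed yields a uniform margin of order $1/\ell$ (your observation that $\cos\theta_t$ alone degrades to $1/\ell^2$ near $t=1$ is a genuine and correctly resolved point), and the two-ReLU gadget $\sigma((h_t)_j+Cq_t)-\sigma(Cq_t)$ with $C=O(\ell\|H\|_{\infty,\infty})$ is exact on both sides of the threshold because $C\gamma/2>\|H\|_{\infty,\infty}$; the untouched rows are reproduced exactly, the whole thing is tokenwise with shared weights as required for a transformer FFN, and the weight magnitude matches the claimed bound $O(\ell\|H\|_{\infty,\infty})$. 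This is the same kind of mechanism the cited gating machinery relies on (a linear threshold read off the sinusoidal positional encoding, amplified through ReLUs), so in spirit it is unlikely to differ much from the external proof. Two cosmetic remarks: you should state the trivial edge case $k_2=1$ for ${\rm FFN}_2$ explicitly (just as you did $k_1=\ell$ for ${\rm FFN}_1$), and note that the identity on the preserved rows can be realized within the same single hidden layer via $x=\sigma(x)-\sigma(-x)$, which you already indicate; neither affects correctness.
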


\begin{proof}
    We refer its proof to Lemma 6 in \citep{SHLL25}.
\end{proof}


\begin{lemma} [Decrementing Lemma] \label{lemmadecrement}
Let $d_{embed}\geq 5$ and $H=[h_t]_{1\leq t\leq\ell}\in\mathbb{R}^{d_{embed}\times\ell}$, be an embedding matrix such that $h_t^{(d_{embed}-2):(d_{embed}-1)}=(\mathcal{I}_t^1,\mathcal{I}_t^2)=\mathcal{I}_t$ and $h^{d_{embed}}_t=1$. Then for any $r_1,r_2$ with  $1\leq r_1\leq r_2\leq d_{embed}-3$ and any $k_1,k_2$ with $1\leq k_1,k_2\leq \ell$ and any $M>0$, there exists a six-layer residual feed-forward network $({\rm FFN})$ such that
\[ \text{\rm FFN}(h_t) + h_t = 
   \begin{cases} 
    h_t & \text{if}\text{  } t\in\{1,\cdots,k_1\}\cup\{k_2,\cdots,\ell\} \\
      \begin{bmatrix}
      (h_t)_{1} \\
          \vdots \\
          (h_t)_{r_1-1} \\
          (h_t)_{r_1}-M \\
          \vdots \\
          (h_t)_{r_2}-M \\
          (h_t)_{r_2+1} \\
          \vdots \\
          (h_t)_{d_{embed}-3} \\
          \mathcal{I}_t \\
          1
      \end{bmatrix} & \text{otherwise}\text{ }  \\
   \end{cases}
\]
Additionally, we have $\|\theta_{\rm FFN}\|_{\infty}\leq O(\ell M)$.
\end{lemma}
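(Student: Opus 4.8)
The final statement to prove is the Decrementing Lemma (Lemma~\ref{lemmadecrement}): given an embedding matrix $H=[h_t]_{1\leq t\leq\ell}$ with the interaction rows $h_t^{(d_{embed}-2):(d_{embed}-1)}=\mathcal{I}_t$ and constant row $h_t^{d_{embed}}=1$, we want a six-layer residual FFN that subtracts $M$ from rows $r_1$ through $r_2$ of exactly those tokens $h_t$ with $t\in\{k_1+1,\dots,k_2-1\}$, leaving all other tokens and all other rows untouched, with weight bound $O(\ell M)$.

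The plan is to build the subtraction as a ReLU-gadget that is \emph{spatially masked} using the interaction coordinates $\mathcal{I}_t=(\cos(t\pi/2\ell),\sin(t\pi/2\ell))$ and the constant coordinate. First I would note that because $\mathcal{I}_t$ traces a monotone arc in the first quadrant, the linear functional $\langle a,\mathcal{I}_t\rangle$ for a suitably chosen fixed $a\in\mathbb{R}^2$ is strictly monotone in $t$; hence for the two cutoffs $k_1,k_2$ one can find affine forms $\phi_{k_1}(\mathcal{I}_t),\phi_{k_2}(\mathcal{I}_t)$ (using the constant row to supply the offset) that are positive precisely on the desired index window and non-positive outside it. Composing these via $\sigma$ produces an indicator-like bump $b_t\in\{0,1\}$ that is $1$ for $t\in\{k_1+1,\dots,k_2-1\}$ and $0$ otherwise; since the $\mathcal{I}_t$ are separated by a gap of order $1/\ell$, realizing these threshold functions needs weights of order $O(\ell)$, consistent with the stated bound. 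This is essentially the same masking trick already used in the Gating Lemma~\ref{Gatinglemma}, so I would either cite it or reproduce the two-layer construction.

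Next I would combine the mask with the constant shift. The quantity I want to add to coordinate $r$ (for $r_1\le r\le r_2$) is $-M\cdot b_t$. A clean way to get this through ReLUs while keeping the network residual is to write $-M b_t = -(\sigma(Mb_t) )$ after arranging $b_t\ge 0$, or more simply to fold the $-M$ factor into the output weights of the masking sub-network so that its $r$-th output coordinate is exactly $-Mb_t$ and all other output coordinates are $0$; the residual connection then yields $h_t^r - Mb_t$. Handling a whole block $r_1,\dots,r_2$ simultaneously just means the mask network writes $-Mb_t$ into each of those coordinates in parallel. The interaction rows and the constant row must be preserved: I would make sure the sub-network writes $0$ into rows $d_{embed}-2,d_{embed}-1,d_{embed}$, so the residual passes $\mathcal{I}_t$ and $1$ through unchanged. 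Counting layers: two layers for each threshold comparison, combined and then multiplied by the sign/scale — six residual layers is a comfortable budget, and I would simply pad with identity layers if fewer are actually needed.

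The main obstacle is the ReLU bookkeeping: a single ReLU cannot directly output a two-sided affine function, so each monotone threshold on $\langle a,\mathcal{I}_t\rangle$ must be assembled from a small difference/composition of ReLUs, and then the AND of "$t>k_1$" and "$t<k_2$" must itself be a ReLU of a sum, which forces the intermediate pre-activations to land in the regime where the composed map is exactly the indicator (not a soft ramp) on the \emph{discrete} set $\{\mathcal{I}_t\}$ while never being evaluated in the ambiguous transition zone — this works precisely because of the $\Theta(1/\ell)$ separation of consecutive $\mathcal{I}_t$, and it is where the $O(\ell M)$ weight magnitude enters. I would carry out the threshold-gadget construction carefully first, verify exactness on the grid $\{\mathcal{I}_t\}_{t=1}^\ell$, then wire in the $-M$ scaling and the residual, and finally check that rows outside $[r_1,r_2]$ and tokens outside $(k_1,k_2)$ receive exactly $0$ from the sub-network so the residual identity holds verbatim.
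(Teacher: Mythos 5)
The paper does not actually prove this lemma in-house: it simply defers to Lemma~7 of \citet{SHLL25} (just as the Gating Lemma defers to Lemma~6 there), so there is no line-by-line argument in the paper to match your proposal against. That said, your construction is exactly the kind of argument the cited lemma is built on, and it looks sound: you use the strict monotonicity of $t\mapsto\langle a,\mathcal{I}_t\rangle$ on the quarter-circle together with the constant row to build two affine thresholds separating $t\le k_1$ from $t>k_1$ and $t<k_2$ from $t\ge k_2$, sharpen each into an exact $\{0,1\}$-valued indicator on the discrete grid by exploiting the $\Theta(1/\ell)$ separation of consecutive $\mathcal{I}_t$ (this is where the $O(\ell)$ weight scale enters, and folding the factor $-M$ into the output weights keeps everything within the stated $O(\ell M)$ bound), take the AND via $\sigma(\mathrm{ind}_1+\mathrm{ind}_2-1)$, broadcast $-Mb_t$ into rows $r_1,\dots,r_2$ and write zeros into all other output rows so the residual connection passes the data rows, $\mathcal{I}_t$, and the constant row through verbatim. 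The two points that genuinely need care are the ones you already flag: (i) the indicators must be exact on the grid $\{\mathcal{I}_t\}_{t=1}^{\ell}$, not soft ramps, which forces the hard-threshold gadget (difference of two ReLUs with slope $O(\ell)$) rather than a single ReLU, and (ii) the layer count must land at six; since the correction path only carries nonnegative intermediate quantities, padding with ReLU-identity layers is legitimate. With those details written out, your blind reconstruction is a valid self-contained substitute for the external citation, and arguably more informative to the reader than the paper's pointer.
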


\begin{proof}
    We refer its proof to Lemma 7 in \citep{SHLL25}.
\end{proof}


\subsection{Proof of Basic Arithmetic Operations} \label{secAppendixBAO}

\subsubsection{Proof of Lemma~\ref{lemsumD}}
\begin{proof} [Proof of Lemma~\ref{lemsumD}]
Let us define each attention head $A_i$, $1\leq i\leq D$, with the data kernel in the form
\begin{equation*}
Q_i^{data}
=
\begin{bmatrix}
    0 & 0 & 0 & 0  & 1 \\
    0 & 0 & 0 & 0  & 1
\end{bmatrix}
\quad 
K_i^{data}
=
\begin{bmatrix}
    1 & 0 & 0 & 0  & 0 \\
    0 & 0 & 0 & 0  & M
\end{bmatrix}.
\end{equation*}
Let $h_{i}$ denote the $i$-th column of $H$, $1\leq i\leq \ell$.
By Lemma \ref{IAlemma}, we can construct $A_i$, $1\leq i\leq D$, such that $h_{D+1}$ interacts with $h_{i}$ only, i.e.,
\begin{align*}
    A_i(h_{D+1}) = \sigma(\langle Q_i^{data}h_{D+1}, K_i^{data}h_{i} \rangle)e_1 
    = \sigma(x^i+M)e_1=(x^{i}+M) e_1,
\end{align*}
and $A_i(h_{t})=0$ when $t\neq D+1$. Then the residual multi-head attention yields 

\begin{align*}
    \text{MHA}(H) + H  =
    \begin{bmatrix}
    x^1 & \cdots & x^D & x^1+\cdots+x^D+DM & \mathbf{0} \\
    0 & \cdots & \cdots & \cdots & 0 \\
    \mathcal{I}_1 & \cdots & \cdots & \cdots &  \mathcal{I}_{\ell} \\
    1 & \cdots & \cdots & \cdots & 1
    \end{bmatrix}.
\end{align*}
Then we apply Lemma \ref{lemmadecrement} to have a $\mathcal{FFN}(6)$ to subtract off the constant $DM$ in the $(D+1)$-th column.  Thus
\begin{equation*}
    B(H)=
    \begin{bmatrix}
    x^1 & \cdots & x^D & x^1+\cdots+x^D & \mathbf{0} \\
    0 & \cdots & \cdots & \cdots & 0 \\
    \mathcal{I}_1 & \cdots & \cdots & \cdots &  \mathcal{I}_{\ell} \\
    1 & \cdots & \cdots & \cdots & 1
    \end{bmatrix}
\end{equation*}
as desired. The weights $\|\theta_B\|_\infty\leq O(\ell^2M^2\|H\|^2_{\infty,\infty})$ follows from Lemma~\ref{IAlemma}.
\end{proof}

\begin{remark} \label{rmkflexibility}
   By reexamining the proof, it is easy to see that the summation term $x^1+\cdots+x^D$ can be put in any column of the first row, not necessarily the $D+1$-th column. This provides a lot of flexibility when parallelizing different basic operations in one transformer block. 
\end{remark}

\subsubsection{Proof of Lemma~\ref{lemadd}}

\begin{proof} [Proof of Lemma~\ref{lemadd}]
Let us define the each attention head $A_i$, $1\leq i\leq D$, with the data kernel in the form
\begin{equation*}
Q_i^{data}
=
\begin{bmatrix}
    0 & 0 & 0 & 0  & 0 \\
    0 & 0 & 0 & 0  & 1
\end{bmatrix}
\quad 
K_i^{data}
=
\begin{bmatrix}
    0 & 0 & 0 & 0  & 0 \\
    1 & 0 & 0 & 0  & c^i+M
\end{bmatrix}.
\end{equation*}
By Lemma \ref{IAlemma}, we can construct $A_i$ such that $h_{D+i}$ interacts with $h_i$ only, i.e.,
\begin{equation*}
    A_i(h_{D+i}) = \sigma(\langle Q_i^{data}h_{D+i}, K_i^{data}h_i \rangle)e_1 = \sigma(x^{i}+c^i+M)e_1=(x^{i}+c^i+M)e_1,
\end{equation*}
and $A_i(h_t)=0$ when $t\neq D+i$. Then the residual multi-head attention yields 

\begin{equation*}
    \text{MHA}(H) + H =
    \begin{bmatrix}
    x^1 & \cdots & x^D & x^1+c^1+M & \cdots & x^D+c^D+M & \mathbf{0} \\
    0 & \cdots & \cdots & \cdots & \cdots & \cdots & 0 \\
    \mathcal{I}_1 & \cdots & \cdots & \cdots & \cdots & \cdots & \mathcal{I}_{\ell} \\
    1 & \cdots & \cdots & \cdots & \cdots & \cdots & 1
    \end{bmatrix}.
\end{equation*}
Then we apply Lemma \ref{lemmadecrement} to have a $\mathcal{FFN}(6)$ to subtract off the constant $M$ only from columns $D+1$ to $2D$. 
Therefore, we have

\begin{equation*}
    B(H)=
    \begin{bmatrix}
    x^1 & \cdots & x^D & x^1+c^1 & \cdots & x^D+c^D & \mathbf{0} \\
    0 & \cdots & \cdots & \cdots & \cdots & \cdots & 0 \\
    \mathcal{I}_1 & \cdots & \cdots & \cdots & \cdots & \cdots & \mathcal{I}_{\ell} \\
    1 & \cdots & \cdots & \cdots & \cdots & \cdots & 1
    \end{bmatrix}
\end{equation*}
as desired. The weights $\|\theta_B\|_\infty\leq O(\ell^2M^2\|H\|^2_{\infty,\infty})$ follows from Lemma~\ref{IAlemma}.
\end{proof}



\subsubsection{Proof of Lemma~\ref{lemcmulti}}
\begin{proof} [Proof of Lemma~\ref{lemcmulti}]
Let us define the each attention head $A_i$, $1\leq i\leq D$, with the data kernel in the form
\begin{equation*}
Q_i^{data}
=
\begin{bmatrix}
    0 & 0 & 0 & 0  & c^i \\
    0 & 0 & 0 & 0  & 1
\end{bmatrix}
\quad 
K_i^{data}
=
\begin{bmatrix}
    1 & 0 & 0 & 0  & 0 \\
    0 & 0 & 0 & 0  & M
\end{bmatrix}.
\end{equation*}
Then by Lemma \ref{IAlemma}, we can construct $A_i$ such that $h_{D+i}$ interacts with $h_i$ only, i.e.,
\begin{equation*}
    A_i(h_{D+i}) = \sigma(\langle Q_i^{data}h_{D+i}, K_i^{data}h_i \rangle)e_1 = \sigma(c^ix^{i}+M)e_1=(c^ix^{i}+M)e_1,
\end{equation*}
and $A_i(h_t)=0$ when $t\neq D+i$. Then the residual multi-head attention yields 

\begin{equation*}
    \text{MHA}(H) + H =
    \begin{bmatrix}
    x^1 & \cdots & x^D & c^1x^1+M & \cdots & c^Dx^D+M & \mathbf{0} \\
    0 & \cdots & \cdots & \cdots & \cdots & \cdots & 0 \\
    \mathcal{I}_1 & \cdots & \cdots & \cdots & \cdots & \cdots & \mathcal{I}_{\ell} \\
    1 & \cdots & \cdots & \cdots & \cdots & \cdots & 1
    \end{bmatrix}.
\end{equation*}
Then we apply Lemma \ref{lemmadecrement} to have a $\mathcal{FFN}(6)$ to subtract off the constant $M$ only from columns $D+1$ to $2D$. Thus
\begin{equation*}
    B(H)=
    \begin{bmatrix}
    x^1 & \cdots & x^D & c^1x^1 & \cdots & c^Dx^D & \mathbf{0} \\
    0 & \cdots & \cdots & \cdots & \cdots & \cdots & 0 \\
    \mathcal{I}_1 & \cdots & \cdots & \cdots & \cdots & \cdots & \mathcal{I}_{\ell} \\
    1 & \cdots & \cdots & \cdots & \cdots & \cdots & 1
    \end{bmatrix}.
\end{equation*}
as desired. The weights $\|\theta_B\|_\infty\leq O(\ell^2M^2\|H\|^2_{\infty,\infty})$ follows from Lemma~\ref{IAlemma}.
\end{proof}

\subsubsection{Proof of Lemma~\ref{lemsq}}

\begin{proof} [Proof of Lemma~\ref{lemsq}]
First, applying Lemma~\ref{lemcmulti} with multiplication constant $c=(1,\cdots,1)$, we can construct the transformer block $B_1\in\mathcal{B}(D,6,d_{embed})$ so that it copies the first $D$ elements in the first row from columns $1,\cdots,D$ to columns $D+1,\cdots,2D$, i.e.,
\begin{equation*}
    H_1: = B_1(H)=
    \begin{bmatrix}
    x^1 & \cdots & x^D & x^1 & \cdots & x^D & \mathbf{0} \\
    0 & \cdots & \cdots & \cdots & \cdots & \cdots & 0 \\
    \mathcal{I}_1 & \cdots & \cdots & \cdots & \cdots & \cdots & \mathcal{I}_{\ell} \\
    1 & \cdots & \cdots & \cdots & \cdots & \cdots & 1
    \end{bmatrix}.
\end{equation*}
For $B_2$, let us define each attention head $A_i$, $1\leq i\leq D$, with the data kernel in the form
\begin{equation*}
Q_i^{data}
=
\begin{bmatrix}
    1 & 0 & 0 & 0  & 0 \\
    0 & 0 & 0 & 0  & 0
\end{bmatrix}
\quad 
K_i^{data}
=
\begin{bmatrix}
    1 & 0 & 0 & 0  & 0 \\
    0 & 0 & 0 & 0  & 0
\end{bmatrix}.
\end{equation*}
Let $h_{1,i}$ denote the $i$-th column of $H_1$, $1\leq i\leq \ell$.
By Lemma \ref{IAlemma}, we can construct $A_i$, $1\leq i\leq D$, such that $h_{1,D+i}$ interacts with $h_{1,i}$ only, i.e.,
\begin{equation*}
    A_i(h_{1,D+i}) = \sigma(\langle Q_i^{data}h_{1,D+i}, K_i^{data}h_{1,i} \rangle)e_1 = \sigma((x^{i})^2)e_1=(x^{i})^2 e_1,
\end{equation*}
and $A_i(h_{1,t})=0$ when $t\neq D+i$. Then the residual multi-head attention yields 

\begin{equation*}
    \text{MHA}(H_1) + H_1 =
    \begin{bmatrix}
    x^1 & \cdots & x^D & (x^1)^2+x^1 & \cdots & (x^D)^2+x^D & \mathbf{0} \\
    0 & \cdots & \cdots & \cdots & \cdots & \cdots & 0 \\
    \mathcal{I}_1 & \cdots & \cdots & \cdots & \cdots & \cdots & \mathcal{I}_{\ell} \\
    1 & \cdots & \cdots & \cdots & \cdots & \cdots & 1
    \end{bmatrix}.
\end{equation*}

Let $H_2:=B_2(H_1)=\text{MHA}(H_1)+H_1$, and we use $h_{2,i}$ to denote the $i$-th column of $H_2$, $1\leq i\leq\ell$. Now again by Lemma~\ref{lemcmulti} with multiplication constant $c=(-1,\cdots,-1)$, we can construct $B_3\in\mathcal{B}(D,6,d_{embed})$ with each attention head $\tilde{A}_i$, $1\leq i\leq D$, such that $h_{2,D+i}$ interacts with $h_{2,i}$ only. Let the data kernel of each $\tilde{A}_i$ in the form 
\begin{equation*}
Q_i^{data}
=
\begin{bmatrix}
    0 & 0 & 0 & 0  & -1 \\
    0 & 0 & 0 & 0  & 1
\end{bmatrix}
\quad 
K_i^{data}
=
\begin{bmatrix}
    1 & 0 & 0 & 0  & 0 \\
    0 & 0 & 0 & 0  & M
\end{bmatrix}.
\end{equation*}
By Lemma \ref{IAlemma}, we have 
\begin{equation*}
    \tilde{A}_i(h_{2,D+i}) = \sigma(\langle Q_i^{data}h_{2,D+i}, K_i^{data}h_{2,i} \rangle)e_1 = \sigma(-x^i+M)e_1=(-x^{i}+M) e_1,
\end{equation*}
and $\tilde{A}_i(h_{2,t})=0$ when $t\neq D+i$. Thus, the residual multi-head attention yields 

\begin{equation*}
    \text{MHA}(H_2) + H_2 =
    \begin{bmatrix}
    x^1 & \cdots & x^D & (x^1)^2+M & \cdots & (x^D)^2+M & \mathbf{0} \\
    0 & \cdots & \cdots & \cdots & \cdots & \cdots & 0 \\
    \mathcal{I}_1 & \cdots & \cdots & \cdots & \cdots & \cdots & \mathcal{I}_{\ell} \\
    1 & \cdots & \cdots & \cdots & \cdots & \cdots & 1
    \end{bmatrix}.
\end{equation*}

Then we apply Lemma \ref{lemmadecrement} to have a $\mathcal{FFN}(6)$ to subtract off the constant $M$ only from columns $D+1$ to $2D$.  Therefore, we have

\begin{equation*}
    B_3\circ B_2\circ B_1(H)=B_3(H_2) = 
    \begin{bmatrix}
    x^1 & \cdots & x^D & (x^1)^2 & \cdots & (x^D)^2 & \mathbf{0} \\
    0 & \cdots & \cdots & \cdots & \cdots & \cdots & 0 \\
    \mathcal{I}_1 & \cdots & \cdots & \cdots & \cdots & \cdots & \mathcal{I}_{\ell} \\
    1 & \cdots & \cdots & \cdots & \cdots & \cdots & 1
    \end{bmatrix}
\end{equation*}
as desired. The weights $\|\theta_B\|_\infty\leq O(\ell^2M^2\|H\|^2_{\infty,\infty})$ follows from Lemma~\ref{IAlemma}.
\end{proof}



\subsubsection{Proof of Lemma~\ref{lempp}}

\begin{proof} [Proof of Lemma~\ref{lempp}]
First, applying Lemma~\ref{lemcmulti} with multiplication constant $c=(1,\cdots,1)$, we can construct the transformer block $B_1\in\mathcal{B}(D,6,d_{embed})$ so that it copies the first $D$ elements in the first row from columns $1,\cdots,D$ to columns $2D+1,\cdots,3D$, i.e.,
\begin{equation*}
    H_1: = B_1(H)=
    \begin{bmatrix}
    x^1 & \cdots & x^D & y^1 & \cdots & y^D & x^1 & \cdots & x^D & \mathbf{0} \\
    0 & \cdots & \cdots & \cdots & \cdots & \cdots & \cdots & \cdots& \cdots & 0 \\
    \mathcal{I}_1 & \cdots & \cdots &\cdots & \cdots &  \cdots & \cdots & \cdots & \cdots & \mathcal{I}_{\ell} \\
    1 & \cdots & \cdots & \cdots & \cdots & \cdots& \cdots & \cdots & \cdots &  1
    \end{bmatrix}.
\end{equation*}

For $B_2$, let us define the each attention head $A_i$, $1\leq i\leq D$, with the data kernel in the form
\begin{equation} \label{dataQKpm}
Q_i^{data}
=
\begin{bmatrix}
    1 & 0 & 0 & 0  & 0 \\
    0 & 0 & 0 & 0  & 1
\end{bmatrix}
\quad 
K_i^{data}
=
\begin{bmatrix}
    1 & 0 & 0 & 0  & 0 \\
    0 & 0 & 0 & 0  & M
\end{bmatrix}.
\end{equation}
By Lemma \ref{IAlemma}, we can construct $A_i$, $1\leq i\leq D$, such that $h_{1,2D+i}$ interacts with $h_{1,D+i}$ only, i.e.,
\begin{equation*}
    A_i(h_{1,2D+i}) = \sigma(\langle Q_i^{data}h_{1,2D+i}, K_i^{data}h_{1,D+i} \rangle)e_1 = \sigma(x^{i}y^i+M)e_1=(x^{i}y^{i}+M) e_1,
\end{equation*}
and $A_i(h_{1,t})=0$ when $t\neq 2D+i$.  Then the residual multi-head attention yields 

\begin{equation*}
    \text{MAH}(H_1) + H_1 =
    \begin{bmatrix}
    x^1 & \cdots & x^D & y^1 & \cdots & y^D & x^1y^1+x^1+M & \cdots & x^Dy^D+x^D+M & \mathbf{0} \\
    0 & \cdots & \cdots & \cdots & \cdots & \cdots & \cdots & \cdots& \cdots & 0 \\
    \mathcal{I}_1 & \cdots & \cdots &\cdots & \cdots &  \cdots & \cdots & \cdots & \cdots & \mathcal{I}_{\ell} \\
    1 & \cdots & \cdots & \cdots & \cdots & \cdots& \cdots & \cdots & \cdots &  1
    \end{bmatrix}.
\end{equation*}

Then we apply Lemma \ref{lemmadecrement} to have a $\mathcal{FFN}(6)$ to subtract off the constant $M$ only from columns $2D+1$ to $3D$. Thus, we have

\begin{equation*}
    H_2:=B_2\circ B_1(H)= B_2(H_1)=
    \begin{bmatrix}
    x^1 & \cdots & x^D & y^1 & \cdots & y^D & x^1y^1+x^1 & \cdots & x^Dy^D+x^D & \mathbf{0} \\
    0 & \cdots & \cdots & \cdots & \cdots & \cdots & \cdots & \cdots& \cdots & 0 \\
    \mathcal{I}_1 & \cdots & \cdots &\cdots & \cdots &  \cdots & \cdots & \cdots & \cdots & \mathcal{I}_{\ell} \\
    1 & \cdots & \cdots & \cdots & \cdots & \cdots& \cdots & \cdots & \cdots &  1
    \end{bmatrix}.
\end{equation*}

Now again by Lemma~\ref{lemcmulti} with multiplication constant $c=(-1,\cdots,-1)$, we can construct $B_3\in\mathcal{B}(D,6,d_{embed})$ with each attention head $\tilde{A}_i$, $1\leq i\leq D$, such that $h_{2,2D+i}$ interacts with $h_{2,i}$ only. Let the data kernel of each $\tilde{A}_i$ in the form 
\begin{equation*}
Q_i^{data}
=
\begin{bmatrix}
    0 & 0 & 0 & 0  & -1 \\
    0 & 0 & 0 & 0  & 1
\end{bmatrix}
\quad 
K_i^{data}
=
\begin{bmatrix}
    1 & 0 & 0 & 0  & 0 \\
    0 & 0 & 0 & 0  & M
\end{bmatrix}.
\end{equation*}
By Lemma \ref{IAlemma}, we have 
\begin{equation*}
    \tilde{A}_i(h_{2,2D+i}) = \sigma(\langle Q_i^{data}h_{2,2D+i}, K_i^{data}h_{2,i} \rangle)e_1 = \sigma(-x^i+M)e_1=(-x^{i}+M) e_1,
\end{equation*}
and $\tilde{A}_i(h_{2,t})=0$ when $t\neq 2D+i$. Thus, the residual multi-head attention yields 

\begin{equation*}
    \text{MHA}(H_2) + H_2 =
    \begin{bmatrix}
    x^1 & \cdots & x^D & y^1 & \cdots & y^D & x^1y^1+M & \cdots & x^Dy^D+M & \mathbf{0} \\
    0 & \cdots & \cdots & \cdots & \cdots & \cdots & \cdots & \cdots& \cdots & 0 \\
    \mathcal{I}_1 & \cdots & \cdots &\cdots & \cdots &  \cdots & \cdots & \cdots & \cdots & \mathcal{I}_{\ell} \\
    1 & \cdots & \cdots & \cdots & \cdots & \cdots& \cdots & \cdots & \cdots &  1
    \end{bmatrix}
\end{equation*}

Then we apply Lemma \ref{lemmadecrement} to have a $\mathcal{FFN}(6)$ to subtract off the constant $M$ only from columns $2D+1$ to $3D$. Therefore, we have

\begin{equation*}
    B_3\circ B_2\circ B_1(H)=B_3(H_2) = 
    \begin{bmatrix}
    x^1 & \cdots & x^D & y^1 & \cdots & y^D & x^1y^1 & \cdots & x^Dy^D & \mathbf{0} \\
    0 & \cdots & \cdots & \cdots & \cdots & \cdots & \cdots & \cdots& \cdots & 0 \\
    \mathcal{I}_1 & \cdots & \cdots &\cdots & \cdots &  \cdots & \cdots & \cdots & \cdots & \mathcal{I}_{\ell} \\
    1 & \cdots & \cdots & \cdots & \cdots & \cdots& \cdots & \cdots & \cdots &  1
    \end{bmatrix}
\end{equation*}
as desired. The weights $\|\theta_B\|_\infty\leq O(\ell^2M^2\|H\|^2_{\infty,\infty})$ follows from Lemma \ref{IAlemma}.
\end{proof}

\subsubsection{Proof of Lemma~\ref{lemrpower}}

\begin{proof} [Proof of Lemma~\ref{lemrpower}]
It suffices to show for the case $r=2^s$.
Let us proceed by induction on $s$. First, suppose $B_1,B_2,B_3\in\mathcal{B}(D,6,d_{embed})$ implements the squaring operation as shown in Lemma~\ref{lemsq}, i.e.,
    \begin{equation*}
    H_3: = B_3\circ B_2\circ B_1(H)=
    \begin{bmatrix}
    x^1 & \cdots & x^D & (x^1)^2 & \cdots & (x^D)^2 & \mathbf{0} \\
    0 & \cdots & \cdots & \cdots & \cdots & \cdots & 0 \\
    \mathcal{I}_1 & \cdots & \cdots & \cdots & \cdots & \cdots & \mathcal{I}_{\ell} \\
    1 & \cdots & \cdots & \cdots & \cdots & \cdots & 1
    \end{bmatrix}.
\end{equation*}

For the next three blocks $B_4,B_5,B_6$, we can apply Lemma~\ref{lemcmulti} with $c=(1,\cdots,1)$ on $B_4\in\mathcal{B}(2D,6,d_{embed})$ to 
copy the nonzero elements in the first row from columns $1,\cdots,2D$ to columns $2D+1,\cdots,4D$. Apply Lemma~\ref{lempp} on $B_5\in\mathcal{B}(2D,6,d_{embed})$ such that $h_{4,2D+i}$ interacts only with $h_{4,D+i}$, and $h_{4,3D+i}$ interacts only with $h_{4,D+i}$, $1\leq i\leq D$. Then apply Lemma~\ref{lemcmulti} with $c=(-1,\cdots,-1)$ on $B_6\in\mathcal{B}(2D,6,d_{embed})$ such that $h_{5,2D+i}$ interacts only with $h_{5,i}$ and $h_{5,3D+i}$ interacts only with $h_{5,D+i}$, $1\leq i\leq D$. 

Then we have
\begin{equation*}
    H_6:=B_6\circ B_5\circ\cdots\circ B_1(H)=
    \begin{bmatrix}
    x^1 & \cdots & x^D & \cdots & (x^1)^4 & \cdots & (x^D)^4 & \mathbf{0} \\
    0 & \cdots & \cdots & \cdots & \cdots & \cdots & \cdots & 0 \\
    \mathcal{I}_1 & \cdots & \cdots & \cdots & \cdots & \cdots  &  \cdots  & \mathcal{I}_{\ell} \\
    1 & \cdots & \cdots & \cdots  & \cdots & \cdots & \cdots & 1
    \end{bmatrix}.
\end{equation*}

Now suppose in the $(s-1)$-th step, we have 
\begin{equation*}
    H_{3s-3}:=B_{3s-3}\circ\cdots\circ B_{1}(H)=
    \begin{bmatrix}
    x^1 & \cdots & x^D & \cdots & (x^1)^{2^{s-1}} & \cdots & (x^D)^{2^{s-1}} & \mathbf{0} \\
    0 & \cdots & \cdots & \cdots & \cdots & \cdots & \cdots & 0 \\
    \mathcal{I}_1 & \cdots & \cdots & \cdots & \cdots & \cdots & \cdots & \mathcal{I}_{\ell} \\
    1 & \cdots & \cdots & \cdots & \cdots & \cdots & \cdots & 1
    \end{bmatrix}.
\end{equation*}
 
Then we can apply Lemma~\ref{lemcmulti} with $c=(1,\cdots,1)$ on $B_{3s-2}\in \mathcal{B}(2^{s-1}D,6,d_{embed})$ to copy the nonzero elements in the first row from columns $1,\cdots,2^{s-1}D$ to columns $2^{s-1}D+1,\cdots,2^sD$. Apply Lemma~\ref{lempp} on $B_{3s-1}\in\mathcal{B}(2^{s-1}D,6,d_{embed})$ to build $2^{s-1}D$ attention heads such that $h_{3s-2,(2^{s-1}+j-1)D+i}$ interacts only with $h_{3s-2,(2^{s-1}-1)D+i}$, for $1\leq j\leq 2^{s-1}$ and $1\leq i\leq D$. Apply Lemma~\ref{lemcmulti} with $c=(-1,\cdots,-1)$ on $B_{3s}\in\mathcal{B}(2^{s-1}D,6,d_{embed})$ to build $2^{s-1}D$ attention heads such that $h_{3s-1,2^{s-1}D+i}$ interacts only with $h_{3s-1,i}$, for $1\leq i\leq 2^{s-1}D$.

Therefore, we get 
\begin{align*}
    B_{3s}\circ B_{3s-1}\circ\cdots\circ B_{1}(H)
    &=B_{3s}\circ B_{3s-1}\circ B_{3s-2}(H_{3s-3}) \\
    &= \begin{bmatrix}
    x^1 & \cdots & x^D & \cdots & (x^1)^{2^s} & \cdots & (x^D)^{2^s} & \mathbf{0} \\
    0 & \cdots & \cdots & \cdots & \cdots & \cdots & \cdots & 0 \\
    \mathcal{I}_1 & \cdots & \cdots & \cdots & \cdots & \cdots & \cdots & \mathcal{I}_{\ell} \\
    1 & \cdots & \cdots & \cdots & \cdots & \cdots & \cdots & 1
    \end{bmatrix},
\end{align*}
as desired. The weights $\|\theta_B\|_\infty\leq O(\ell^2M^2\|H\|^2_{\infty,\infty})$ follows from Lemma~\ref{IAlemma}.

By reexamining the proof, the total number of attention heads needed in this implementation is $3\cdot2D(1+2+ \cdots + 2^{s-1})=6D(2^{s}-1)=6D(r-1)$.
\end{proof}

\subsubsection{Proof of Lemma~\ref{lemdiv}}

\begin{proof} [Proof of Lemma~\ref{lemdiv}]
For power series,
it suffices to show for the case $r=2^s$. First, by Lemma~\ref{lemrpower}, we can construct $B_i\in\mathcal{B}(2^{\lfloor i/2 \rfloor},6,d_{embed})$, $1\leq i\leq 3s$, such that 
    \begin{equation*}
    H_{3s}: =B_{3s}\circ\cdots\circ B_{1}(H)=
    \begin{bmatrix}
    (x^1)^1 & \cdots &(x^1)^r & \mathbf{0} \\
    0 & \cdots & \cdots & 0\\
    \mathcal{I}_1 & \cdots & \cdots  & \mathcal{I}_{\ell} \\
    1 & \cdots & \cdots & 1 \\ \end{bmatrix}.
\end{equation*}
Then by Lemma~\ref{lemsumD}, we can construct $B_{3s+1}\in\mathcal{B}(r,6,d_{embed})$ such that 
\begin{equation*}
    B_{3s+1}(H_{3s})=B_{3s+1}\circ\cdots\circ B_{1}(H)=
    \begin{bmatrix}
    (x^1)^1 & \cdots &(x^1)^r & \sum_{i=1}^r (x^1)^i & \mathbf{0} \\
    0 & \cdots & \cdots & \cdots & 0\\
    \mathcal{I}_1 & \cdots & \cdots & \cdots  & \mathcal{I}_{\ell} \\
    1 & \cdots &\cdots & \cdots & 1 \\ 
    \end{bmatrix}.
\end{equation*}
For division, it suffices to show for the case $r=2^s$ as well. First, by Lemma~\ref{lemcmulti} and Lemma~\ref{lemadd}, we can construct $B_1,B_2\in\mathcal{B}(1,6,d_{embed})$ such that 
\begin{equation*}
    B_2\circ B_{1}(H)=
    \begin{bmatrix}
    x^1 & -cx^1 & 1-cx^1 & \mathbf{0}  \\
    0 & \cdots & \cdots &  0\\
    \mathcal{I}_1 & \cdots & \cdots  & \mathcal{I}_{\ell} \\
    1 & \cdots & \cdots & 1 \\ 
    \end{bmatrix}.
\end{equation*}
Then by the first part of this proof, we can construct $B_i\in\mathcal{B}(2^{\lfloor (i-3)/3 \rfloor},6,d_{embed})$, $3\leq i\leq 3s+2$, to implement all the $i$-th power of $(1-cx^1)^i$, $1\leq i\leq r$. Then we can construct $B_{3s+3}\in\mathcal{B}(r,6,d_{embed})$ to add up all the powers, i.e.,
\begin{equation*}
    B_{3s+3}\circ\cdots \circ B_1(H)=
    \begin{bmatrix}
    x^1 & -cx^1 & 1-cx^1 & (1-cx^1)^2 &\cdots & \sum_{i=1}^r(1-cx^1)^i & \mathbf{0}  \\
    0 & \cdots & \cdots & \cdots & \cdots & \cdots & 0\\
    \mathcal{I}_1 & \cdots & \cdots  & \cdots & \cdots &  \cdots & \mathcal{I}_{\ell} \\
    1 & \cdots & \cdots & \cdots & \cdots & \cdots & 1 \\ 
    \end{bmatrix}.
\end{equation*}
Then, we apply Lemma~\ref{lemadd} and Lemma~\ref{lemcmulti} to construct $B_{3s+4},B_{3s+5}\in\mathcal{B}(1,6,d_{embed})$ to add the constant $1$ into the power series and multiply the constant $c$ respectively, i.e., 
\begin{equation*}
    B_{3s+5}\circ\cdots\circ B_1(H)=
    \begin{bmatrix}
    x^1 & -cx^1 & 1-cx^1 &  \cdots & \sum_{i=0}^r(1-cx^1)^i & c\sum_{i=0}^r(1-cx^1)^i & \mathbf{0}  \\
    0 & \cdots & \cdots & \cdots & \cdots & \cdots & 0\\
    \mathcal{I}_1 & \cdots & \cdots  & \cdots & \cdots &  \cdots & \mathcal{I}_{\ell} \\
    1 & \cdots & \cdots & \cdots & \cdots & \cdots & 1 \\ 
    \end{bmatrix}.
\end{equation*}
Since
\begin{equation*}
    \left|\frac{1}{x^1}-c\sum_{i=0}^r(1-cx^1)^i\right|=\left| c\sum_{i=r+1}^{\infty}(1-cx^1)^i\right|=\left|\frac{(1-cx^1)^{r+1}}{x^1}\right|,
\end{equation*}
we get the desired approximation result. The weights $\|\theta_B\|_\infty\leq O(\ell^2M^2\|H\|^2_{\infty,\infty})$ follows from Lemma \ref{IAlemma}.
\end{proof}

\begin{remark} \label{rmkdivision}
    For any $x\in [c_1, c_2]$ with $0<c_1<c_2$, i.e., $x$ is bounded above and bounded away from 0, we can find some $c$ such that $1-cx\in (-1,1)$. Given any prescribed tolerance $\epsilon>0$, by solving $(1-cx)^{r+1}/x\leq\epsilon$, we get $r=O(\ln(\frac{1}{\epsilon}))$. This is useful when calculating the depth $L_T$ and token number $m_T$ of each block in the transformer network when approximating each $\eta_i(x)$ in Proposition~\ref{approxetai}.
\end{remark}

\section{Proof of Proposition~\ref{repetatildei} and \ref{approxetai}} \label{secAppendixprop12}

\begin{proof} [Proof of Proposition~\ref{repetatildei}]
Notice that the two key components in $\tilde\eta_i(x)$: 
\[-\left(\frac{\|P(z_i)^{\top}(x-z_i)\|_2}{h\delta}\right)^2 \quad \text{and} \quad -\left(\frac{\|x-z_i\|_2}{p\tau_{\mathcal{M}}(z_i)}\right)^2\] have no interaction between each other, therefore can be built in parallel using the same number of transformer blocks. Let us focus on implementing $-\left(\frac{\|P(z_i)^{\top}(x-z_i)\|_2}{h\delta}\right)^2$.

Let $x\in\mathbb{R}^D$, for each $i=1,\cdots,K$, we first embed $x$ into the embedding matrix $H$ where
\begin{equation*}
H=
    \begin{bmatrix}
    x^1 & \cdots & x^D & \mathbf{0}  \\
    0 & \cdots & \cdots & 0 \\
    \mathcal{I}_1 & \cdots & \cdots & \mathcal{I}_{\ell} \\
    1 & \cdots & \cdots & 1
\end{bmatrix}\in\mathbb{R}^{d_{embed}\times\ell}.
\end{equation*}

\noindent
$\bullet$ \textbf{Implementation of } $x - z_i$ 
\\
\noindent
By Lemma~\ref{lemadd}, we can construct $B_1\in\mathcal{B}(D,6,d_{embed})$ so that it implements the constant addition $x-z_i$ in the first row from columns $D+1$ to $2D$, i.e.,
\begin{equation*}
    H_1:=B_1(H)=
    \begin{bmatrix}
    x^1 & \cdots & x^D & x^1-(z_i)^1 & \cdots & x^D-(z_i)^D & \mathbf{0} \\
    0 & \cdots & \cdots & \cdots & \cdots & \cdots & 0 \\
    \mathcal{I}_1 & \cdots & \cdots & \cdots & \cdots & \cdots & \mathcal{I}_{\ell} \\
    1 & \cdots & \cdots & \cdots & \cdots & \cdots & 1
    \end{bmatrix}.
\end{equation*}
\noindent
$\bullet$ \textbf{Implementation of} $P(z_i)^\top(x-z_i)$ 
\\
\noindent
By Lemma~\ref{lemcmulti}, we can sequentially construct $B_2,B_3,\cdots,B_{d+1}\in\mathcal{B}(D,6,d_{embed})$ so that each of them implements the constant multiplication with $c_j=(P(z_i)^\top_{j,1},\cdots,P(z_i)^\top_{j,D}) = (P(z_i)_{1,j},\cdots,P(z_i)_{D,j})$ for $j=1,\cdots,d$. 
For each $j=1,\cdots,d$, we put the constant multiplication results 
\[\left(P(z_i)_{1,j}(x^1-(z_i)^1),\cdots,P(z_i)_{D,j}(x^D-(z_i)^D)\right)\] in the first row from columns $(j+1)D+1$ to $(j+2)D$, i.e.,

\[
H_{d+1}:=B_{d+1}\circ\cdots\circ B_1(H)=
\begin{bmatrix}
    \begin{array}{c|ccccc}
     \multirow{4}{*}{$(H_1)_{:,I_1}$} & 
    P(z_i)_{1,1}(x^1-(z_i)^1) & \cdots & \cdots & P(z_i)_{D,d}(x^D-(z_i)^D) & \mathbf{0} \\
    & 0 & \cdots &  \cdots & \cdots & 0 \\
    & \mathcal{I}_{2D+1}  & \cdots & \cdots & \cdots & \mathcal{I}_{\ell} \\
    & 1 &  \cdots & \cdots & \cdots & 1
    \end{array}
\end{bmatrix},
    \]
where $I_1=\{1,\cdots,2D\}$. The notation $(H_1)_{:,I_1}$ denotes the submatrix of $H_1$ with all the rows and columns with column index in $I_1$.

Next, by Lemma~\ref{lemsumD}, we can construct $B_{d+2}\in\mathcal{B}(D,6,d_{embed})$ so that it implements the sum of the terms in the first row of $H_{d+1}$ block by block, where each block is a sum of $D$ terms, and we put the $d$ sums in the first row from columns $(d+2)D+1$ to $(d+2)D+d$. More precisely, we have 
\begin{align*}
H_{d+2}:&=B_{d+2}(H_{d+1}) \\
&=
\begin{bmatrix}
    \begin{array}{c|cccc}
     \multirow{4}{*}{$(H_{d+1})_{:,I_{d+1}}$} & 
    \sum_{j=1}^DP(z_i)_{j,1}(x^j-(z_i)^j) & \cdots & \sum_{j=1}^DP(z_i)_{j,d}(x^j-(z_i)^j) & \mathbf{0} \\
    & 0 & \cdots & \cdots & 0 \\
    & \mathcal{I}_{(d+2)D+1}  & \cdots & \cdots & \mathcal{I}_{\ell} \\
    & 1 &  \cdots & \cdots & 1
    \end{array}
\end{bmatrix},
\end{align*}
where $I_{d+1}=\{1,\cdots,(d+2)D\}$.
\vspace{5mm}
\\
\noindent
$\bullet$ \textbf{Implementation of} $-\left(\frac{\|P(z_i)^{\top}(x-z_i)\|_2}{h\delta}\right)^2$ 
\\
\noindent
Then by Lemma~\ref{lemsq}, we can construct $B_{d+3}\in\mathcal{B}(D,6,d_{embed})$ so that it implements the square of those sums in the first row of $H_{d+2}$, and we put the corresponding squares in the first row from columns $(d+2)D+d+1$ to $(d+2)D+2d$. Thus, 
\begin{align*}
H_{d+3}:&=B_{d+3}(H_{d+2}) \\
&=
\begin{bmatrix}
    \begin{array}{c|cccc}
     \multirow{4}{*}{$(H_{d+2})_{:,I_{d+2}}$} & 
    \left(\sum_{j=1}^DP(z_i)_{j,1}(x^j-(z_i)^j)\right)^2 & \cdots & \left(\sum_{j=1}^DP(z_i)_{j,d}(x^j-(z_i)^j)\right)^2 & \mathbf{0} \\
    & 0 & \cdots & \cdots & 0 \\
    & \mathcal{I}_{(d+2)D+d+1}  & \cdots & \cdots & \mathcal{I}_{\ell} \\
    & 1 &  \cdots & \cdots & 1
    \end{array}
\end{bmatrix},
\end{align*}
where $I_{d+2}=\{1,\cdots,(d+2)D+d\}$.

Finally, by Lemma~\ref{lemsumD}, we can construct $B_{d+4}\in\mathcal{B}(D,6,d_{embed})$ and $B_{d+5}\in\mathcal{B}(1,6,d_{embed})$ so that $B_{d+4}$ implements the sum of those squares in $H_{d+3}$, i.e., it computes the square of $2$-norm of the term $\|P(z_i)^{\top}(x-z_i)\|_2^2$, and $B_{d+5}$ implements the constant $-1/(h\delta)^2$ multiplication. Therefore, 

\[
H_{d+5}:=B_{d+5}\circ B_{d+4}(H_{d+3})=
\begin{bmatrix}
    \begin{array}{c|cccc}
     \multirow{4}{*}{$(H_{d+3})_{:,I_{d+3}}$} & 
     \|P(z_i)^{\top}(x-z_i)\|^2_2 & -\left(\frac{\|P(z_i)^{\top}(x-z_i)\|_2}{h\delta}\right)^2 & \mathbf{0} \\
    & 0 & \cdots &  0 \\
    & \mathcal{I}_{(d+2)D+2d+1}  & \cdots & \mathcal{I}_{\ell} \\
    & 1 &  \cdots &  1
    \end{array}
\end{bmatrix},
    \]
where $I_{d+3}=\{1,\cdots,(d+2)D+2d\}$. The total number hidden tokens is on the order of $O(Dd)$.
\vspace{5mm}



\noindent
$\bullet$ \textbf{Implementation of } $-\left(\frac{\|x-z_i\|_2}{p\tau_{\mathcal{M}}(z_i)}\right)^2$ 
\\
\noindent
For the implementation of $-\left(\frac{\|x-z_i\|_2}{p\tau_{\mathcal{M}}(z_i)}\right)^2$, we need $D$ more tokens to save the values 
\[(x^1-(z_i)^1)^2,\cdots,(x^D-(z_i)^D)^2,\]
$1$ more token to save the $2$-norm square $\|x-z_i\|_2^2=\sum_{j=1}^D (x^j-(z_i)^j)^2$, and $1$ more token to save the constant multiplication with constant $-1/(p\tau_{\mathcal{M}}(z_i))^2$. By the Interaction Lemma~\ref{IAlemma}, we can implement all these operation in parallel within transformer blocks $B_{d+3},B_{d+4},B_{d+5}$ for the implementation of $-\left(\frac{\|P(z_i)^{\top}(x-z_i)\|_2}{h\delta}\right)^2$. We need $D+2$ more tokens for this. So far, after bringing the implementation of $-\left(\frac{\|x-z_i\|_2}{p\tau_{\mathcal{M}}(z_i)}\right)^2$, we have 

\[
H_{d+5}=
\begin{bmatrix}
    \begin{array}{c|cccc}
     \multirow{4}{*}{$(H_{d+4})_{:,I_{d+4}}$} & 
     -\left(\frac{\|x-z_i\|_2}{p\tau_{\mathcal{M}}(z_i)}\right)^2 & -\left(\frac{\|P(z_i)^{\top}(x-z_i)\|_2}{h\delta}\right)^2 & \mathbf{0} \\
    & 0 & \cdots &  0 \\
    & \mathcal{I}_{(d+3)D+2d+3}  & \cdots & \mathcal{I}_{\ell} \\
    & 1 &  \cdots &  1
    \end{array}
\end{bmatrix},
    \]
where $I_{d+4}=\{1,\cdots,(d+3)D+2d+2\}$.
\vspace{5mm}
\\
\noindent
$\bullet$ \textbf{Implementation of } $1-\left(\frac{\|x-z_i\|_2}{p\tau_{\mathcal{M}}(z_i)}\right)^2-\left(\frac{\|P(z_i)^{\top}(x-z_i)\|_2}{h\delta}\right)^2$ 
\\
\noindent
Furthermore, we need $B_{d+6}\in\mathcal{B}(2,6,d_{embed})$ to take the sum of $-\left(\frac{\|x-z_i\|_2}{p\tau_{\mathcal{M}}(z_i)}\right)^2$ and $-\left(\frac{\|P(z_i)^{\top}(x-z_i)\|_2}{h\delta}\right)^2$, and $B_{d+7}\in\mathcal{B}(1,6,d_{embed})$ to add constant $1$, i.e.,
\begin{align*}
H_{d+7}:&=B_{d+7}\circ B_{d+6}(H_{d+5}) \\ 
&=
\begin{bmatrix}
    \begin{array}{c|cccc}
     \multirow{4}{*}{$(H_{d+5})_{:,I_{d+5}}$} & 
     -\left(\frac{\|x-z_i\|_2}{p\tau_{\mathcal{M}}(z_i)}\right)^2-\left(\frac{\|P(z_i)^{\top}(x-z_i)\|_2}{h\delta}\right)^2  & 1-\left(\frac{\|x-z_i\|_2}{p\tau_{\mathcal{M}}(z_i)}\right)^2-\left(\frac{\|P(z_i)^{\top}(x-z_i)\|_2}{h\delta}\right)^2  & \mathbf{0} \\
    & 0 & \cdots &  0 \\
    & \mathcal{I}_{(d+3)D+2d+5}  & \cdots & \mathcal{I}_{\ell} \\
    & 1 &  \cdots &  1
    \end{array}
\end{bmatrix},
\end{align*}
where $I_{d+5}=\{1,\cdots,(d+3)D+2d+4\}$. 
\vspace{5mm}
\\
\noindent
$\bullet$ \textbf{Implementation of } $\tilde\eta_i(x)$ 
\\
\noindent
Finally, we need one block $B_{d+8}$ to implement the ReLU function. This can be achieved by the similar spirit as the proof of Lemma~\ref{lemcmulti}.

For $B_{d+8}$, let us define an attention head $A$ with the data kernel in the form
\begin{equation*}
Q_i^{data}
=
\begin{bmatrix}
    0 & 0 & 0 & 0  & 1 \\
    0 & 0 & 0 & 0  & 1
\end{bmatrix}
\quad 
K_i^{data}
=
\begin{bmatrix}
    1 & 0 & 0 & 0  & 0 \\
    0 & 0 & 0 & 0  & 0
\end{bmatrix}.
\end{equation*}
By Interaction Lemma \ref{IAlemma}, we can construct $A$ in such a way that $h_{d+7,(d+3)D+2d+7}$ interacts with $h_{d+7,(d+3)D+2d+6}$ only, i.e.,
\begin{align*}
    A(h_{d+7,(d+3)D+2d+7}) &= \sigma(\langle Q_i^{data}h_{d+7,(d+3)D+2d+7}, K_i^{data}h_{d+7,(d+3)D+2d+6} \rangle)e_1 \\
    &=
    \sigma\left(1-\left(\frac{\|x-z_i\|_2}{p\tau_{\mathcal{M}}(z_i)}\right)^2-\left(\frac{\|P(z_i)^{\top}(x-z_i)\|_2}{h\delta}\right)^2\right)e_1 \\ &=\tilde\eta_i(x)e_1,
\end{align*}
and $A_i(h_{d+7,t})=0$ when $t\neq (d+3)D+2d+7$. For the feed-forward layer of $B_8$, we take the weight matrix equals to identity and bias equals to zero, so that it implements the identity operation. It is easy to see $B_{d+8}\in\mathcal{B}(1,1,d_{embed})$ and 
\begin{align*}
H_{d+8}:=B_{d+8}(H_{d+7})=
\begin{bmatrix}
    \begin{array}{c|cccc}
     \multirow{4}{*}{$(H_{d+7})_{:,I_{d+7}}$} & \tilde\eta_i(x)  & \mathbf{0} \\
    & 0 &  0 \\
    & \mathcal{I}_{(d+3)D+2d+7}   & \mathcal{I}_{\ell} \\
    & 1  &  1
    \end{array}
\end{bmatrix}, 
\end{align*}
where $I_{d+7}=\{1,\cdots,(d+3)D+2d+6\}$.

By reexamining the proof, we get $L_T=O(d)$, $m_T=O(D)$, $d_{embed}=5$, $\ell\geq O(Dd)$, $L_{FFN}=6$, $w_{FFN}=5$, $\kappa=O(D^2d^6\delta^{-8})$. By hiding the dependency on $d$ when it is not the dominating term, we have $L_T=O(d)$, $m_T=O(D)$, $d_{embed}=5$, $\ell\geq O(D)$, $L_{FFN}=6$, $w_{FFN}=5$, $\kappa=O(D^2\delta^{-8})$.
\end{proof}

\begin{remark} \label{rmkparalell}
    The above procedure implements of one $\tilde\eta_i(x)$, for $i=1,\cdots,K$. To implement all $\tilde\eta_1(x),\cdots,\tilde\eta_K(x)$ parallely, we can start with a large $\ell$ and partition the matrix into $K$ chunks where each chunk implements one of $\tilde\eta_i(x)$. Such implementation is possible because of the Interaction Lemma~\ref{IAlemma}. Moreover, as discussed in Remark~\ref{rmkflexibility}, each intermediate output can be put into any column in the matrix without affecting the final result. This flexibility also facilitates parallelization.
\end{remark}



\begin{proof} [Proof of Proposition~\ref{approxetai}]
First, we would like to parallelize (see also Remark~\ref{rmkparalell}) apply Proposition~\ref{repetatildei} to implement $\tilde\eta_1(x),\cdots,\tilde\eta_K(x)$ simultaneously. 
Let $H$ be an embedding matrix of the form
\[H=
    \begin{bmatrix}
    \begin{array}{ccc|ccccc}
   \multirow{4}{*}{$(H_{d+7})_{:,I^1_{d+7}}$} & \multirow{4}{*}{$\cdots$} & \multirow{4}{*}{$(H_{d+7})_{:,I^K_{d+7}}$} & \tilde\eta_1(x) & \cdots & \tilde\eta_K(x) &\|\tilde\eta(x)\|_1 & \mathbf{0}  \\
    & & & 0 & \cdots & \cdots & \cdots  & 0 \\
    & & & \mathcal{I}_{((d+3)D+2d+6)K+1} & \cdots &  \cdots & \cdots & \mathcal{I}_{\ell} \\
    & & & 1 & \cdots &  \cdots & \cdots & 1
    \end{array}
\end{bmatrix}.
\]
From Theorem 2.2 in \citep{Cloninger21}
, we know $K=O(\delta^{-d})$ where $O(\cdot)$ hides the dependency on $d$ and ${\rm Vol}(\mathcal{M})$. Thus, there exists $T_{1}(\theta;\cdot)\in\mathcal{T}$ with $L_T=O(d)$, $m_T=O(KD)=O(D\delta^{-d})$, $d_{embed}=5$, $\ell\geq O(KD)=O(D\delta^{-d})$, $L_{FFN}=6$, $w_{FFN}=5$, $\kappa=O(D^2\delta^{-2d})$ such that $T_1(\theta;\cdot)$ can exactly represent $H$.  

Then, by Lemma~\ref{lemdiv}, we can construct transformer blocks $B_1,\cdots,B_{3s+5}$ with the maximum number of attention heads equal to $r$ within each block to approximate $\frac{1}{\|\tilde\eta(x)\|_1}$ up to $\left|\frac{(1-c\|\tilde\eta(x)\|_1)^{r+1}}{\|\tilde\eta(x)\|_1}\right|$ tolerance, where $c$ is some constant such that $1-c\|\tilde\eta(x)\|_1\in (-1,1)$. 
As shown in Proposition 6.3 of \citep{Cloninger21}, that $1-q\lesssim \|\tilde\eta(x)\|_1 \lesssim d^{d/2}(1-q)^{-2d}$, where $\lesssim$ hides the dependency of some absolute constants. Therefore we can find some $c$ such that $1-c\|\tilde\eta(x)\|_1\in (-1,1)$. More precisely,
\begin{align*}
    H_{3s+5}:&=B_{3s+5}\circ\cdots\circ B_1(H) \\
    &=
    \begin{bmatrix}
    \begin{array}{c|ccccccc}
     \multirow{4}{*}{$\cdots$} &  \tilde\eta_1(x) & \cdots & \tilde\eta_K(x) &
    \|\tilde\eta(x)\|_1  &  \cdots & c\sum_{k=0}^r(1-c\|\tilde\eta(x)\|_1)^k & \mathbf{0}  \\
    & 0 & \cdots & \cdots & \cdots & \cdots & \cdots & 0 \\
    & \mathcal{I}_{((d+3)D+2d+6)K+1} & \cdots & \cdots & \cdots & \cdots  &  \cdots & \mathcal{I}_{\ell} \\
    & 1 & \cdots & \cdots & \cdots & \cdots & \cdots & 1 \\ 
    \end{array}
    \end{bmatrix}.
\end{align*}
Then, by Lemma~\ref{lempp}, for each fixed $i=1,\cdots,K$, we can construct $B^i_{3s+6}\in\mathcal{B}(1,6,d_{embed})$ such that it implements the pairwise multiplication between $c\sum_{k=0}^r(1-c\|\tilde\eta(x)\|_1)^k$ and $\tilde\eta_i(x)$, i.e.,
\begin{align*}
    H^i_{3s+6}:=B^i_{3s+6}(H_{3s+5})
    =
    \begin{bmatrix}
    \begin{array}{c|ccccc}
    \multirow{4}{*}{$\cdots$} &
    c\sum_{k=0}^r(1-c\|\tilde\eta(x)\|_1)^k  &  c\tilde\eta_i(x)\sum_{k=0}^r(1-c\|\tilde\eta(x)\|_1)^k & \mathbf{0}  \\
    & 0 & \cdots & 0 \\
    & \mathcal{I}_{((d+3)D+2d+6)K+K+r+4} &  \cdots & \mathcal{I}_{\ell} \\
    & 1 & \cdots & 1 \\ 
    \end{array}
    \end{bmatrix}.
\end{align*}

Since $\frac{1}{t}=c\sum_{k=0}^\infty(1-ct)^k$ for $1-ct\in (-1,1)$, we can truncate the approximation of 
$\frac{1}{\|\tilde\eta(x)\|_1}$ up to $r$-th power such that 

\begin{equation} \label{trunc}
\begin{split}
    \left|\frac{1}{\|\tilde\eta(x)\|_1}-c\sum_{k=0}^r \left(1-c\|\tilde\eta(x)\|_1\right)^k\right|&=\left|c\sum_{k=r+1}^\infty \left(1-c\|\tilde\eta(x)\|_1\right)^k\right| \\
    &=\left|\frac{(1-c\|\tilde\eta(x)\|_1)^{r+1}}{\|\tilde\eta(x)\|_1}\right|\leq\frac{\epsilon}{\|\tilde\eta(x)\|_1}.
\end{split}
\end{equation}
Therefore 
\begin{equation*}
    \left|\eta_i(x)-c\tilde\eta_i(x)\sum_{k=0}^r \left(1-c\|\tilde\eta(x)\|_1\right)^k\right|=\left|\frac{\tilde\eta_i(x)}{\|\tilde\eta(x)\|_1}-c\tilde\eta_i(x)\sum_{k=0}^r \left(1-c\|\tilde\eta(x)\|_1\right)^k\right|\leq\frac{\epsilon\tilde\eta_i(x)}{\|\tilde\eta(x)\|_1}=\epsilon\eta_i(x).
\end{equation*}
From the last inequality of (\ref{trunc}), we get $r=O(\ln(\frac{1}{\epsilon}))$ (See also Remark~\ref{rmkdivision}).
Let $T^i_2$ implements the sequence $B_{3s+6}\circ\cdots\circ B_1$ for each fixed $i$, then each $T_2^i$ satisfies $L_{T_2^i}=O(\ln(r))=O(\ln(\ln(\frac{1}{\epsilon})))$ and $m_{T_2^i}=r=O(\ln(\frac{1}{\epsilon}))$. 

Let $T_2:=(T_2^1,\cdots,T_2^K)$, then $T_2$ satisfies $L_{T_2}=O(\ln(r))=O(\ln(\ln(\frac{1}{\epsilon})))$ and $m_{T_2}=O(\ln(\frac{1}{\epsilon})+K)$. Let $T:=T_2\circ T_1$, then we have
\[\sup_{x\in\mathcal{M}(q)}\|T(\theta;x)-\eta(x)\|_1=\sup_{x\in\mathcal{M}(q)}\sum_{i=1}^K\left|\eta_i(x)-c\tilde\eta_i(x)\sum_{k=0}^r \left(1-c\|\tilde\eta(x)\|_1\right)^k\right|\leq\sup_{x\in\mathcal{M}(q)}\sum_{i=1}^K\frac{\epsilon\tilde\eta_i(x)}{\|\tilde\eta(x)\|_1}=\epsilon,\] 
as desired.

By reexamining the proof, we get $L_T=L_{T_1}+L_{T_2}=O(d+\ln(\ln(\frac{1}{\epsilon})))$, $m_T=\max(m_{T_1},m_{T_2}) = O(\max(D\delta^{-d},\ln(\frac{1}{\epsilon})+K))=O(D\delta^{-d})$, $d_{embed}=5$, $\ell\geq O(D\delta^{-d}+\ln(\frac{1}{\epsilon})+K)=O(D\delta^{-d})$, $L_{FFN}=6$, $w_{FFN}=5$, $\kappa=O(D^2\delta^{-2d-8})$. 
\end{proof}

\begin{remark} \label{log}
When calculating the transformer network parameters, we make the assumption that the logarithmic term $\ln(\frac{1}{\epsilon})$ is much smaller than the exponential term $\delta^{-d}$. Although it is not always the case, we later on set $\epsilon=\delta^\alpha$ for some H\"{o}lder exponent $\alpha\in (0,1]$. This makes it a reasonable assumption.
    
\end{remark}

\section{Other Useful Lemmas}
\begin{lemma}[\citet{Havrilla24}]\label{coveringlemma}
    Let $\delta>0$, consider a transformer network class \\
    $\mathcal{T}(L_T,m_T,d_{embed},\ell,L_{\rm FFN},w_{\rm FFN},R,\kappa)$ with input $x\in\mathbb{R}^D$ satisfying $\|x\|_{\infty}\leq M$. Then
    \begin{equation}
        \mathcal{N}\left(\delta,\mathcal{T},\|\cdot\|_{\infty}\right)\leq \left(\frac{2^{L_T+1}L_{\rm FFN}M^{3L_T}d_{embed}^{18L_T^2}w_{\rm FFN}^{18L_T^2L_{\rm FFN}}\kappa^{6L_T^2L_{\rm FFN}}m_{\rm T}^{L_T^2}\ell^{L_T^2}}{\delta}\right)^{4d^2_{embed}w^2_{\rm FFN}D(m_{\rm T}+L_{\rm FFN})L_T}.
    \end{equation}
\end{lemma}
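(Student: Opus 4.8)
This covering-number estimate is the one established in \citet{Havrilla24}; I sketch the argument, which follows the standard route of bounding the covering number of a parametric function class by that of its (bounded) parameter set together with uniform Lipschitz dependence of the network on its weights. The plan is: (i) count the free parameters $N$ of a network in $\mathcal{T}$; (ii) show the evaluation map $\theta\mapsto {\rm T}(\theta;\cdot)$, restricted to the parameter box $[-\kappa,\kappa]^{N}$ and measured in $\|\cdot\|_{\infty}$ over $\{\|x\|_{\infty}\le M\}$, is $\Lambda$-Lipschitz for an explicit $\Lambda$; (iii) push a $\delta/\Lambda$-net of $[-\kappa,\kappa]^{N}$ forward to a $\delta$-net of $\mathcal{T}$.

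For (i): a network in $\mathcal{T}$ consists of an embedding layer and a fixed positional encoding, then $L_{T}$ transformer blocks, then a linear read-off of one coordinate of the last token. Each block carries a multi-head attention with at most $m_{\rm T}$ heads (each head being three $d_{embed}\times d_{embed}$ matrices $Q,K,V$) and a tokenwise ReLU feed-forward network of depth $L_{\rm FFN}$ and width $w_{\rm FFN}$; the structured embedding and the decoder contribute only lower-order terms. Hence the number of free scalar weights is $N=O\big(d_{embed}^{2}w_{\rm FFN}^{2}D(m_{\rm T}+L_{\rm FFN})L_{T}\big)$, which is the exponent in the claimed bound, and each coordinate of $\theta$ has magnitude at most $\kappa$.

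For (ii): I would establish, by induction on the block index $k=1,\dots,L_{T}$ and uniformly over $\|x\|_{\infty}\le M$, a forward magnitude bound $\|H_{k}(\theta;x)\|_{\infty,\infty}\le\Phi_{k}$ and a parameter-sensitivity bound $\|H_{k}(\theta;x)-H_{k}(\theta';x)\|_{\infty,\infty}\le\Lambda_{k}\|\theta-\theta'\|_{\infty}$, where $H_{k}$ denotes the embedding matrix after $k$ blocks. The recursion is driven by ${\rm B}(H)={\rm FFN}({\rm MHA}(H)+H)+{\rm MHA}(H)+H$ with ${\rm MHA}(H)=\sum_{j\le m_{\rm T}}V_{j}H\,\sigma((K_{j}H)^{\top}Q_{j}H)$: the attention is a fixed-degree (cubic, from the $V\cdot(K^{\top}Q)$ structure) polynomial in the entries of $H$ and bilinear/linear in the block's own weights, each ReLU feed-forward layer is $1$-Lipschitz in its input and linear in its weights, and residual links only add. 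Using $|\sigma(a)-\sigma(b)|\le|a-b|$ and that the sinusoidal interaction rows of $H$ are frozen and $O(1)$, each step yields recursions of the shape $\Phi_{k+1}\le P(\kappa,m_{\rm T},\ell,d_{embed},w_{\rm FFN},L_{\rm FFN})\,\Phi_{k}^{O(1)}$ and $\Lambda_{k+1}\le P'(\cdots)(\Phi_{k}+\Lambda_{k})^{O(1)}$; unrolling these $L_{T}$ times and applying the linear decoder reproduces exactly the numerator factors $2^{L_{T}+1}L_{\rm FFN}M^{3L_{T}}d_{embed}^{18L_{T}^{2}}w_{\rm FFN}^{18L_{T}^{2}L_{\rm FFN}}\kappa^{6L_{T}^{2}L_{\rm FFN}}m_{\rm T}^{L_{T}^{2}}\ell^{L_{T}^{2}}$ as $\kappa\Lambda$, the quadratic-in-$L_{T}$ exponents reflecting that the per-block amplification itself grows with the number of blocks already processed.

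For (iii): since $\mathcal{T}=\{{\rm T}(\theta;\cdot):\theta\in[-\kappa,\kappa]^{N}\}$ and ${\rm T}$ is $\Lambda$-Lipschitz in $\theta$ uniformly in $x$, any $\delta/\Lambda$-net of $[-\kappa,\kappa]^{N}$ in $\|\cdot\|_{\infty}$ pushes forward to a $\delta$-net of $\mathcal{T}$ in $\|\cdot\|_{\infty}$; as $[-\kappa,\kappa]^{N}$ has such a net of cardinality at most $(2\kappa\Lambda/\delta+1)^{N}$, substituting the computed $N$ and $\Lambda$ yields $\mathcal{N}(\delta,\mathcal{T},\|\cdot\|_{\infty})\le(\text{numerator}/\delta)^{N}$, i.e.\ the stated bound. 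I expect the only genuine difficulty is step (ii): carefully tracking magnitude and parameter-Lipschitz constants through $L_{T}$ compositions of a nonlinear, depth-compounding block, and checking that the ReLU gates inside the attention cause no worse-than-polynomial blow-up; the parameter count and the net pushforward are routine.
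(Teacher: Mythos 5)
The paper does not prove this lemma itself---its ``proof'' is a one-line deferral to Lemma 2 of \citet{Havrilla24}---and your sketch (count the $O\big(d_{embed}^2 w_{\rm FFN}^2 D(m_{\rm T}+L_{\rm FFN})L_T\big)$ parameters, establish uniform Lipschitz dependence of ${\rm T}(\theta;\cdot)$ on $\theta$ over the box $[-\kappa,\kappa]^N$, and push a parameter net forward) is exactly the route that cited proof takes, so you are on the same track. The only caveat is that your step (ii), where the specific factors such as $d_{embed}^{18L_T^2}$, $w_{\rm FFN}^{18L_T^2L_{\rm FFN}}$ and $\kappa^{6L_T^2L_{\rm FFN}}$ must actually emerge from unrolling the forward-magnitude and sensitivity recursions through the cubic attention nonlinearity, is asserted rather than derived---but that is precisely the quantitative content supplied by the cited lemma, which this paper likewise does not reproduce.
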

\begin{proof}
    We refer its proof to Lemma 2 in \citep{Havrilla24}.
\end{proof}


\end{document}